\newtheorem{thm}{Theorem}
\newtheorem{proposition}[thm]{Proposition}
\newtheorem{dfn}{Definition}
\title{Residual Prior Diffusion: A Probabilistic Framework Integrating Coarse Latent Priors with Diffusion Models}
\author[$\ast$]{Takuro Kutsuna}
\affil[$\ast$]{\normalsize Toyota Central R\&D Labs., Inc.}
\date{}
\begin{document}
\maketitle
\begin{abstract}
    Diffusion models have become a central tool in deep generative modeling, but standard formulations rely on a single network and a single diffusion schedule to transform a simple prior, typically a standard normal distribution, into the target data distribution. As a result, the model must simultaneously represent the global structure of the distribution and its fine-scale local variations, which becomes difficult when these scales are strongly mismatched. This issue arises both in natural images, where coarse manifold-level structure and fine textures coexist, and in low-dimensional distributions with highly concentrated local structure.
    To address this issue, we propose Residual Prior Diffusion (RPD), a two-stage framework in which a coarse prior model first captures the large-scale structure of the data distribution, and a diffusion model is then trained to represent the residual between the prior and the target data distribution. We formulate RPD as an explicit probabilistic model with a tractable evidence lower bound, whose optimization reduces to the familiar objectives of noise prediction or velocity prediction.
    We further introduce auxiliary variables that leverage information from the prior model and theoretically analyze how they reduce the difficulty of the prediction problem in RPD.
    Experiments on synthetic datasets with fine-grained local structure show that standard diffusion models fail to capture local details, whereas RPD accurately captures fine-scale detail while preserving the large-scale structure of the distribution.
    On natural image generation tasks, RPD achieved generation quality that matched or exceeded that of representative diffusion-based baselines and it maintained strong performance even with a small number of inference steps.
\end{abstract}

\section{Introduction} \label{sec:introduction}

Diffusion models have established themselves as central tools of deep generative modeling and have achieved strong generative performance and stable training behavior across a wide range of domains, including images \citep{rombach2022high}, audio \citep{kongdiffwave}, and drug discovery \citep{xu2022geodiff}.
A standard diffusion model \citep{sohl2015deep,ho2020denoising,karras2022elucidating} constructs a reverse diffusion process that transforms a standard normal distribution into the target data distribution, and models this entire process using a single predictor---typically a deep neural network (DNN)---together with a single diffusion schedule.
Consequently, the model is required to capture both the global structure of the data distribution and the fine-scale local variations within a single framework.
However, when local structures exist at extremely small---or qualitatively different---scales relative to the global structure, this single-predictor, single-schedule design often faces challenges in capturing both scales simultaneously.

Such difficulties arise in many types of data distributions, including natural images. It is considered that natural images lie on a low-dimensional manifold embedded in a high-dimensional pixel space \citep{tenenbaum2000global,roweis2000nonlinear,bengio2013representation}. The manifold captures the large-scale structure of the image distribution, whereas textures, patterns, and edge details correspond to finer variations layered on top of it and require substantially more precise modeling. Designing a reverse diffusion process in which a single predictor and a single schedule must handle both the manifold-level structure and these fine-scale variations introduces fundamental representational challenges.
These issues are not limited to high-dimensional data such as images; they also arise in low-dimensional settings.

To address these challenges, we propose \emph{Residual Prior Diffusion} (RPD), a two-stage generative framework in which the coarse, large-scale structure of the data distribution is first captured by a prior model that is independent of the diffusion process. The diffusion model is then trained to represent the residual between the prior model and the true data distribution. By delegating the global variability of the distribution to the prior model, RPD allows the diffusion model to focus on learning the fine-scale local structure, thereby achieving an efficient separation between global organization and local detail.

As the prior model in RPD, we use a latent-variable Gaussian model, such as a variational autoencoder (VAE) \citep{kingmaauto} or one of its extensions \citep{higgins2017betavae,van2017neural}. While such latent-variable models often struggle to represent the fine details of a complex distribution, they can efficiently capture its coarse structure because they directly predict the mean of the distribution via latent variables. Moreover, these models typically allow one to infer the posterior distribution over the latent variable $z$ given an observed data $x_0$, a property that RPD exploits to improve the efficiency of training.

The diffusion model of the RPD framework is formulated as an explicit probabilistic model that incorporates the trained prior model as one of its components. We derive the evidence lower bound (ELBO) for this model and construct the associated training and inference procedures. Within this ELBO-based formulation, the training of the model reduces to predicting noise \citep{ho2020denoising} or velocity \citep{salimans2022progressive}, making it directly integrable with existing diffusion-model training frameworks.
We further introduce auxiliary variables that leverage information from the prior model during this prediction process. These variables are computed from the prior model, and we theoretically show that the more accurately the prior model reconstructs $x_0$, the more efficiently the prediction problem can be solved.

In our numerical experiments, we evaluated RPD by using both synthetic data and natural images.
For the synthetic experiments, we constructed two-dimensional datasets in which the local scale is significantly smaller than the global scale. Despite the simplicity of these distributions, we observed that conventional diffusion models and related approaches struggle to faithfully capture their local structures, whereas RPD accurately captures fine-scale details while preserving the global structure of the distribution.
We also evaluated RPD on image generation tasks using natural images. Across these tasks, RPD achieved a generation quality that matches or exceeds that of representative diffusion-based baselines. Notably, RPD was able to generate diverse and coherent images even when the number of inference steps is drastically reduced while avoiding the degradation typically observed in existing diffusion models.
Finally, we demonstrate that by employing a relatively lightweight prior model, the size of the DNN used to model the reverse diffusion process in RPD can be reduced to less than half the size of those used in standard diffusion models \citep{ho2020denoising} without observable degradation across standard distributional metrics.

The contributions of this paper are summarized as follows:
\begin{itemize}
  \item We propose RPD, a diffusion-based generative framework that transforms the distribution induced by a coarse prior model into the target data distribution (Section~\ref{sec:rpd_model}). The diffusion model is formulated as an explicit probabilistic model, and we derive its training and inference procedures from the ELBO under both noise-prediction modeling (Section~\ref{sec:rpd_denoising}) and velocity-prediction modeling (Section~\ref{sec:rpd_velocity}).
  \item We incorporate auxiliary variables in the prediction process and theoretically show that they improve its efficiency within the RPD framework (Section~\ref{sec:aux_var}).
  \item Through experiments on synthetic datasets and natural images, we demonstrate that RPD consistently achieves favorable performance and often surpasses diffusion-based baselines, highlighting the benefits of the proposed framework (Section~\ref{sec:experiment}).
\end{itemize}

\section{Related work} \label{sec:related_work}

\paragraph{Variational diffusion models}
Diffusion models generate data by learning the reverse of a forward diffusion process that gradually perturbs data into a standard normal distribution. Historically, they have been interpreted from several perspectives, including the variational view \citep{sohl2015deep,ho2020denoising}, score-based view \citep{songscore}, and flow-based view \citep{lipmanflow}, as summarized in \citep{lai2025principles}. RPD falls within the variational view.

Variational diffusion models \citep{sohl2015deep,ho2020denoising} are closely related to VAEs \citep{kingmaauto}.
A diffusion model can in fact be viewed as a multi-stage VAE whose encoder is fixed to a parameter-free diffusion process, enabling a more efficient training procedure via the conditioning trick \citep{luo2022understanding,lai2025principles}.
Despite this close relationship, only a limited number of methods directly integrate VAEs with diffusion models; DiffuseVAE \citep{pandey2022diffusevae}, discussed later, is one such example.
RPD can be viewed from the variational perspective as a framework that unifies a pretrained VAE with a diffusion model in a way that leverages their respective strengths.

\paragraph{Integrating diffusion models with other latent-variable models}
DiffuseVAE \citep{pandey2022diffusevae} is representative of the approach that integrates diffusion models with VAEs. It incorporates the VAE reconstruction into the diffusion process and is therefore methodologically related to RPD. However, our experiments showed that, because of its design constraints, the diffusion process in DiffuseVAE has difficulty capturing fine-scale details of the underlying distribution.
In particular, DiffuseVAE underperformed RPD on both synthetic examples and natural-image generation tasks. Moreover, DiffuseVAE does not take advantage of the auxiliary variables introduced in this work to improve prediction efficiency, and it is restricted to noise-prediction modeling \citep{ho2020denoising}, without providing a velocity-prediction formulation \citep{salimans2022progressive}.
Latent-variable priors other than VAEs---such as mixture-of-Gaussians models (MoGs)---have also been integrated with diffusion models, as shown by \citet{jia2024structured}. By comparison, RPD assumes a Gaussian latent-variable prior that includes both VAEs and MoGs as special cases; this makes it a more general framework for combining latent-variable models with diffusion processes.

\paragraph{Diffusion models in latent space}
The Latent Diffusion Model (LDM) \citep{rombach2022high} first trains an autoencoder, such as a VAE, and then performs diffusion in its latent space. While both RPD and LDM combine diffusion models with latent-variable models, their formulations differ fundamentally.
LDM applies a standard diffusion model in the latent space of an autoencoder, and the autoencoder and diffusion model are therefore not coupled through a single probabilistic objective during either training or inference.
In contrast, RPD does not merely run a diffusion model in a latent space: it integrates the latent-variable prior model into both its forward and reverse diffusion processes, resulting in a single ELBO that jointly couples the two models.
Moreover, LDM requires an autoencoder with sufficiently high reconstruction fidelity for its latent space to serve as an effective domain for diffusion. Training such a high-fidelity autoencoder is often technically challenging and computationally intensive, and hense it may become a bottleneck in the overall LDM pipeline. In contrast, although RPD also relies on a pretrained prior model, this prior is not required to capture the fine-scale structure of the target data distribution; it only needs to approximate its global structure, since the diffusion process of RPD is responsible for modeling fine-scale details. This reflects a fundamentally different design philosophy. Finally, the RPD framework could, in principle, be applied within the latent space of LDM, suggesting that the two approaches are not mutually exclusive and may in fact be complementary.

\paragraph{Residual modeling in diffusion-based super-resolution}
Diffusion-based super-resolution methods such as ResShift \citep{10.1109/TPAMI.2024.3461721} and Resfusion \citep{shi2024resfusion} model the residual between a high-resolution image $x_0$ and its corresponding low-resolution observation $y_0$.
These approaches therefore share with RPD the idea of ``modeling residuals through a diffusion process.’’ However, they require paired data $(x_0, y_0)$ during training and a low-resolution observation $y_0$ at inference time as a conditioning signal. As a result, they are not applicable to general-purpose generative modeling.
In contrast, RPD can automatically form such pairs by using the prior model’s reconstruction of $x_0$, thereby removing the need for $y_0$ entirely and enabling its use in generic generative modeling tasks.

\paragraph{Schr\"{o}dinger bridges}
Schr\"{o}dinger bridges provide a general framework for transforming probability distributions beyond the Gaussian setting, and their applications to generative modeling have been extensively explored \citep{de2021diffusion,shi2023diffusion,liu20232}.
They are tightly connected to entropy-regularized optimal transport \citep{leonard2013survey}, and several methods exist to extend classical optimal-transport algorithms such as the Sinkhorn iteration \citep{sinkhorn1964relationship} for use in generative modeling \citep{de2021diffusion,shi2023diffusion}.
The proposed RPD framework places the data distribution and prior model at the two endpoints of the transformation and learns the mapping between them as a diffusion process.
Because the posterior of the prior model provides an automatic coupling between data and latent variables, the resulting procedure is fundamentally different from approaches based on Schr\"{o}dinger bridges or optimal transport.

\section{Residual Prior Diffusion} \label{sec:rpd_model}
In this section, we describe the RPD framework in detail.
First, we introduce the mathematical notation used throughout the paper, followed by a description of the prior model assumed in RPD.
Then, we present the probabilistic formulation of the diffusion model.

\subsection{Notation}
Let $x_0 \in \mathbb{R}^n$ denote the random variable whose distribution is to be modeled, and let $p_{\mathrm{gt}}(x_0)$ represent its true underlying distribution. We assume that the training dataset $\mathcal{D}_{\mathrm{train}}$ consists of $N$ independent samples drawn from $p_{\mathrm{gt}}(x_0)$.
We use $\mathcal{N}(x \mid \mu, \Sigma)$ to denote the density of a multivariate normal distribution with mean vector $\mu$ and covariance matrix $\Sigma$. For a random variable $y$ sampled from $q(y)$, the expectation of a function $f(y)$ is written as $\mathbb{E}_{q(y)}[f(y)]$. The Kullback-Leibler (KL) divergence between distributions $p$ and $q$ is expressed as $D_{\mathrm{KL}}(p \,\|\, q)$.
We denote by $I_n$ the $n \times n$ identity matrix, and by $\mathbb{R}_{+}$ the set of positive real numbers. Finally, $\mathcal{U}(t \mid 1, T)$ denotes the uniform distribution supported on the discrete time indices $\{1, \ldots, T\}$.

\subsection{Prior model}
We define the prior model used in RPD as follows.
\begin{dfn}[Prior model]\label{def:prior_model}
  The prior model is specified by the joint distribution $\hat{p}(x_0, z)$ over the data variable $x_0$ and a latent variable $z \in \mathcal{Z}$:
  \begin{align}
    \hat{p}(x_0, z) = \hat{p}(x_0 \mid z)\, \hat{p}(z), \label{eq:prior_model}
  \end{align}
  where $\hat{p}(z)$ denotes an arbitrary prior distribution over $z$.
  The conditional distribution $\hat{p}(x_0 \mid z)$ is assumed to be a Gaussian of the form
  \begin{align}
    \hat{p}(x_0 \mid z)
    = \mathcal{N}\left(
    x_0 \,\middle|\, \hat{\mu}(z),\ \hat{\sigma}^2(z)\, I_n
    \right), \label{eq:prior_x0_z}
  \end{align}
  where $\hat{\mu} : \mathcal{Z} \to \mathbb{R}^n$ and $\hat{\sigma} : \mathcal{Z} \to \mathbb{R}$ denote functions mapping each latent variable $z$ to the mean and standard deviation of $x_0$, respectively.
  We further denote by $\hat{q}(z \mid x_0)$ a model that approximates the posterior distribution of $z$ given data $x_0$.
\end{dfn}

Typical examples of prior models that satisfy the above conditions include VAEs \citep{kingmaauto} and their extensions \citep{higgins2017betavae,van2017neural}.
In these models, the encoder corresponds to $\hat{q}(z \mid x_0)$, while the decoder corresponds to $\hat{p}(x_0 \mid z)$.
Other models, such as mixtures of Gaussians, can also serve as prior models.

Such prior models can flexibly capture the coarse structure of a data distribution, as the latent variable $z$ combined with the nonlinear mapping $\hat{\mu}(z)$ provides substantial flexibility in parameterizing the mean of the distribution.
However, because the conditional distribution in \eqref{eq:prior_x0_z} is Gaussian, these models generally struggle to capture fine-grained details \citep{dosovitskiy2016generating}.

\paragraph{Training of the prior model}
Given a training dataset $\mathcal{D}_\mathrm{train}$, the prior model is trained by maximizing the following ELBO with respect to $\hat{p}(x_0 \mid z)$ and $\hat{q}(z \mid x_0)$:
\begin{align}
  \mathbb{E}_{\hat{q}(z \mid x_0)}\!\left[\log \hat{p}(x_0 \mid z)\right]
  - D_\mathrm{KL}\!\left(\hat{q}(z \mid x_0)\, \|\, \hat{p}(z)\right).
  \label{eq:elbo_prior}
\end{align}
Such ELBO-based optimization, commonly referred to as variational inference, is widely used in models such as VAEs.

\subsection{Formulation of RPD}
We consider a setting in which a pretrained prior model satisfying the conditions described in the previous subsection is available.
RPD assumes discrete time steps $t = 0, \ldots, T$ (with $T$ denoting the final time step), and introduces a latent variable $x_t \in \mathbb{R}^n$ for each $t = 1, \ldots, T$.
Note that $x_0$ corresponds to the data variable to be modeled and is therefore not treated as a latent variable.
For a time interval $[t_s, t_e]$, we write $x_{t_s:t_e} = (x_{t_s}, \ldots, x_{t_e})$.
Under this setup, the probabilistic model defining RPD is given as follows.

\begin{dfn}[RPD model]\label{def:rpd_reverse}
  The probabilistic model of RPD is defined by
  \begin{align}
    p_\theta(x_{0:T}, z)
     & = \hat{p}(z)\,
    \hat{p}(x_T \mid z)
    \prod_{t=1}^T p_\theta(x_{t-1} \mid x_t, z),
    \label{eq:rev_1}         \\
    p_\theta(x_{t-1} \mid x_t, z)
     & = \mathcal{N}\!\left(
    x_{t-1}
    \,\middle|\,
    \mu_\theta(x_t, t, z),\,
    \sigma_t^2(z)\, I_n
    \right),
    \quad t = 1, \ldots, T,
    \label{eq:rev_2}
  \end{align}
  where $\hat{p}(z)$ is the distribution of the latent variable $z$ provided by the prior model, and
  $\hat{p}(x_T \mid z)$ is obtained by substituting $x_0$ with $x_T$ in~\eqref{eq:prior_x0_z}.
\end{dfn}

According to \eqref{eq:rev_1}, $x_T$ is assumed to follow the distribution predicted by the prior model and thus provides a coarse approximation to $x_0$.
In \eqref{eq:rev_2}, the function $\mu_\theta$ predicts the mean of $x_{t-1}$ based on the current latent state $x_t$, time step $t$, and latent variable $z$.
We parameterize $\mu_\theta$ using a neural network and denote its trainable parameters by~$\theta$.
The function $\sigma_t(z)$ specifies the standard deviation of $x_{t-1}$; it is not a trainable parameter, but instead admits a closed-form expression obtained from the ELBO maximization procedure of RPD (see Section~\ref{sec:rpd_denoising} for details).

\subsection{Posterior model}
To train RPD via variational inference, we introduce a posterior model for the latent variables $(x_{1:T}, z)$.
\begin{dfn}[Posterior model]\label{def:rpd_forward}
  The posterior model for RPD is defined as
  \begin{align}
    q(x_{1:T}, z \mid x_0)
     & = \hat{q}(z \mid x_0)
    \prod_{t=1}^T q(x_t \mid x_{t-1}, z),
    \label{eq:fwd_1}         \\
    q(x_t \mid x_{t-1}, z)
     & = \mathcal{N}\!\left(
    x_t
    \,\middle|\,
    \sqrt{\alpha_t}\, x_{t-1}
    + \left(1 - \sqrt{\alpha_t}\right)\hat{\mu}(z),\,
    \beta_t \hat{\sigma}^2(z)\, I_n
    \right),
    \quad t = 1, \ldots, T,
    \label{eq:fwd_2}
  \end{align}
  where $\beta_t \in \mathbb{R}_{+}$ $(t = 1, \ldots, T)$ denotes the noise-schedule parameters, and $\alpha_t$ is defined by $\alpha_t = 1 - \beta_t$.
\end{dfn}
As \eqref{eq:fwd_1} indicates, RPD directly employs the posterior model $\hat{q}(z \mid x_0)$ derived from the prior model.
The functions $\hat{\mu}$ and $\hat{\sigma}$ in \eqref{eq:fwd_2} are also derived from the prior model.

\subsection{Comparison with standard diffusion models}
The probabilistic model of RPD (Definition~\ref{def:rpd_reverse}) corresponds to the reverse diffusion process in standard discrete-time diffusion models \citep{ho2020denoising}.
In standard diffusion models, $x_T$ is sampled from a standard normal distribution, whereas in RPD, $x_T$ is sampled from the prior model.
Consequently, the reverse diffusion process in RPD starts from an $x_T$ drawn from the prior model and transforms it toward the data distribution through \eqref{eq:rev_2}.
If the prior model captures the global structure of the data distribution, the reverse process can focus more on correcting the discrepancy between the prior model and the true data distribution.
Another key distinction is that each reverse-diffusion step in RPD is conditioned on the latent variable $z$, so the entire reverse process adapts to $z$.
Furthermore, as shown in Section~\ref{sec:aux_var}, incorporating auxiliary variables that leverage information from the prior model can further improve the efficiency of the prediction performed by $\mu_\theta$.
Figure~\ref{fig:overview} presents an overview highlighting the differences between a standard diffusion model and RPD in an image generation task.
\begin{figure}[tbp]
  \centering
  \includegraphics[width=0.9\linewidth,clip]{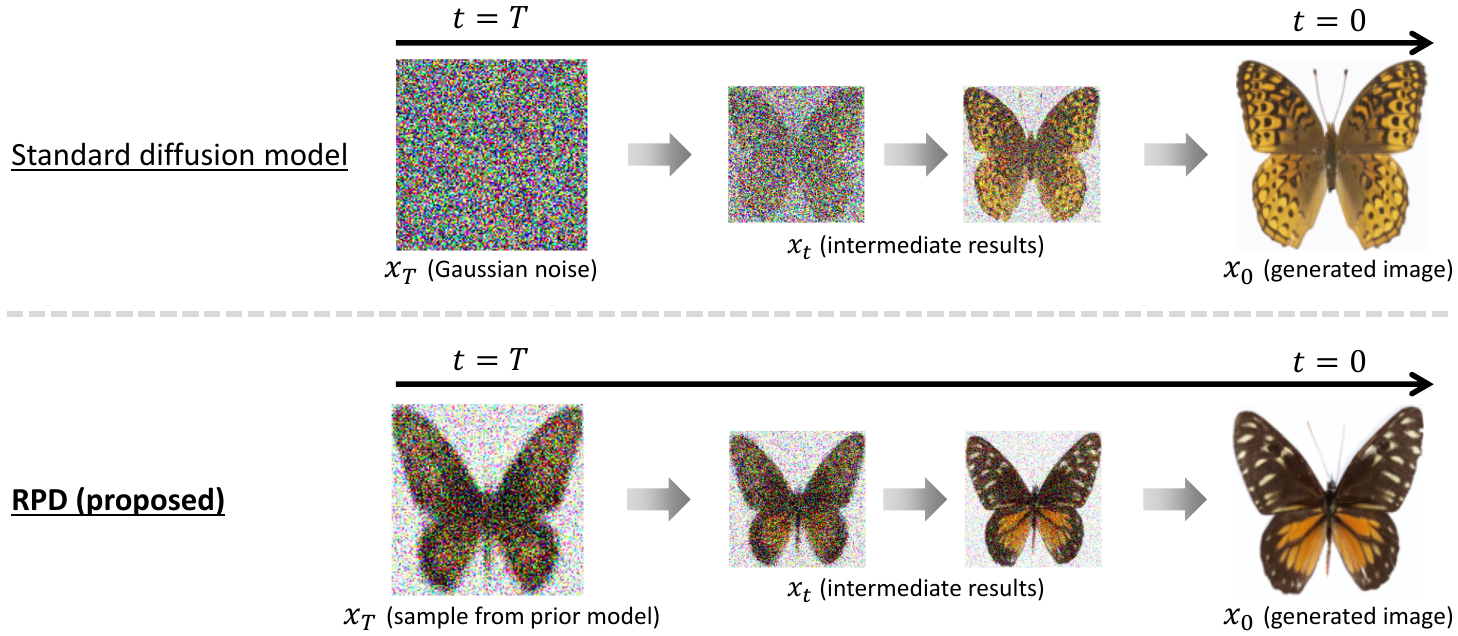}
  \caption{Overview comparing a standard diffusion model (top) with RPD (bottom).}
  \label{fig:overview}
\end{figure}

The posterior model of RPD (Definition~\ref{def:rpd_forward}) corresponds to the forward diffusion process in standard diffusion models.
In standard diffusion models, the forward process gradually transforms the input data $x_0$ into an $x_T$ that follows a standard normal distribution.
In contrast, RPD uses the posterior distribution $\hat{q}(z \mid x_0)$ provided by the prior model to infer the latent variable $z$ associated with $x_0$, and the entire diffusion process is conditioned on this~$z$.
As shown in Section~\ref{sec:rpd_denoising}, under the RPD diffusion process defined by \eqref{eq:fwd_2}, the variable $x_t$ evolves linearly (up to the noise term) from $x_0$ toward $\hat{\mu}(z)$ as $t$ increases, and at $t = T$ converges to a Gaussian distribution with mean $\hat{\mu}(z)$ and covariance $\hat{\sigma}^2(z) I_n$.
Notably, this terminal distribution coincides with the conditional prior model in \eqref{eq:prior_x0_z}.
Thus, the RPD forward process first infers the latent variable $z$ from $x_0$, and then transforms $x_0$ into an $x_T$ that follows the $z$-conditional prior model.
In Section~\ref{sec:rpd_denoising}, we justify this posterior model from the perspective of ELBO maximization.
Moreover, this forward process admits a simple normalization that makes the ``residual'' interpretation explicit.
Specifically, by centering $x_t$ at the prior mean $\hat{\mu}(z)$ and rescaling by the prior standard deviation $\hat{\sigma}(z)$, the resulting variable follows the standard forward diffusion form conditioned on $z$.
This provides a useful perspective in which RPD effectively operates in a prior-centered coordinate system; see Section~\ref{sec:residual_modeling} for details.

\subsection{Comparison with DiffuseVAE} \label{sec:compare_with_diffusevae}
DiffuseVAE \citep{pandey2022diffusevae} is methodologically close to RPD in that it leverages a pretrained VAE followed by diffusion-based generation. However, as demonstrated by the experimental results in Section~\ref{sec:experiment}, a substantial performance gap exists between the two approaches. In this subsection, we clarify the primary sources of this discrepancy.

The central difference lies in the definition of the forward diffusion process (i.e., the posterior model).
In DiffuseVAE, letting $\hat{x}_0$ denote the VAE reconstruction of $x_0$, the forward diffusion process is designed such that $x_t$ converges, as $t \to T$, to a Gaussian distribution whose mean is $x_0 + \hat{x}_0$ (see \citet{pandey2022diffusevae} for details).
Consequently, the terminal distribution of the forward process does not coincide with the distribution that the reverse process assumes as its initial state.
This stands in sharp contrast to RPD, where these two distributions are shown to coincide by construction.
To compensate for this mismatch, DiffuseVAE subtracts the contribution corresponding to $\hat{x}_0$ from the generated sample at the final inference step and treats the adjusted value as the final output.
From a variational perspective, this structural inconsistency is expected to degrade the ELBO, particularly the term $\mathcal{J}_2$ introduced in Section~\ref{sec:rpd_denoising}, thereby leading to inferior generative performance.

Another important distinction is that RPD introduces auxiliary variables that explicitly exploit information from the prior model to improve prediction efficiency, as discussed in Section~\ref{sec:aux_var}.
Furthermore, it admits both noise-prediction and velocity-prediction parameterizations within a unified probabilistic framework.
In contrast, DiffuseVAE neither incorporates such auxiliary variables nor provides a velocity-prediction formulation.

\section{ELBO-based noise-prediction modeling for RPD} \label{sec:rpd_denoising}

This section derives an ELBO for RPD and formulates a training objective under the noise-prediction parameterization.
While the algebraic manipulations largely follow the standard variational analysis of diffusion models \citep{luo2022understanding}, the contribution here is to establish an ELBO tailored to RPD and to show that its posterior model (forward process) is variationally well justified.
Building on this foundation, we obtain a concrete learning algorithm in which the interaction between the prior model and the diffusion process is made explicit and mathematically consistent.

\subsection{Derivation of the ELBO}

Given the probabilistic model in Definition~\ref{def:rpd_reverse} and the posterior model in Definition~\ref{def:rpd_forward}, the ELBO for RPD can be derived as follows:
\begin{align}
  \log p_\theta(x_0) & \geq \mathbb{E}_{q(x_{1:T}, z \mid x_0)}\left[\log \frac{p_\theta(x_{0:T}, z)}{q(x_{1:T}, z \mid x_0)}\right]                                    \notag                                                                                                             \\
                     & = \underbrace{- D_{\mathrm{KL}} \left(\hat{q}(z \mid x_0) \ || \ \hat{p}(z)\right)}_{\mathcal{J}_1} \underbrace{- \mathbb{E}_{\hat{q}(z \mid x_0)}\left[ D_{\mathrm{KL}} \left(q(x_T \mid x_0, z) \ || \ \hat{p}(x_T \mid z)\right) \right]}_{\mathcal{J}_2} \notag \\
                     & \qquad  + \underbrace{\mathbb{E}_{\hat{q}(z \mid x_0) q(x_1 \mid x_0, z)}\left[\log p_\theta(x_0 \mid x_1, z)\right] }_{\mathcal{J}_3}   \notag                                                                                                                     \\
                     & \qquad \underbrace{- \sum_{t=2}^T \mathbb{E}_{\hat{q}(z \mid x_0) q(x_{t} \mid x_0, z)} \left[ D_{\mathrm{KL}}\left(q(x_{t-1} \mid x_{t}, x_0, z) \ || \ p_\theta(x_{t-1} \mid x_t, z) \right)\right] }_{\mathcal{J}_4}. \label{eq:ELBO}
\end{align}
The detailed derivation is provided in \ref{apdx:elbo}.
Maximizing this ELBO with respect to $\theta$ yields the training objective.
In what follows, we examine each term $\mathcal{J}_i \ (i = 1,2,3,4)$ and discuss how each contributes to the optimization of the ELBO.

\subsection{Maximizing $\mathcal{J}_1$ and $\mathcal{J}_2$}
The terms $\mathcal{J}_1$ and $\mathcal{J}_2$ in the ELBO~\eqref{eq:ELBO} are independent of the learnable parameters $\theta$ and therefore do not directly affect the optimization of the diffusion model.
However, in terms of minimizing the variational gap---the discrepancy between the approximate posterior $q(x_{1:T}, z \mid x_0)$ and the true posterior $p_\theta(x_{1:T}, z \mid x_0)$---it is still desirable for these $\theta$-independent terms to be as large as possible.
The term $\mathcal{J}_1$ appears in the ELBO of the prior model~\eqref{eq:elbo_prior} and is expected to have already been sufficiently maximized during the prior-model training stage.
Furthermore, Proposition~\ref{prop:D_kl_J2} shows that, under mild assumptions, $\mathcal{J}_2$ is automatically maximized as a consequence of the structural design of the diffusion process in the RPD framework.

\subsection{Maximizing $\mathcal{J}_3$ and $\mathcal{J}_4$}
We begin with the maximization of $\mathcal{J}_4$.
Below, we first derive in detail the distributions $q(x_t \mid x_0, z)$, which appear inside the expectation in $\mathcal{J}_4$, and $q(x_{t-1} \mid x_t, x_0, z)$, which appear in the KL-divergence term.
We then derive an explicit expression for maximizing $\mathcal{J}_4$ under the noise-prediction parameterization.

\subsubsection{Deriving $q(x_t \mid x_0, z)$}
The variable $x_t$ appearing in the expectation term of $\mathcal{J}_4$ follows the distribution $q(x_t \mid x_0, z)$ and can be expressed using $\epsilon_0 \sim \mathcal{N}(\epsilon_0 \mid 0, I_n)$ as follows (see \ref{apdx:q_xt_x0_z} for the derivation):
\begin{align}
  x_t & = \sqrt{\bar{\alpha}_t} x_0 + \left(1 - \sqrt{\bar{\alpha}_t}\right) \hat{\mu}(z) + \sqrt{\left(1 - \bar{\alpha}_t\right)}\hat{\sigma}(z) \epsilon_0, \label{eq:xt_x0_ep0_f}
\end{align}
where $\bar{\alpha}_t := \prod_{i=1}^t \alpha_i$.
It follows directly from \eqref{eq:xt_x0_ep0_f} that $q(x_t \mid x_0, z)$ is given by
\begin{align}
  q(x_t \mid x_0, z) & = \mathcal{N}\left(x_t \ \middle| \ \sqrt{\bar{\alpha}_t} x_0  + \left(1 - \sqrt{\bar{\alpha}_t}\right) \hat{\mu}(z), \left(1 - \bar{\alpha}_t\right) \hat{\sigma}^2(z) I_n\right).  \label{eq:q_xt_x0_z_norm}
\end{align}
From this expression, we obtain the following proposition.
\begin{proposition} \label{prop:D_kl_J2}
  Suppose that $\bar{\alpha}_T \approx 0$. Then the following relation holds:
  \begin{align}
    \mathcal{J}_2 \approx 0. \notag
  \end{align}
\end{proposition}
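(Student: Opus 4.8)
The plan is to exploit the fact that $\mathcal{J}_2$ is minus the expectation of a KL divergence between two Gaussians whose covariances are scalar multiples of $I_n$, so it admits a closed-form evaluation that can be examined directly in the regime $\bar{\alpha}_T \to 0$. First I would instantiate $q(x_T \mid x_0, z)$ by setting $t = T$ in \eqref{eq:q_xt_x0_z_norm}, which gives a Gaussian with mean $\sqrt{\bar{\alpha}_T}\,x_0 + (1 - \sqrt{\bar{\alpha}_T})\hat{\mu}(z)$ and covariance $(1 - \bar{\alpha}_T)\hat{\sigma}^2(z)\,I_n$. By the definition of $\hat{p}(x_T \mid z)$, the comparison distribution is $\mathcal{N}(x_T \mid \hat{\mu}(z), \hat{\sigma}^2(z)\,I_n)$, which is exactly the $\bar{\alpha}_T = 0$ member of this same family. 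Thus the two Gaussians coincide in the limit, and it remains to quantify how fast the KL divergence decays.

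Next I would substitute into the standard closed form for the KL divergence between two $n$-dimensional Gaussians. Because both covariances are isotropic and share the common factor $\hat{\sigma}^2(z)$, the trace and log-determinant terms depend only on the scalar ratio $1 - \bar{\alpha}_T$, and the mean difference simplifies to $\sqrt{\bar{\alpha}_T}\,(\hat{\mu}(z) - x_0)$. Collecting terms, the KL divergence reduces to
\begin{align}
  D_{\mathrm{KL}}\!\left(q(x_T \mid x_0, z)\,\|\,\hat{p}(x_T \mid z)\right)
  = -\frac{n}{2}\bar{\alpha}_T
  - \frac{n}{2}\log(1 - \bar{\alpha}_T)
  + \frac{\bar{\alpha}_T}{2\hat{\sigma}^2(z)}\,\lVert \hat{\mu}(z) - x_0 \rVert^2. \notag
\end{align}

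Then I would take $\bar{\alpha}_T \to 0$. The first term vanishes linearly; the second vanishes because $\log(1 - \bar{\alpha}_T) = -\bar{\alpha}_T + O(\bar{\alpha}_T^2)$, so it actually cancels the first to leading order; and the third is the product of $\bar{\alpha}_T$ with a factor independent of $\bar{\alpha}_T$. Hence the KL divergence tends to $0$ pointwise in $z$. Since $\mathcal{J}_2$ negates the expectation of this quantity under $\hat{q}(z \mid x_0)$, we conclude $\mathcal{J}_2 \to 0$, i.e. $\mathcal{J}_2 \approx 0$ when $\bar{\alpha}_T \approx 0$.

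The step requiring the most care is passing from the pointwise limit of the KL to the limit of its expectation over $\hat{q}(z \mid x_0)$, because the third term carries the potentially unbounded factor $\lVert \hat{\mu}(z) - x_0 \rVert^2 / \hat{\sigma}^2(z)$. This is precisely where the \emph{mild assumptions} referenced in the text enter: provided the prior model has finite expected reconstruction error with variances bounded away from zero, so that $\mathbb{E}_{\hat{q}(z \mid x_0)}[\lVert \hat{\mu}(z) - x_0 \rVert^2 / \hat{\sigma}^2(z)]$ is finite, I can factor the scalar $\bar{\alpha}_T$ out of the expectation and apply it to a finite quantity, forcing the expected KL to vanish as well. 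The remaining two terms are deterministic in $z$ and pose no difficulty. This argument also makes transparent the intended interpretation: the more accurately the prior reconstructs $x_0$, the smaller the residual $\lVert \hat{\mu}(z) - x_0 \rVert^2$ and the faster $\mathcal{J}_2$ is driven to zero.
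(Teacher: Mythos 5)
Your proof is correct, and it is a more quantitative version of the paper's argument. The paper's own proof is two lines: it observes that \eqref{eq:q_xt_x0_z_norm} at $t=T$ gives $q(x_T \mid x_0, z) \approx \mathcal{N}(x_T \mid \hat{\mu}(z), \hat{\sigma}^2(z) I_n)$, which coincides with $\hat{p}(x_T \mid z)$, and then concludes that the KL divergence between two (approximately) identical Gaussians is (approximately) zero. You reach the same conclusion from the same starting point, but instead of substituting the limit distribution into the KL and declaring it zero, you evaluate the KL divergence exactly in closed form --- your expression
\begin{align}
  -\tfrac{n}{2}\bar{\alpha}_T - \tfrac{n}{2}\log(1 - \bar{\alpha}_T) + \tfrac{\bar{\alpha}_T}{2\hat{\sigma}^2(z)}\lVert \hat{\mu}(z) - x_0\rVert^2 \notag
\end{align}
is correct --- and then take $\bar{\alpha}_T \to 0$ term by term. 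This buys two things the paper's version does not give. First, it exhibits the exact rate at which $\mathcal{J}_2$ vanishes and shows that it is governed by the prior model's reconstruction error $\lVert \hat{\mu}(z) - x_0\rVert^2/\hat{\sigma}^2(z)$, which connects $\mathcal{J}_2$ to the same quantity that appears in Propositions~\ref{prop:eps_omega_diff} and~\ref{prop:v_omega_diff}. Second, it makes explicit the integrability condition (finite expected normalized reconstruction error under $\hat{q}(z \mid x_0)$) needed to pass from the pointwise limit to the limit of the expectation --- a point the paper silently absorbs into the phrase ``mild assumptions.'' The only caveat is stylistic: the paper treats $\bar{\alpha}_T \approx 0$ as a fixed approximate equality rather than a limit, so your limiting argument is a slight reframing, but the content is the same and strictly more informative.
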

\begin{proof}
  From \eqref{eq:q_xt_x0_z_norm}, we have
  \[
    q(x_T \mid x_0, z) \approx \mathcal{N}\!\left(x_T \mid \hat{\mu}(z),\, \hat{\sigma}^2(z) I_n\right).
  \]
  Therefore, by combining this with \eqref{eq:prior_x0_z}, we obtain
  \begin{align*}
    \mathcal{J}_2
     & = - \mathbb{E}_{\hat{q}(z \mid x_0)}\!\left[
      D_{\mathrm{KL}}\!\left(
      q(x_T \mid x_0, z)
      \ \middle\|\
      \hat{p}(x_T \mid z)
      \right)
    \right]                                               \\
     & \approx - \mathbb{E}_{\hat{q}(z \mid x_0)}\!\left[
      D_{\mathrm{KL}}\!\left(
      \mathcal{N}\!\left(x_T \mid \hat{\mu}(z),\, \hat{\sigma}^2(z) I_n\right)
      \ \middle\|\
      \mathcal{N}\!\left(x_T \mid \hat{\mu}(z),\, \hat{\sigma}^2(z) I_n\right)
      \right)
      \right] = 0.
  \end{align*}
\end{proof}
Since the KL divergence is non-negative, this proposition implies that, under a mild assumption, the $\mathcal{J}_2$ term in the ELBO is maximized by construction in RPD.

\subsubsection{Deriving $q(x_{t-1} \mid x_t, x_0, z)$}
The distribution $q(x_{t-1} \mid x_t, x_0, z)$ appearing in the KL-divergence term of $\mathcal{J}_4$ can be rewritten as follows for $t = 2, \ldots, T$.
\begin{align}
  q(x_{t-1} \mid x_{t}, x_0, z) & = \mathcal{N}\left(x_{t-1} \ \middle| \ \tilde{\mu}(x_t, x_0, z), \tilde{\beta}_t(z) I \right), \label{eq:q_xt-1_xt_x0_z}
\end{align}
where $\tilde{\mu}(x_t, x_0, z)$ and $\tilde{\beta}_t(z)$ are given by the following:
\begin{align}
  \tilde{\mu}(x_t, x_0, z) & = \frac{\sqrt{\alpha_t} (1-\bar{\alpha}_{t-1})}{1 - \bar{\alpha}_t} x_t + \frac{(1 - \alpha_t)\sqrt{\bar{\alpha}_{t-1}}}{1 - \bar{\alpha}_t} x_0 + \nu_t \hat{\mu}(z),      \label{eq:tilde_mu_x0} \\
  \tilde{\beta}_t(z)       & = \frac{(1 - \alpha_t)(1 - \bar{\alpha}_{t-1})}{1 - \bar{\alpha}_t} \hat{\sigma}^2(z), \label{eq:tilde_beta}
\end{align}
where
\begin{align}
  \nu_t & = \frac{\left(1- \sqrt{\alpha_t}\right) \left(1 - \sqrt{\bar{\alpha}_{t-1}}\right)}{1 +  \sqrt{\bar{\alpha}_t}}.
\end{align}
Derivations of these expressions can be found in \ref{apdx:q_xt-1_xt_x0_z}.
Using \eqref{eq:q_xt-1_xt_x0_z} and minimizing the KL-divergence term in $\mathcal{J}_4$, we set $\sigma_t^2(z)$ in \eqref{eq:rev_2} to $\sigma_t(z) = \sqrt{\tilde{\beta}_t(z)}$.

\subsubsection{Derivation of $\mathcal{J}_4$ under noise-prediction modeling}
From \eqref{eq:q_xt-1_xt_x0_z} and \eqref{eq:rev_2}, the KL-divergence term in $\mathcal{J}_4$ becomes the KL divergence between two Gaussian distributions, and it can therefore be rewritten as
\begin{align}
  \mathcal{J}_4 & = -\sum_{t=2}^T \mathbb{E}_{\hat{q}(z \mid x_0) q(x_{t} \mid x_0, z)}\left[D_{\text{KL}}\left(q(x_{t-1} \mid x_{t}, x_0, z) \ || \ p_\theta(x_{t-1}|x_t, z) \right) \right]                                  \\
                & = -\sum_{t=2}^T \mathbb{E}_{\hat{q}(z \mid x_0) q(x_{t} \mid x_0, z)}\left[\frac{1}{2 \tilde{\beta}_t(z)} \left\|\tilde{\mu}(x_t, x_0, z) - \mu_\theta(x_t, t, z)\right\|^2 \right] + C_1,  \label{eq:J4_mu}
\end{align}
where $C_1$ is a constant independent of~$\theta$.
Maximizing $\mathcal{J}_4$ therefore reduces to learning $\mu_\theta(x_t, t, z)$ so that it approximates $\tilde{\mu}(x_t, x_0, z)$.

Although it is possible to directly learn $\mu_\theta(x_t, t, z)$ from the above expression, we follow \citep{ho2020denoising} and reformulate the problem by rewriting the mean $\tilde{\mu}(x_t, x_0, z)$ and reparameterizing $\mu_\theta(x_t, t, z)$ accordingly, so that the objective becomes predicting the noise component contained in $\tilde{\mu}(x_t, x_0, z)$.
From \eqref{eq:xt_x0_ep0_f}, we obtain
\begin{align}
  x_0 = \frac{1}{\sqrt{\bar{\alpha}_t}}\left(x_t - \left(1 - \sqrt{\bar{\alpha}_t}\right) \hat{\mu}(z) - \sqrt{\left(1 - \bar{\alpha}_t\right) \hat{\sigma}^2(z)} \epsilon_0\right),
\end{align}
which can be substituted into \eqref{eq:tilde_mu_x0} to yield
\begin{align}
  \tilde{\mu}(x_t, x_0, z) & = \frac{1}{\sqrt{\alpha_t}} x_t - \frac{1 - \sqrt{\alpha_t}}{\sqrt{\alpha_t}} \hat{\mu}(z) - \frac{1 - \alpha_t}{\sqrt{\left(1 - \bar{\alpha}_t\right) \alpha_t}} \sqrt{\hat{\sigma}^2(z)} \epsilon_0. \label{eq:mu_x0_to_e}
\end{align}
Motivated by \eqref{eq:mu_x0_to_e}, we reparameterize $\mu_\theta(x_t, t, z)$ using $\epsilon_\theta(x_t, t, z)$ as
\begin{align}
  \mu_\theta(x_t,t, z) = \frac{1}{\sqrt{\alpha_t}} x_t - \frac{1 - \sqrt{\alpha_t}}{\sqrt{\alpha_t}} \hat{\mu}(z) - \frac{1 - \alpha_t}{\sqrt{\left(1 - \bar{\alpha}_t\right) \alpha_t}} \sqrt{\hat{\sigma}^2(z)} \epsilon_\theta(x_t, t, z). \label{eq:mu_theta}
\end{align}
Note that this reparameterization changes the learning target from $\mu_\theta(x_t, t, z)$ in \eqref{eq:rev_2} to $\epsilon_\theta(x_t, t, z)$.
Under this reparameterization, $\mathcal{J}_4$ can be rewritten as
\begin{align}
  \mathcal{J}_4 & = - \sum_{t=2}^T \mathbb{E}_{\hat{q}(z \mid x_0) \mathcal{N}(\epsilon_0 \mid 0, I)}\left[\frac{(1-\alpha_t)^2 \hat{\sigma}^2(z)}{2 \tilde{\beta}_t(z) \alpha_t (1-\bar{\alpha}_t)}  \left\|\epsilon_0 - \epsilon_\theta(x_t, t, z)\right\|^2 \right] + C_2,  \label{eq:J4_epsilon}
\end{align}
where $C_2$ is a constant independent of~$\theta$.
Maximizing $\mathcal{J}_4$ therefore reduces to learning $\epsilon_\theta(x_t, t, z)$ so that it predicts the noise component $\epsilon_0$ contained in $\tilde{\mu}(x_t, x_0, z)$.

\subsubsection{$\mathcal{J}_3$ under noise-prediction modeling}
The term $\mathcal{J}_3$ in the ELBO can be rewritten under noise-prediction modeling as follows (see \ref{apdx:J3_epsilon} for the derivation):
\begin{align}
  \mathcal{J}_3 & = - \mathbb{E}_{\hat{q}(z \mid x_0) \mathcal{N}(\epsilon_0 \mid 0, I)}\left[\frac{\left(1 - \alpha_1\right) \hat{\sigma}^2(z)}{2 \tilde{\beta}_1(z) \alpha_1} \left\|\epsilon_0 - \epsilon_\theta (x_1, 1, z) \right\|^2 \right] + C_3,  \label{eq:J3_epsilon}
\end{align}
where $C_3$ is a constant independent of~$\theta$, and we set $\sigma_1^2(z) = \tilde{\beta}_1(z) = \sigma_{\min}^2$ with a fixed small constant $\sigma_{\min} > 0$ (e.g., $\sigma_{\min}=10^{-3}$) to avoid a degenerate variance at $t=1$.

\subsection{Loss function of RPD under noise-prediction modeling}
From \eqref{eq:J4_epsilon} and \eqref{eq:J3_epsilon}, the loss function for maximizing the ELBO in \eqref{eq:ELBO} with respect to $\theta$ is given by
\begin{align}
  \ell^\epsilon(\theta; x_0) & = \mathbb{E}_{\mathcal{U}(t \mid 1,T) \hat{q}(z \mid x_0) \mathcal{N}(\epsilon_0 \mid 0, I)} \left[\frac{(1-\alpha_t)^2 \hat{\sigma}^2(z)}{2 \tilde{\beta}_t(z) \alpha_t (1-\bar{\alpha}_t)} \left\|\epsilon_0 - \epsilon_\theta(x_t, t, z)\right\|^2 \right], \label{eq:ell_proposed}
\end{align}
where $x_t$ is computed from $(t, x_0, z, \epsilon_0)$ according to \eqref{eq:xt_x0_ep0_f}.
Following \citet{ho2020denoising}, we simplify the above loss by setting all coefficients to one, yielding
\begin{align}
  \ell^\epsilon_\mathrm{simple}(\theta; x_0) & = \mathbb{E}_{\mathcal{U}(t \mid 1,T) \hat{q}(z \mid x_0) \mathcal{N}(\epsilon_0 \mid 0, I)} \left[ \left\|\epsilon_0 - \epsilon_\theta(x_t, t, z)\right\|^2 \right]. \label{eq:ell_simple}
\end{align}
A theoretical analysis of the effect of coefficient simplification in diffusion models is provided by \citet{kingma2023understanding}.

\subsection{Residual modeling induced by centering at the prior} \label{sec:residual_modeling}
A useful perspective on RPD is obtained by centering and normalizing the intermediate diffusion state with respect to the prior model.
From \eqref{eq:xt_x0_ep0_f}, let us define
\begin{align*}
  y_t := \frac{x_t - \hat{\mu}(z)}{\hat{\sigma}(z)}.
\end{align*}
Then,
\begin{align*}
  y_t = \sqrt{\bar{\alpha}_t}\, y_0 + \sqrt{1 - \bar{\alpha}_t}\, \epsilon_0,
  \qquad
  y_0 := \frac{x_0 - \hat{\mu}(z)}{\hat{\sigma}(z)},
\end{align*}
which has exactly the same form as the standard forward diffusion process, conditioned on $z$.
Therefore, under noise-prediction modeling, the network learns to predict noise in this prior-centered coordinate system.
As a result, the diffusion component primarily models the residual discrepancy between $x_0$ and the coarse structure captured by the prior model.
Importantly, while the forward process becomes the standard one in the $y_t$ coordinates,
the reverse process remains conditioned on $z$ through the prior-dependent parameters,
so this transformation does not reduce RPD to a conventional diffusion model.
This prior-centered normalization also motivates the auxiliary variables introduced in Section~\ref{sec:aux_var},
which further exploit this coordinate system to accelerate prediction.

\subsection{Algorithms} \label{sec:alg_epsilon}
On the basis of the above derivation, Algorithm~\ref{alg:rpd_training} summarizes the training procedure of RPD under noise-prediction modeling, while Algorithm~\ref{alg:rpd_inference} presents the corresponding inference (generation) procedure using the trained model.
Details regarding $\omega_t^\epsilon$, which appears in these algorithms, are provided in Section~\ref{sec:aux_var}.
\begin{figure}[htbp]
  \begin{algorithm}[H]
    \caption{RPD training procedure under noise prediction modeling}
    \label{alg:rpd_training}
    \begin{algorithmic}[1]
      \Require Training dataset~$\mathcal{D}_\mathrm{train}$
      \Require $\hat{q}(z \mid x_0)$, $\hat{\mu}(z)$, and $\hat{\sigma}(z)$ from the prior model
      \Require Hyperparameters $\left\{\alpha_t \ \middle| \ t=1,\ldots,T\right\}$
      \Ensure Trained model $\epsilon_\theta$
      \Repeat
      \State $x_0 \sim \mathcal{D}_\mathrm{train}$, $t \sim \mathcal{U}(t \mid 1,T)$, $\epsilon_0 \sim \mathcal{N}(\epsilon_0 \mid 0, I)$
      \State $z \sim \hat{q}(z \mid x_0)$
      \State $x_t = \sqrt{\bar{\alpha}_t} x_0 + \left(1 - \sqrt{\bar{\alpha}_t}\right) \hat{\mu}(z) + \sqrt{\left(1 - \bar{\alpha}_t\right) \hat{\sigma}^2(z)} \epsilon_0$ \Comment{Eq.~\eqref{eq:xt_x0_ep0_f}}
      \State $\omega_t^\epsilon = \frac{1}{\sqrt{1 - \bar{\alpha}_t}} \frac{x_t - \hat{\mu}(z)}{\hat{\sigma}(z)}$ \Comment{Eq.~\eqref{eq:omega_t_epsilon}}
      \State Take a gradient-descent step on:
      $\nabla_\theta \left\|\epsilon_0 - \epsilon_\theta(x_t, t, z, \omega_t^\epsilon)\right\|^2$
      \Until{convergence}
    \end{algorithmic}
  \end{algorithm}
  \begin{algorithm}[H]
    \caption{RPD inference procedure under noise prediction modeling}
    \label{alg:rpd_inference}
    \begin{algorithmic}[1]
      \Require $\hat{p}(z)$, $\hat{\mu}(z)$, and $\hat{\sigma}(z)$ from the prior model
      \Require Hyperparameters $\left\{\alpha_t \ \middle| \ t=1,\ldots,T\right\}$
      \Require Trained model $\epsilon_\theta$
      \Ensure Generated sample $x_0$
      \State $z \sim \hat{p}(z)$
      \State $x_T \sim \mathcal{N}\left(x_T \ \middle| \ \hat{\mu}(z), \hat{\sigma}^2(z) I \right)$
      \For{$t = T$ to $1$}
      \State $\omega_t^\epsilon = \frac{1}{\sqrt{1 - \bar{\alpha}_t}} \frac{x_t - \hat{\mu}(z)}{\hat{\sigma}(z)}$ \Comment{Eq.~\eqref{eq:omega_t_epsilon}}
      \State $\mu_\theta(x_t, t, z) = \frac{1}{\sqrt{\alpha_t}} x_t - \frac{1 - \sqrt{\alpha_t}}{\sqrt{\alpha_t}} \hat{\mu}(z) - \frac{\left(1 - \alpha_t\right) \sqrt{\hat{\sigma}^2(z)}}{\sqrt{\left(1 - \bar{\alpha}_t\right) \alpha_t}} \epsilon_\theta(x_t, t, z, \omega_t^\epsilon)$ \Comment{Eq.~\eqref{eq:mu_theta}}
      \State $\sigma_t(z) = \sqrt{\frac{(1 - \alpha_t)(1 - \bar{\alpha}_{t-1})}{1 - \bar{\alpha}_t}} \hat{\sigma}(z)$ if $t \geq 2$ else $\sigma_t(z) = \sigma_{\min}$  \Comment{Eq.~(\ref{eq:tilde_beta})}
      \State $x_{t-1} \sim \mathcal{N}\left(x_{t-1} \ \middle| \ \mu_\theta(x_t, t, z), \sigma_t^2(z) I \right)$
      \EndFor
    \end{algorithmic}
  \end{algorithm}
\end{figure}

\paragraph{Inference with a reduced number of steps}
In diffusion models, the number of inference steps does not need to match the number of steps used during training, a practice commonly adopted in libraries such as \citep{von-platen-etal-2022-diffusers}. The same applies to RPD.
In the RPD inference procedure (Algorithm~\ref{alg:rpd_inference}), we may use a reduced number of steps $S < T$ by selecting a subsequence of time indices,
\[
  1 = \tau_1 < \tau_2 < \cdots < \tau_S = T .
\]
An effective noise schedule on this subsequence is obtained by defining
\[
  \alpha'_s = \frac{\bar{\alpha}_{\tau_s}}{\bar{\alpha}_{\tau_{s-1}}}, \qquad s = 2,\dots,S,
\]
which ensures that the reduced schedule remains consistent with the cumulative noise level imposed during training.

\section{Velocity prediction modeling for RPD} \label{sec:rpd_velocity}
In the previous section, we derived a loss function for maximizing the ELBO of RPD under noise prediction modeling. An alternative approach is to parameterize $\mu_\theta$ based on the notion of velocity \citep{salimans2022progressive}.
It is discussed in \citep{albergo2023stochastic, zhouinductive} that velocity-based formulations are closely connected to flow models \citep{lipmanflow, liuflow}.
Therefore, deriving the training procedure for RPD under velocity prediction modeling may lead to a broader range of applications. In this section, we derive a loss function for maximizing the ELBO of RPD under velocity prediction modeling. Because RPD uses an explicit probabilistic model, it naturally admits equivalent formulations under various modeling frameworks.

\subsection{Introducing the velocity and its modification}
As preparation for velocity prediction modeling, let $w_t = \sqrt{\bar{\alpha}_t}$. Then, from \eqref{eq:xt_x0_ep0_f}, we obtain the following expression:
\begin{align}
  x_t & = w_t x_0 + \left(1 - w_t\right) \hat{\mu}(z) + \sqrt{\left(1 - w_t^2\right) \hat{\sigma}^2(z)} \epsilon_0. \label{eq:xt_wt}
\end{align}
Defining the derivative of $x_t$ with respect to $w_t$ as the velocity $v_t$ yields
\begin{align}
  v_t := \frac{d x_t}{d w_t} & = x_0 - \hat{\mu}(z) - \frac{w_t}{\sqrt{1 - w_t^2}} \hat{\sigma}(z) \epsilon_0                         \notag             \\
                             & = x_0 - \hat{\mu}(z) - \frac{\sqrt{\bar{\alpha}_t}}{\sqrt{1 - \bar{\alpha}_t}} \hat{\sigma}(z) \epsilon_0. \label{eq:v_t}
\end{align}
From \eqref{eq:v_t}, it follows that in \eqref{eq:xt_wt}, $x_t$ varies at a constant velocity $x_0 - \hat{\mu}(z)$ with respect to an infinitesimal change in $w_t$, ignoring the noise term.

Using \eqref{eq:xt_x0_ep0_f} to eliminate $\epsilon_0$ from \eqref{eq:v_t} and rearranging the expression with respect to $x_0$ yields
\begin{align}
  x_0 = \left(1-\bar{\alpha}_t\right) v_t + \sqrt{\bar{\alpha}_t} x_t + \left(1-\sqrt{\bar{\alpha}_t}\right) \hat{\mu}(z). \label{eq:x0_vpred}
\end{align}
Substituting this expression into \eqref{eq:tilde_mu_x0} and simplifying gives the following result.
\begin{align}
  \tilde{\mu}(x_t, x_0, z) & = \sqrt{\alpha_t} x_t + (1 - \sqrt{\alpha_t}) \hat{\mu}(z) + \left(1 - \alpha_t \right) \sqrt{\bar{\alpha}_{t-1}} v_t.  \label{eq:mu_tilde_flow}
\end{align}
Instead of directly predicting $v_t$ in \eqref{eq:mu_tilde_flow}, we introduce the following modified velocity $\hat{v}_t$:
\begin{align}
  \hat{v}_t := - \frac{\sqrt{1 - \bar{\alpha}_t}\, v_t}{\hat{\sigma}(z)}
  = \sqrt{\bar{\alpha}_t}\,\epsilon_0
  - \sqrt{1 - \bar{\alpha}_t}\,\frac{x_0 - \hat{\mu}(z)}{\hat{\sigma}(z)}.
  \label{eq:v_t_hat}
\end{align}
This corresponds to a normalized version of the standard velocity used in conventional velocity-prediction modeling \citep{salimans2022progressive, lin2024common}, $v_t^{\mathrm{std}} = \sqrt{\bar{\alpha}_t}\,\epsilon_0 - \sqrt{1 - \bar{\alpha}_t}\, x_0$, where the $x_0$ term is normalized by the reconstruction $\hat{\mu}(z)$ and the variance estimate $\hat{\sigma}(z)$ from the prior model. If an appropriate prior model is used, the local scale variations of $x_0$ are expected to be absorbed by this normalization with $\hat{\mu}(z)$ and $\hat{\sigma}(z)$, making regression with $\hat{v}_t$ the prediction target less sensitive to scale variation.

\subsection{Derivation of the loss function}
By replacing $v_t$ in \eqref{eq:mu_tilde_flow} with $\hat{v}_t$, we obtain
\begin{align}
  \tilde{\mu}(x_t, x_0, z) & = \sqrt{\alpha_t} x_t + (1 - \sqrt{\alpha_t}) \hat{\mu}(z) - \frac{\left(1 - \alpha_t \right) \sqrt{\bar{\alpha}_{t-1}}}{\sqrt{1 - \bar{\alpha}_{t}}} \hat{\sigma}(z) \hat{v}_t.  \label{eq:mu_tilde_flow_hat}
\end{align}
Following the structure of this expression, we reparameterize $\mu_\theta(x_t, t, z)$ using $v_\theta(x_t,t, z)$ as
\begin{align}
  \mu_\theta(x_t,t, z) = \sqrt{\alpha_t} x_t + (1 - \sqrt{\alpha_t}) \hat{\mu}(z) - \frac{\left(1 - \alpha_t \right) \sqrt{\bar{\alpha}_{t-1}}}{\sqrt{1 - \bar{\alpha}_{t}}} \hat{\sigma}(z) v_\theta(x_t, t, z). \label{eq:mu_theta_flow}
\end{align}
Under this parameterization, $\mathcal{J}_4$ in the ELBO can be rewritten as
\begin{align}
  \mathcal{J}_4 & = - \sum_{t=2}^T \mathbb{E}_{\hat{q}(z \mid x_0) \mathcal{N}(\epsilon_0 \mid 0, I)}\left[\frac{\left(1 - \alpha_t\right)^2 \bar{\alpha}_{t-1} \hat{\sigma}^2(z)}{2 \tilde{\beta}_t(z) \left(1 - \bar{\alpha}_{t}\right)} \left\|\hat{v}_t - v_\theta(x_t, t, z)\right\|^2 \right] + C_4,
\end{align}
where $C_4$ is a constant independent of $\theta$.

By applying a similar transformation to $\mathcal{J}_3$, we finally obtain the following loss function for maximizing the ELBO of RPD under velocity prediction modeling (the loss coefficient is fixed to 1 following \citep{ho2020denoising}):
\begin{align}
  \ell_\mathrm{simple}^v(\theta; x_0) & = \mathbb{E}_{\mathcal{U}(t \mid 1,T) \hat{q}(z \mid x_0) \mathcal{N}(\epsilon_0 \mid 0, I)} \left[ \left\|\hat{v}_t - v_\theta(x_t, t, z)\right\|^2 \right]. \label{eq:ell_simple_vpred}
\end{align}

\subsection{Algorithms}
The training algorithm for RPD under velocity prediction modeling is summarized in Algorithm~\ref{alg:rpd_training_vpred}, and the inference (generation) procedure using the trained model is given in Algorithm~\ref{alg:rpd_inference_vpred}. The auxiliary variable $\omega_t^v$ appearing in these algorithms is described in detail in Section~\ref{sec:aux_var}.
\begin{figure}[htbp]
  \begin{algorithm}[H]
    \caption{RPD training under velocity prediction modeling}
    \label{alg:rpd_training_vpred}
    \begin{algorithmic}[1]
      \Require Training dataset $\mathcal{D}_\mathrm{train}$
      \Require $\hat{q}(z \mid x_0)$, $\hat{\mu}(z)$, and $\hat{\sigma}(z)$ from the prior model
      \Require Hyperparameters $\left\{\alpha_t \ \middle| \ t=1,\ldots,T\right\}$
      \Ensure Trained model $v_\theta$
      \Repeat
      \State $x_0 \sim \mathcal{D}_\mathrm{train}$, $t \sim \mathcal{U}(t \mid 1,T)$, $\epsilon_0 \sim \mathcal{N}(\epsilon_0 \mid 0, I)$
      \State $z \sim \hat{q}(z \mid x_0)$
      \State $x_t = \sqrt{\bar{\alpha}_t} x_0 + \left(1 - \sqrt{\bar{\alpha}_t}\right) \hat{\mu}(z) + \sqrt{1 - \bar{\alpha}_t} \hat{\sigma}(z) \epsilon_0$ \Comment{Eq.~\eqref{eq:xt_x0_ep0_f}}
      \State $\omega_t^v = \frac{\sqrt{\bar{\alpha}_t}}{\sqrt{1 - \bar{\alpha}_t}} \frac{x_t - \hat{\mu}(z)}{\hat{\sigma}(z)}$ \Comment{Eq.~\eqref{eq:omega_t_v_z}}
      \State $\hat{v}_t = \sqrt{\bar{\alpha}_t} \epsilon_0 - \sqrt{1 - \bar{\alpha}_t} \frac{x_0 - \hat{\mu}(z)}{\hat{\sigma}(z)}$ \Comment{Eq.~\eqref{eq:v_t_hat}}
      \State Take a gradient-descent step on:
      $\nabla_\theta \left\|\hat{v}_t - v_\theta(x_t, t, z, \omega_t^v)\right\|^2$
      \Until{convergence}
    \end{algorithmic}
  \end{algorithm}
  \begin{algorithm}[H]
    \caption{RPD inference under velocity prediction modeling}
    \label{alg:rpd_inference_vpred}
    \begin{algorithmic}[1]
      \Require $\hat{p}(z)$, $\hat{\mu}(z)$, and $\hat{\sigma}(z)$ from the prior model
      \Require Hyperparameters $\left\{\alpha_t \ \middle| \ t=1,\ldots,T\right\}$
      \Require Trained model $v_\theta$
      \Ensure Generated sample $x_0$
      \State $z \sim \hat{p}(z)$
      \State $x_T \sim \mathcal{N}\left(x_T \ \middle| \ \hat{\mu}(z), \hat{\sigma}^2(z) I \right)$
      \For{$t = T$ to $1$}
      \State $\omega_t^v = \frac{\sqrt{\bar{\alpha}_t}}{\sqrt{1 - \bar{\alpha}_t}} \frac{x_t - \hat{\mu}(z)}{\hat{\sigma}(z)}$ \Comment{Eq.~\eqref{eq:omega_t_v_z}}
      \State $\mu_\theta(x_t,t, z) = \sqrt{\alpha_t} x_t + (1 - \sqrt{\alpha_t}) \hat{\mu}(z) - \frac{\left(1 - \alpha_t \right) \sqrt{\bar{\alpha}_{t-1}}}{\sqrt{1 - \bar{\alpha}_{t}}} \hat{\sigma}(z) v_\theta(x_t, t, z, \omega_t^v)$ \Comment{Eq.~\eqref{eq:mu_theta_flow}}
      \State $\sigma_t(z) = \sqrt{\frac{(1 - \alpha_t)(1 - \bar{\alpha}_{t-1})}{1 - \bar{\alpha}_t}} \hat{\sigma}(z)$ if $t \geq 2$ else $\sigma_t(z) = \sigma_{\min}$  \Comment{Eq.~\eqref{eq:tilde_beta}}
      \State $x_{t-1} \sim \mathcal{N}\left(x_{t-1} \ \middle| \ \mu_\theta(x_t, t, z), \sigma_t^2(z) I \right)$
      \EndFor
    \end{algorithmic}
  \end{algorithm}
\end{figure}

\section{Accelerating prediction in RPD using auxiliary variables} \label{sec:aux_var}
In noise-prediction modeling, the objective is to estimate $\epsilon_0$, whereas in velocity-prediction modeling, the objective is to estimate the modified velocity $\hat{v}_t$, both conditioned on $x_t$, $t$, and $z$. This section presents a method for improving the efficiency of these prediction tasks by introducing auxiliary variables---computed from $x_t$ and $z$ together with the prior model---as additional inputs to the predictor. We theoretically demonstrate that these auxiliary variables converge toward their respective prediction targets as the reconstruction accuracy of the prior model increases, thereby facilitating more efficient prediction.

\subsection{Auxiliary variables for noise-prediction modeling}
We introduce the following composite variable $\omega_t^\epsilon$ and incorporate it as an additional input for predicting $\epsilon_0$ in noise-prediction modeling.
\begin{align}
  \omega_t^\epsilon := \frac{1}{\sqrt{1 - \bar{\alpha}_t}} \frac{x_t - \hat{\mu}(z)}{\hat{\sigma}(z)}. \label{eq:omega_t_epsilon}
\end{align}
The following relationship holds between the composite variable $\omega_t^\epsilon$ and the prediction target $\epsilon_0$.
\begin{proposition} \label{prop:eps_omega_diff}
  Given $(x_0, z, t) \sim p_\mathrm{gt}(x_0)\, \hat{q}(z \mid x_0)\, \mathcal{U}(t \mid 1, T)$, the following holds:
  \begin{align}
    \mathbb{E}_{\mathcal{N}(\epsilon_0 \mid 0, I)}\left[\left\|\epsilon_0 - \omega_t^\epsilon\right\|^2\right] = \frac{\bar{\alpha}_t \left\|x_0 - \hat{\mu}(z)\right\|^2}{\left(1 - \bar{\alpha}_t\right) \hat{\sigma}^2(z)}. \label{eq:eps_omega_diff}
  \end{align}
\end{proposition}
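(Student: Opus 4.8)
The plan is to prove the identity by direct substitution, relying on the crucial observation that the random noise $\epsilon_0$ cancels exactly in the difference $\epsilon_0 - \omega_t^\epsilon$, leaving a quantity that is deterministic given $(x_0, z, t)$.

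First I would substitute the explicit expression for $x_t$ from \eqref{eq:xt_x0_ep0_f} into the definition \eqref{eq:omega_t_epsilon} of $\omega_t^\epsilon$. Computing the prior-centered state and recombining the terms proportional to $\hat{\mu}(z)$ gives
\begin{align*}
  x_t - \hat{\mu}(z) = \sqrt{\bar{\alpha}_t}\,\bigl(x_0 - \hat{\mu}(z)\bigr) + \sqrt{1 - \bar{\alpha}_t}\,\hat{\sigma}(z)\,\epsilon_0 .
\end{align*}
Dividing by $\sqrt{1 - \bar{\alpha}_t}\,\hat{\sigma}(z)$ as prescribed by \eqref{eq:omega_t_epsilon} then yields
\begin{align*}
  \omega_t^\epsilon = \frac{\sqrt{\bar{\alpha}_t}}{\sqrt{1 - \bar{\alpha}_t}}\,\frac{x_0 - \hat{\mu}(z)}{\hat{\sigma}(z)} + \epsilon_0 .
\end{align*}

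The key step is to notice that $\epsilon_0$ enters additively with coefficient exactly one, so it cancels in the difference:
\begin{align*}
  \epsilon_0 - \omega_t^\epsilon = -\,\frac{\sqrt{\bar{\alpha}_t}}{\sqrt{1 - \bar{\alpha}_t}}\,\frac{x_0 - \hat{\mu}(z)}{\hat{\sigma}(z)} .
\end{align*}
This residual is fully determined by $(x_0, z, t)$ and carries no dependence on $\epsilon_0$. Taking squared norms gives $\left\|\epsilon_0 - \omega_t^\epsilon\right\|^2 = \frac{\bar{\alpha}_t \left\|x_0 - \hat{\mu}(z)\right\|^2}{(1 - \bar{\alpha}_t)\,\hat{\sigma}^2(z)}$, and since this expression contains no $\epsilon_0$, the expectation over $\mathcal{N}(\epsilon_0 \mid 0, I)$ acts on a constant and leaves it unchanged, which is precisely the claimed identity \eqref{eq:eps_omega_diff}.

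I do not expect any genuine obstacle in this argument: once the cancellation is observed, the remainder is a one-line simplification. The only point requiring minor care is the algebraic recombination in $x_t - \hat{\mu}(z)$, namely verifying that the coefficient $(1 - \sqrt{\bar{\alpha}_t}) - 1 = -\sqrt{\bar{\alpha}_t}$ correctly pairs with the $\sqrt{\bar{\alpha}_t}\,x_0$ term to produce $\sqrt{\bar{\alpha}_t}\,(x_0 - \hat{\mu}(z))$. This identity also makes transparent why $\omega_t^\epsilon$ serves as a useful auxiliary input: the right-hand side of \eqref{eq:eps_omega_diff} vanishes precisely as the prior reconstruction $\hat{\mu}(z)$ approaches $x_0$, so a more accurate prior drives $\omega_t^\epsilon$ toward the prediction target $\epsilon_0$.
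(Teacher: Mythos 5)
Your argument is correct and is essentially identical to the paper's own proof: both substitute \eqref{eq:xt_x0_ep0_f} into \eqref{eq:omega_t_epsilon}, observe that $\epsilon_0$ cancels exactly in the difference $\epsilon_0 - \omega_t^\epsilon = -\sqrt{\bar{\alpha}_t}\,(x_0 - \hat{\mu}(z))/\bigl(\sqrt{1-\bar{\alpha}_t}\,\hat{\sigma}(z)\bigr)$, and conclude that the expectation acts on a constant. No gaps; the algebraic recombination you flag as the only point of care is handled correctly.
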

\begin{proof}
  The claim in \eqref{eq:eps_omega_diff} follows immediately from the following calculation:
  \begin{align*}
    \epsilon_0 - \omega_t^\epsilon & = \epsilon_0 - \frac{x_t - \hat{\mu}(z)}{\sqrt{1 - \bar{\alpha}_t} \hat{\sigma}(z)}                                                                                                                                                 \\
                                   & = \epsilon_0 - \frac{\sqrt{\bar{\alpha}_t} x_0 + \left(1 - \sqrt{\bar{\alpha}_t}\right) \hat{\mu}(z) + \sqrt{\left(1 - \bar{\alpha}_t\right)} \hat{\sigma}(z) \epsilon_0 - \hat{\mu}(z)}{\sqrt{1 - \bar{\alpha}_t} \hat{\sigma}(z)} \\
                                   & = - \frac{\sqrt{\bar{\alpha}_t}\left(x_0 - \hat{\mu}(z)\right)}{\sqrt{1 - \bar{\alpha}_t} \hat{\sigma}(z)}.
  \end{align*}
\end{proof}
Because $\hat{\mu}(z)$ (with $z \sim \hat{q}(z \mid x_0)$) can be interpreted as a reconstruction of $x_0$ produced by the prior model, Proposition~\ref{prop:eps_omega_diff} indicates that $\omega_t^\epsilon$ converges toward $\epsilon_0$ as the reconstruction accuracy improves. This suggests that exploiting the reconstruction capability of the prior model can enhance the efficiency of noise prediction in RPD when $\omega_t^\epsilon$ is used as an additional input.
Note that $\hat{\mu}(z)$ and $\hat{\sigma}^2(z)$ are determined solely by maximizing the ELBO of the prior model \eqref{eq:elbo_prior} and are not adjusted to bring $\omega_t^\epsilon$ closer to $\epsilon_0$.

While, in principle, the expressive power of neural networks allows $\epsilon_\theta$ to approximate an equivalent function without explicitly providing $\omega_t^\epsilon$, the experimental results in Section~\ref{sec:experiment} show that explicitly including $\omega_t^\epsilon$ as an input to $\epsilon_\theta$ substantially accelerates training, even when $\epsilon_\theta$ is implemented as a neural network.

\subsection{Auxiliary variables for velocity-prediction modeling}
For the prediction task in RPD under velocity-prediction modeling, we introduce the following composite variable $\omega_t^v$ as an additional input to the predictor.
\begin{align}
  \omega_t^v := \frac{\sqrt{\bar{\alpha}_t}}{\sqrt{1 - \bar{\alpha}_t}} \frac{x_t - \hat{\mu}(z)}{\hat{\sigma}(z)}. \label{eq:omega_t_v_z}
\end{align}
The following relationship holds between the composite variable $\omega_t^v$ and the prediction target $\hat{v}_t$.
\begin{proposition} \label{prop:v_omega_diff}
  Given $(x_0, z, t) \sim p_\mathrm{gt}(x_0)\, \hat{q}(z \mid x_0)\, \mathcal{U}(t \mid 1, T)$, the following holds:
  \begin{align}
    \mathbb{E}_{\mathcal{N}(\epsilon_0 \mid 0, I)}\left[\left\|\hat{v}_t - \omega_t^v\right\|^2\right] & = \frac{\left\|x_0 - \hat{\mu}(z)\right\|^2}{\left(1 - \bar{\alpha}_t\right) \hat{\sigma}^2(z)}. \label{eq:v_omega_diff}
  \end{align}
\end{proposition}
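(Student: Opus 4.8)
The plan is to follow the same strategy used in the proof of Proposition~\ref{prop:eps_omega_diff}: rather than evaluating the expectation over $\epsilon_0$ directly, I would first show that the difference $\hat{v}_t - \omega_t^v$ is in fact \emph{deterministic}, i.e.\ independent of $\epsilon_0$. Once that is established, the expectation over $\mathcal{N}(\epsilon_0 \mid 0, I)$ is trivial and the squared norm immediately yields the stated right-hand side.

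First I would substitute the expression for $x_t$ from \eqref{eq:xt_x0_ep0_f} into the definition of $\omega_t^v$ in \eqref{eq:omega_t_v_z}. Writing $x_t - \hat{\mu}(z) = \sqrt{\bar{\alpha}_t}\,(x_0 - \hat{\mu}(z)) + \sqrt{1 - \bar{\alpha}_t}\,\hat{\sigma}(z)\,\epsilon_0$ and dividing by $\hat{\sigma}(z)$ gives
\begin{align*}
  \omega_t^v = \frac{\bar{\alpha}_t}{\sqrt{1 - \bar{\alpha}_t}}\,\frac{x_0 - \hat{\mu}(z)}{\hat{\sigma}(z)} + \sqrt{\bar{\alpha}_t}\,\epsilon_0.
\end{align*}
Next I would subtract this from $\hat{v}_t$ as given by \eqref{eq:v_t_hat}. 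The key observation is that the two $\sqrt{\bar{\alpha}_t}\,\epsilon_0$ contributions cancel exactly, leaving only terms proportional to $(x_0 - \hat{\mu}(z))/\hat{\sigma}(z)$, which confirms that the difference carries no stochastic component.

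The one algebraic step that deserves care is collecting those remaining terms: the coefficient of $(x_0 - \hat{\mu}(z))/\hat{\sigma}(z)$ is $-\sqrt{1 - \bar{\alpha}_t} - \bar{\alpha}_t/\sqrt{1 - \bar{\alpha}_t}$, which I would simplify using the identity $(1 - \bar{\alpha}_t) + \bar{\alpha}_t = 1$ to obtain $-1/\sqrt{1 - \bar{\alpha}_t}$. Hence $\hat{v}_t - \omega_t^v = -(x_0 - \hat{\mu}(z))/(\sqrt{1 - \bar{\alpha}_t}\,\hat{\sigma}(z))$, and taking the squared norm produces the claimed expression after the trivial expectation. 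I do not anticipate a genuine obstacle here; the result is a direct computation whose only subtlety is tracking the sign and verifying the noise-term cancellation. It is worth flagging for the reader that, in contrast to Proposition~\ref{prop:eps_omega_diff}, the numerator here lacks the factor $\bar{\alpha}_t$, yet both bounds vanish as the prior reconstruction error $\|x_0 - \hat{\mu}(z)\|$ shrinks, which is precisely the sense in which $\omega_t^v$ serves as an effective proxy for the prediction target $\hat{v}_t$.
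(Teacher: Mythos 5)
Your proposal is correct and follows essentially the same route as the paper's own proof: substitute \eqref{eq:xt_x0_ep0_f} into the definition of $\omega_t^v$, observe that the $\sqrt{\bar{\alpha}_t}\,\epsilon_0$ terms cancel so that $\hat{v}_t - \omega_t^v = -\tfrac{1}{\sqrt{1-\bar{\alpha}_t}}\tfrac{x_0 - \hat{\mu}(z)}{\hat{\sigma}(z)}$ is deterministic, and take the squared norm. The algebra, including the simplification $(1-\bar{\alpha}_t)+\bar{\alpha}_t = 1$, checks out.
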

\begin{proof}
  The claim in \eqref{eq:v_omega_diff} follows immediately from the calculation below:
  \begin{align*}
    \hat{v}_t - \omega_t^v & = \sqrt{\bar{\alpha}_t} \epsilon_0 - \sqrt{1 - \bar{\alpha}_t} \frac{x_0 - \hat{\mu}(z)}{\hat{\sigma}(z)} - \frac{\sqrt{\bar{\alpha}_t}}{\sqrt{1 - \bar{\alpha}_t}} \frac{x_t - \hat{\mu}(z)}{\hat{\sigma}(z)}                                 \\
                           & = \sqrt{\bar{\alpha}_t} \epsilon_0 - \sqrt{1 - \bar{\alpha}_t} \frac{x_0 - \hat{\mu}(z)}{\hat{\sigma}(z)}                                                                                                                                      \\
                           & \qquad - \frac{\sqrt{\bar{\alpha}_t}}{\sqrt{1 - \bar{\alpha}_t}} \frac{\sqrt{\bar{\alpha}_t} x_0 + \left(1 - \sqrt{\bar{\alpha}_t}\right) \hat{\mu}(z) + \sqrt{1 - \bar{\alpha}_t} \hat{\sigma}(z) \epsilon_0 - \hat{\mu}(z)}{\hat{\sigma}(z)} \\
                           & = - \frac{1}{\sqrt{1 - \bar{\alpha}_t}} \frac{x_0 - \hat{\mu}(z)}{\hat{\sigma}(z)}.
  \end{align*}
\end{proof}
Proposition~\ref{prop:v_omega_diff} indicates that, as the reconstruction error of $x_0$ under the prior model decreases, $\omega_t^v$ converges toward $\hat{v}_t$, thereby making the prediction of $\hat{v}_t$ easier when $\omega_t^v$ is used as an input. This suggests that, in velocity-prediction modeling as well, introducing an appropriate auxiliary variable can enhance the efficiency of prediction within RPD.

\subsection{Incorporation into the algorithms}
The specific procedures for using the auxiliary variables $\omega_t^\epsilon$ and $\omega_t^v$ are provided in Algorithms~\ref{alg:rpd_training}--\ref{alg:rpd_inference_vpred}. To avoid numerical instability arising from division by $\sqrt{1 - \bar{\alpha}_t}$ when computing $\omega_t^\epsilon$ and $\omega_t^v$, we replace $\sqrt{1 - \bar{\alpha}_t}$ with $\sqrt{1 - \bar{\alpha}_t} + \delta$ in the implementation (with $\delta = 0.01$ in our experiments).

\section{Experiments} \label{sec:experiment}
This section summarizes numerical experiments comparing RPD with related existing methods. First, we present results on two-dimensional hetero-scale synthetic examples in which the global and local scales of the distribution differ. Then, we report an evaluation on generative modeling tasks involving natural images.

\subsection{Experiments on two-dimensional hetero-scale data}
\subsubsection{Dataset}
As two-dimensional data, we used a processed version of the Datasaurus dataset \citep{datasaurus}. The Datasaurus dataset contains 13 distinct two-dimensional distributions, all of which share the same mean and standard deviation along both the $x$- and $y$-axes. Because the original data lie roughly within the range $[0,100]$ on both axes, we normalized them by subtracting $50$ from each coordinate and dividing by $50$, yielding data roughly within the range $[-1,1]$. We then constructed the following two datasets based on these normalized data.

\paragraph{Datasaurus-Grid (scale $s$)}
We selected nine of the 13 distributions in the Datasaurus dataset and arranged them on an evenly spaced grid with spacing $\Delta = 3$. Before placing them on the grid, each distribution was scaled by a factor $s$. When $s$ was small relative to $\Delta$, each grid cell contained a highly localized structure relative to the global layout. An example with $s = 0.1$ is shown in Fig.~\ref{fig:datasaurus-grid-0.1-GT}; the top row shows the full distribution, and the bottom row shows zoomed-in views of the distributions placed at each grid location.

\paragraph{Datasaurus-Grid (hetero-scale)}
Here, we arranged nine distributions on the grid as above but scaled each of them by a different factor. This produced a more complex dataset in which the local scales vary across grid cells. An example of the Datasaurus-Grid (hetero-scale) dataset is shown in Fig.~\ref{fig:datasaurus-grid-heteroscale-GT}.

\subsubsection{Compared models} \label{sec:compared_models}
The experiments using two-dimensional data compared the following models: Denoising Diffusion Probabilistic Models (DDPM) \citep{ho2020denoising}, $v$-prediction \citep{salimans2022progressive, lin2024common}, DiffuseVAE \citep{pandey2022diffusevae}, and Rectified Flow \citep{liuflow}.\footnote{We excluded an LDM-style baseline for Datasaurus-Grid, as we found it difficult to obtain a sufficiently accurate autoencoder for the dataset.}
For DiffuseVAE, we used the formulation based on Formulation~2 proposed in \citet{pandey2022diffusevae}.
We refer to the models proposed in this work as RPD and RPD\_vpred, corresponding to noise-prediction and velocity-prediction modeling, respectively.

We chose vector quantized VAE (VQVAE) \citep{van2017neural} as the VAE model and prior model for both DiffuseVAE and RPD.
The details of the VQVAE training are provided in \ref{apdx:vqvae}. For the comparison between RPD and DiffuseVAE, we used the same VQVAE model trained under an identical procedure. However, because DiffuseVAE does not perform variance estimation in its decoder, we employed a variant in which $\hat{\sigma}(z)$ in \eqref{eq:prior_x0_z} is fixed to $1$.\footnote{We also experimented with an extended version of DiffuseVAE that incorporates variance estimation through $\hat{\sigma}(z)$, but the resulting models produced unstable generations. Therefore, following the original formulation, we used the variant with the variance fixed to $1$.}

\paragraph{DNN model configurations}
For each model, the network used to predict the diffusion dynamics was implemented as a multilayer perceptron (MLP) augmented to take the timestep $t$ as an additional input. In our implementation, we used the \texttt{TimeInputMLP} module\footnote{\url{https://github.com/yuanchenyang/smalldiffusion}
  (accessed 2025-12-08).\label{fn:smalldiffusion}} to construct this network. The MLP consisted of five hidden layers with 64, 128, 256, 126, and 64 units, respectively.

For the VQVAE used in DiffuseVAE and RPD, both the encoder and decoder were implemented as three-layer MLPs with 16, 32, and 16 hidden dimensions, respectively, together with GELU activation \citep{hendrycks2016gelu}. Note that the VQVAE architecture was substantially smaller than that employed for the diffusion process.

\paragraph{Diffusion schedule configuration}
The maximum number of diffusion steps was set to $T=200$, and the noise schedule $\beta_t$ followed a log-linear scheme \citep{pmlr-v235-permenter24a}. In our implementation, we used the \texttt{ScheduleLogLinear} module\footref{fn:smalldiffusion} to generate the schedule. Unless otherwise specified, the parameters of the log-linear schedule were set to \texttt{sigma\_min}=0.01 and \texttt{sigma\_max}=100.

\paragraph{Training configurations}
We used the Adam optimizer \citep{adam} with a learning rate of $0.001$. The mini-batch size was set to $1278$, matching the size of the Datasaurus-Grid dataset. Because DiffuseVAE and RPD require pretraining of the VQVAE, the number of training iterations was adjusted accordingly. Specifically, DDPM, $v$-prediction, and Rectified Flow were trained for $120000$ iterations, whereas DiffuseVAE and RPD were trained for $60000$ iterations. Since the VQVAE itself was pretrained for $15000$ iterations (see \ref{apdx:vqvae}), the total number of training iterations for DiffuseVAE and RPD was smaller overall.

\subsubsection{Evaluation metrics for 2D generation}
For the experiments using the Datasaurus--Grid datasets, we generated $6390$ samples\footnote{This corresponds to five times the dataset size of $1278$.} with each trained model and evaluated their distributional proximity to the ground-truth (GT) dataset.\footnote{Because the Datasaurus dataset contains too few samples to allow for a meaningful split, we constructed the GT dataset by simply replicating the training data five times.} As the primary measure of distributional discrepancy, we employed the 1-Wasserstein distance \citep{villani2008optimal} (1WD). However, the standard 1WD does not sufficiently capture local discrepancies within the Datasaurus--Grid dataset. Therefore, we additionally used the region-wise 1WD (RW-1WD). Specifically, we partitioned the two-dimensional plane into three intervals along each axis, $(-\infty, -\Delta/2)$, $[-\Delta/2, \Delta/2)$, and $[\Delta/2, \infty)$, forming nine regions in total. For each region, we computed the 1WD between the generated samples and the GT dataset and defined RW-1WD as the average of these nine values.

To compute these metrics, we repeated each experiment five times with different random seeds and averaged the results. The components primarily affected by the random seeds were the initial weights of the DNN models, the Gaussian noise used during sample generation in the diffusion models, and the VQVAE models pretrained in DiffuseVAE and RPD. Note that for each seed, the same pretrained VQVAE model was shared between DiffuseVAE and RPD, except that $\hat{\sigma}(z)$ was fixed to $1$ in DiffuseVAE, as mentioned above in Section~\ref{sec:compared_models}.

\subsubsection{Qualitative evaluation of 2D generated samples}
\paragraph{Qualitative results for Datasaurus-Grid (scale=0.1)}
Fig.~\ref{fig:datasaurus-grid-0.1} presents the samples generated by DDPM and RPD trained on the Datasaurus-Grid (scale=0.1) dataset. For reference, we also include samples generated by the VQVAE used as the prior model in RPD. The top row of the figure illustrates the overall distribution, whereas the bottom row provides magnified views of each grid location (see the axis labels of each subplot for the visualization range).
\begin{figure}[tbp]
  \centering
  \begin{minipage}[b]{0.24\linewidth}
    \centering
    \includegraphics[width=\linewidth,clip]{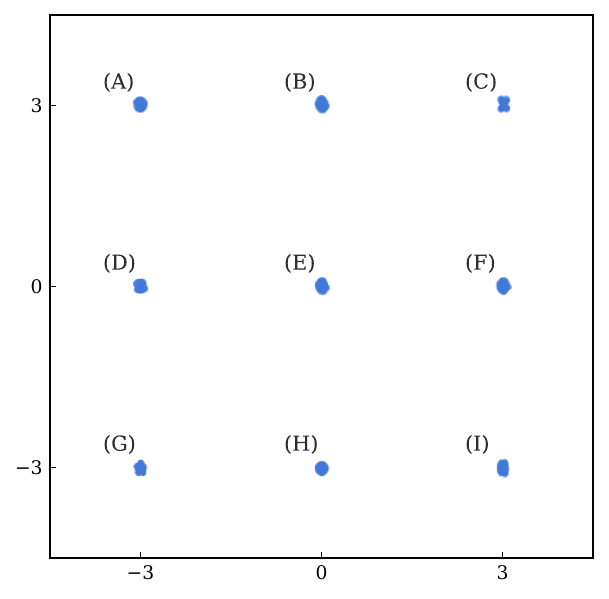}
    \\
    \includegraphics[width=\linewidth,clip]{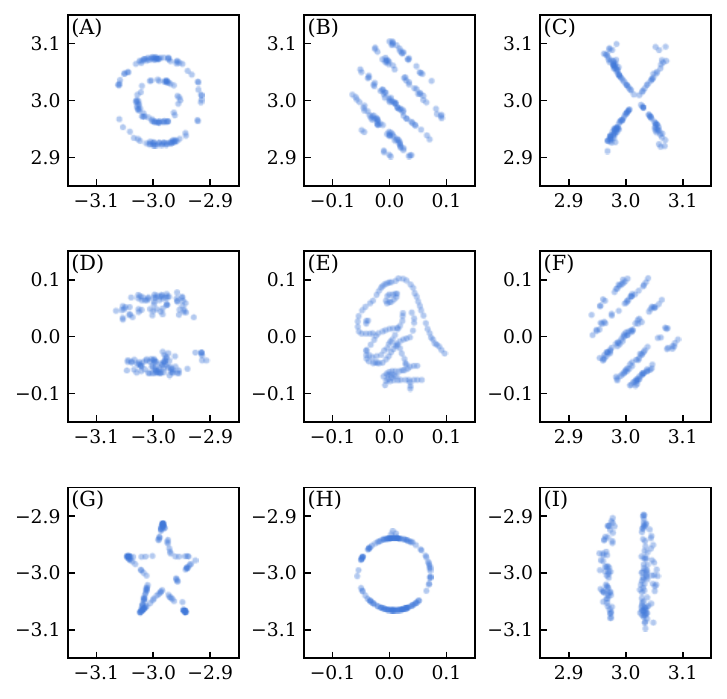}
    \subcaption{GT}
    \label{fig:datasaurus-grid-0.1-GT}
  \end{minipage} \hfill
  \begin{minipage}[b]{0.24\linewidth}
    \centering
    \includegraphics[width=\linewidth,clip]{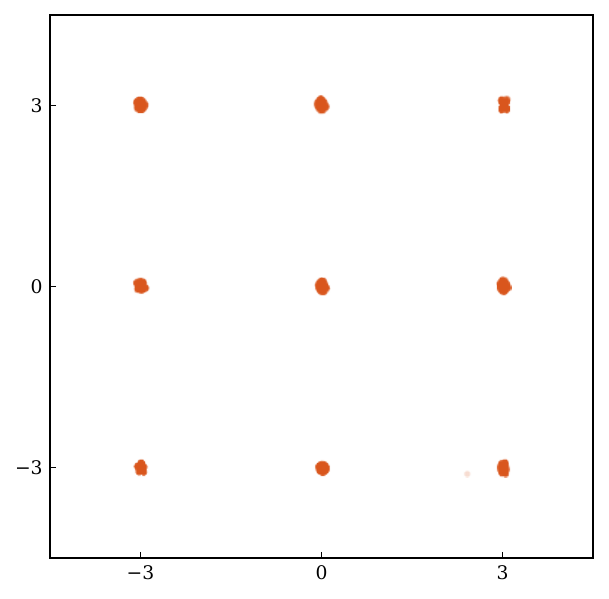}
    \\
    \includegraphics[width=\linewidth,clip]{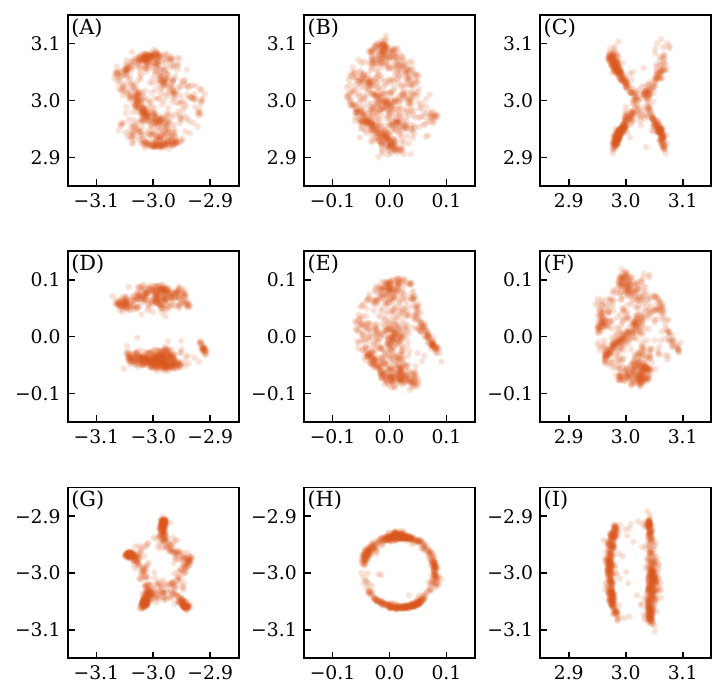}
    \subcaption{DDPM}
    \label{fig:datasaurus-grid-0.1-DDPM}
  \end{minipage} \hfill
  \begin{minipage}[b]{0.24\linewidth}
    \centering
    \includegraphics[width=\linewidth,clip]{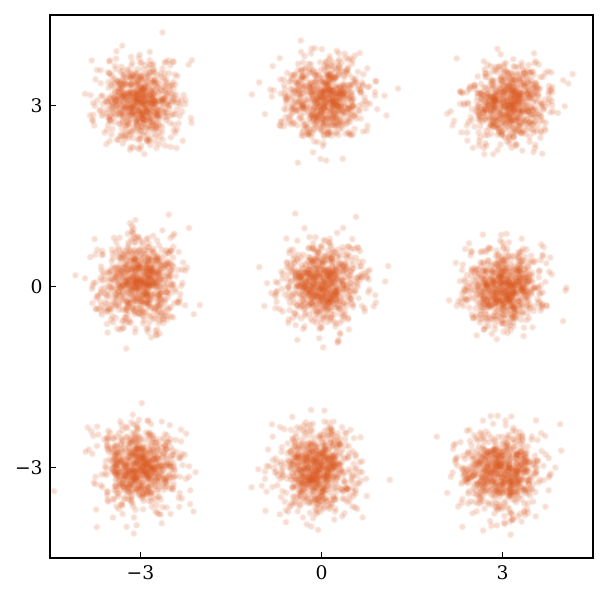}
    \\
    \includegraphics[width=\linewidth,clip]{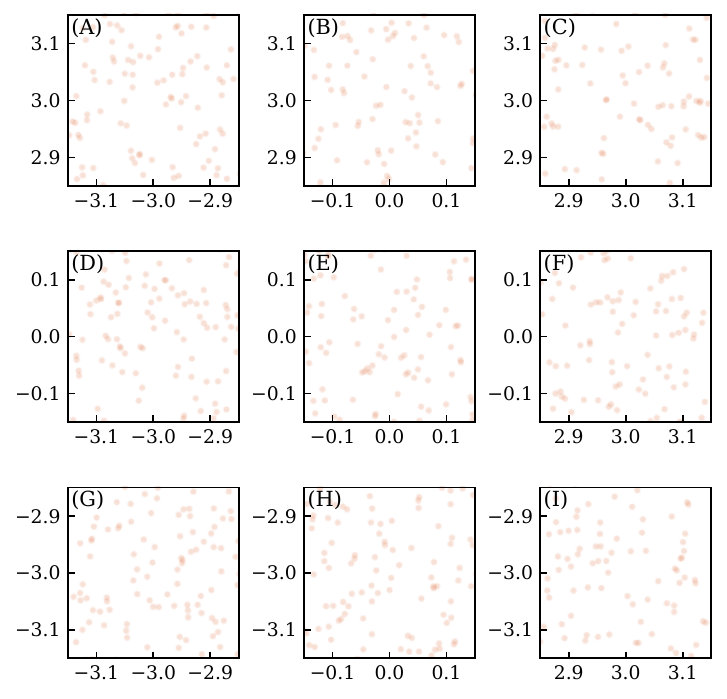}
    \subcaption{VQVAE (prior model)}
    \label{fig:datasaurus-grid-0.1-VQVAE}
  \end{minipage} \hfill
  \begin{minipage}[b]{0.24\linewidth}
    \centering
    \includegraphics[width=\linewidth,clip]{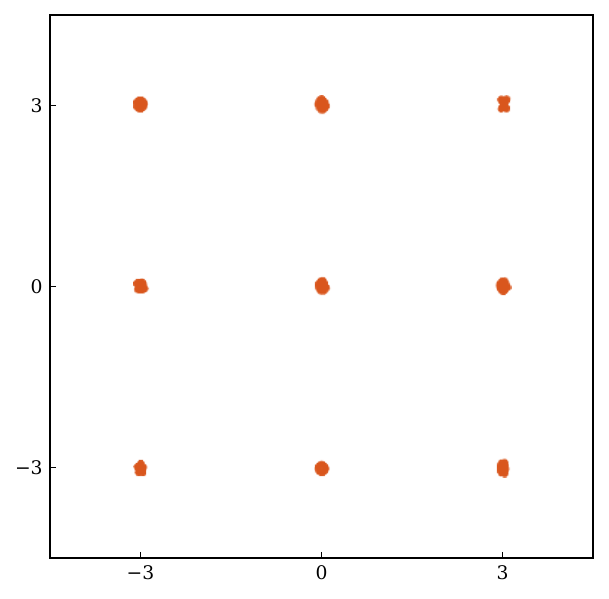}
    \\
    \includegraphics[width=\linewidth,clip]{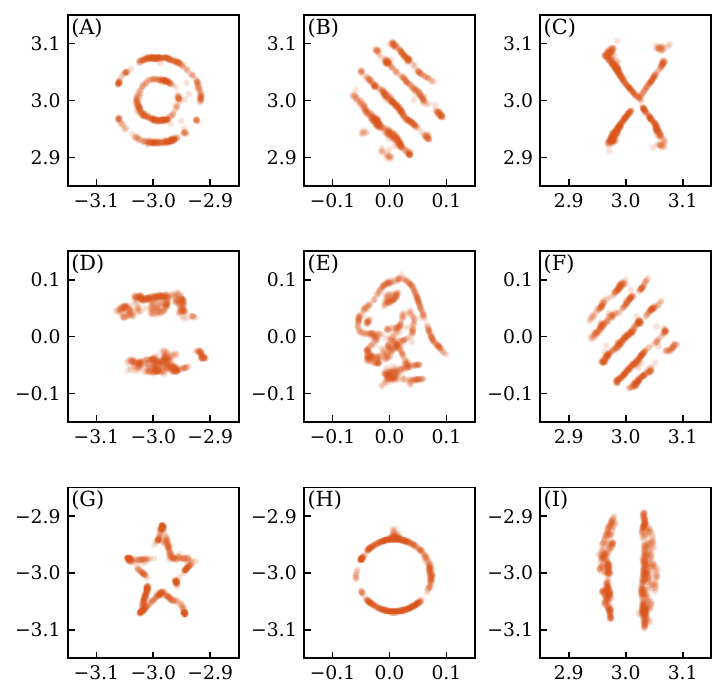}
    \subcaption{RPD (proposed)}
    \label{fig:datasaurus-grid-0.1-RPD}
  \end{minipage}
  \caption{Datasaurus-Grid (scale=0.1) dataset (a) and samples generated by each model (b--d). The top row shows the full distribution, and the bottom row shows zoomed-in views of the distributions at each grid location (the correspondence between rows follows indices A--I in (a)). RPD uses the VQVAE model in (c) as its prior.}
  \label{fig:datasaurus-grid-0.1}
\end{figure}
As shown in Fig.~\ref{fig:datasaurus-grid-0.1-DDPM}, DDPM captures the global trend of the distribution reasonably well. However, as highlighted in the magnified views in the bottom row, it fails to capture most of the fine-grained structures present in the GT distribution, except for simple shapes such as circles. The VQVAE results in Fig.~\ref{fig:datasaurus-grid-0.1-VQVAE} provide only a coarse approximation of the global structure. Because the VQVAE relies on a Gaussian reconstruction model, it is unable to capture the detailed structure of the GT distribution.
In contrast, RPD, which uses the VQVAE as its prior model, successfully preserves the global structure while also generating local patterns that closely match the GT distribution, as shown in Fig.~\ref{fig:datasaurus-grid-0.1-RPD}. These results demonstrate that even when the prior model offers only a coarse approximation of the distribution, RPD can learn to capture fine-grained local structures with high fidelity.

\paragraph{Qualitative results for Datasaurus-Grid (scale=1.0)}
The evaluation results for the Datasaurus-Grid (scale=1.0) dataset are presented in Fig.~\ref{fig:datasaurus-grid-1.0}. Note that magnified views for each grid location are omitted in this figure. Compared with the scale=0.1 setting, the DDPM results in Fig.~\ref{fig:datasaurus-grid-1.0-DDPM} show that DDPM is able to express a considerable amount of detail in each shape. This indicates that, when the global and local scales of the data distribution are not substantially different, DDPM can capture the original distribution with relatively high fidelity.
The VQVAE results in Fig.~\ref{fig:datasaurus-grid-1.0-VQVAE} reveal that, for example, around $(-3,-3)$, the model estimates a Gaussian mixture with multiple means. As discussed in \ref{apdx:vqvae}, this behavior arises from the choice of a codebook size of $16$ in the VQVAE and indicates that the Gaussian mean can take one of sixteen possible vectors. Nevertheless, as shown in Fig.~\ref{fig:datasaurus-grid-1.0-RPD}, RPD remains capable of stable distribution learning even when such a VQVAE is used as the prior model.

\begin{figure}[tbp]
  \centering
  \begin{minipage}[b]{0.24\linewidth}
    \centering
    \includegraphics[width=\linewidth,clip]{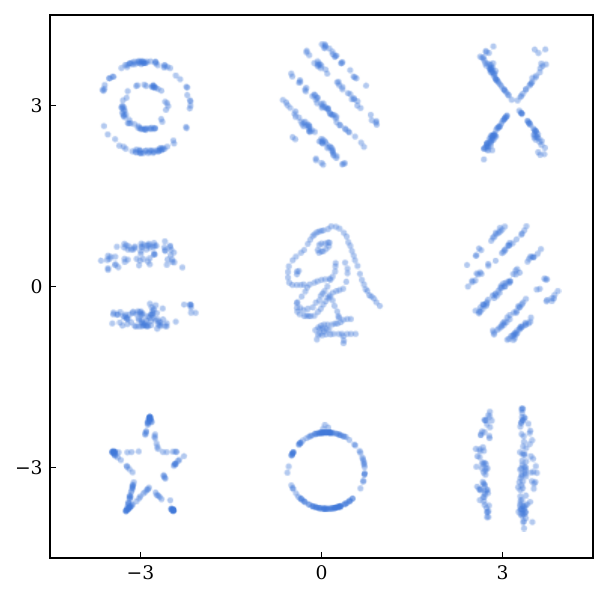}
    \subcaption{GT}
    \label{fig:datasaurus-grid-1.0-GT}
  \end{minipage} \hfill
  \begin{minipage}[b]{0.24\linewidth}
    \centering
    \includegraphics[width=\linewidth,clip]{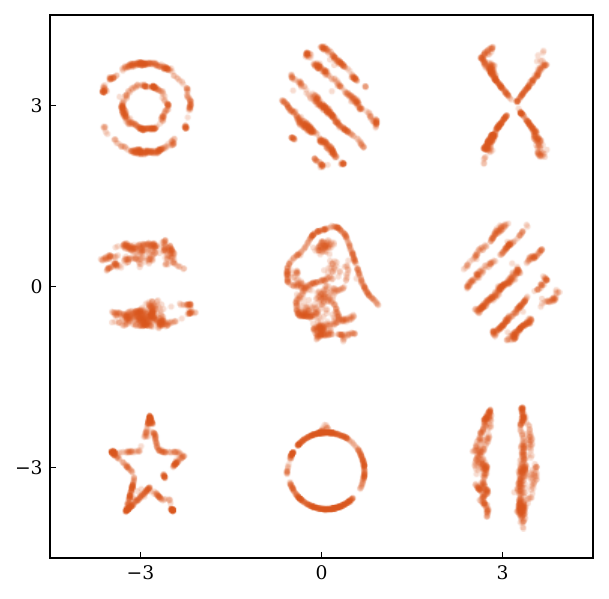}
    \subcaption{DDPM}
    \label{fig:datasaurus-grid-1.0-DDPM}
  \end{minipage} \hfill
  \begin{minipage}[b]{0.24\linewidth}
    \centering
    \includegraphics[width=\linewidth,clip]{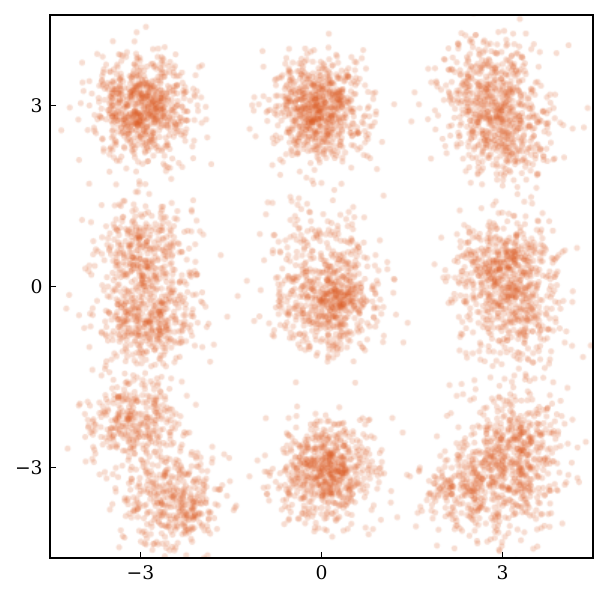}
    \subcaption{VQVAE (prior model)}
    \label{fig:datasaurus-grid-1.0-VQVAE}
  \end{minipage} \hfill
  \begin{minipage}[b]{0.24\linewidth}
    \centering
    \includegraphics[width=\linewidth,clip]{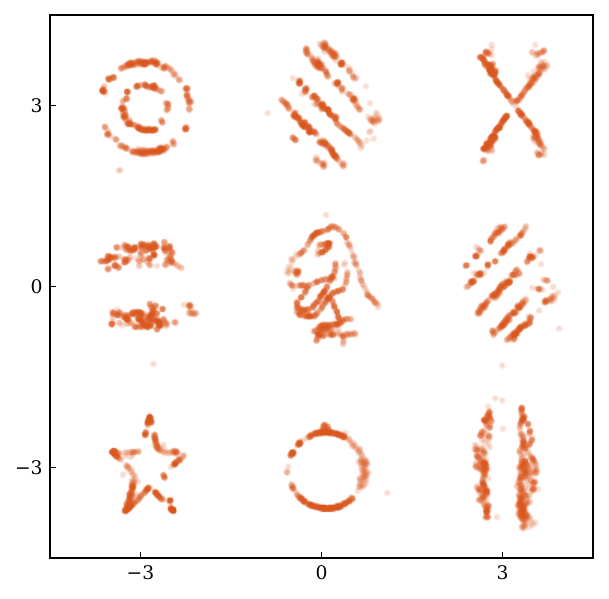}
    \subcaption{RPD (proposed)}
    \label{fig:datasaurus-grid-1.0-RPD}
  \end{minipage}
  \caption{Datasaurus-Grid (scale=1.0) dataset (a) and samples generated by each model (b--d). RPD uses the VQVAE model in (c) as its prior.}
  \label{fig:datasaurus-grid-1.0}
\end{figure}

\paragraph{Qualitative results for Datasaurus-Grid (hetero-scale)}
The generated samples for the Datasaurus-Grid (hetero-scale) dataset are shown in Fig.~\ref{fig:datasaurus-grid-multiscale}. As can be seen in Fig.~\ref{fig:datasaurus-grid-heteroscale-DDPM}, DDPM is able to express the GT distribution reasonably well in regions with larger local scales. However, in regions with smaller local scales, DDPM struggles to capture fine-grained details of the distribution, except for simple shapes such as circles.
In contrast, as shown in Fig.~\ref{fig:datasaurus-grid-heteroscale-RPD}, RPD is capable of accurately capturing detailed structures even when the local scales vary substantially across regions. This demonstrates that RPD maintains high fidelity in settings where the data exhibit heterogeneous local scales.

\begin{figure}[tbp]
  \centering
  \begin{minipage}[b]{0.24\linewidth}
    \centering
    \includegraphics[width=\linewidth,clip]{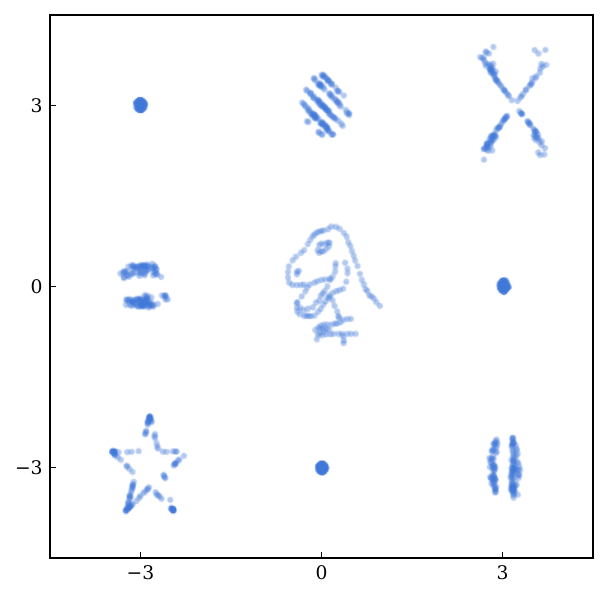}
    \\
    \includegraphics[width=\linewidth,clip]{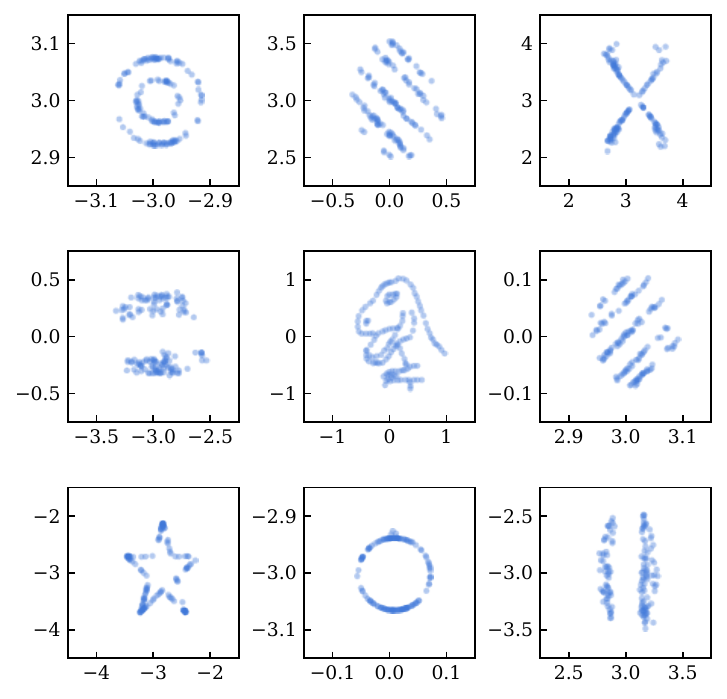}
    \subcaption{GT}
    \label{fig:datasaurus-grid-heteroscale-GT}
  \end{minipage} \hfill
  \begin{minipage}[b]{0.24\linewidth}
    \centering
    \includegraphics[width=\linewidth,clip]{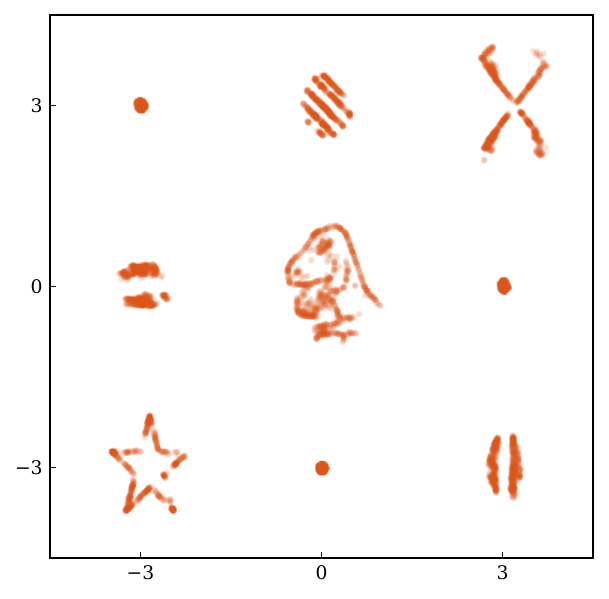}
    \\
    \includegraphics[width=\linewidth,clip]{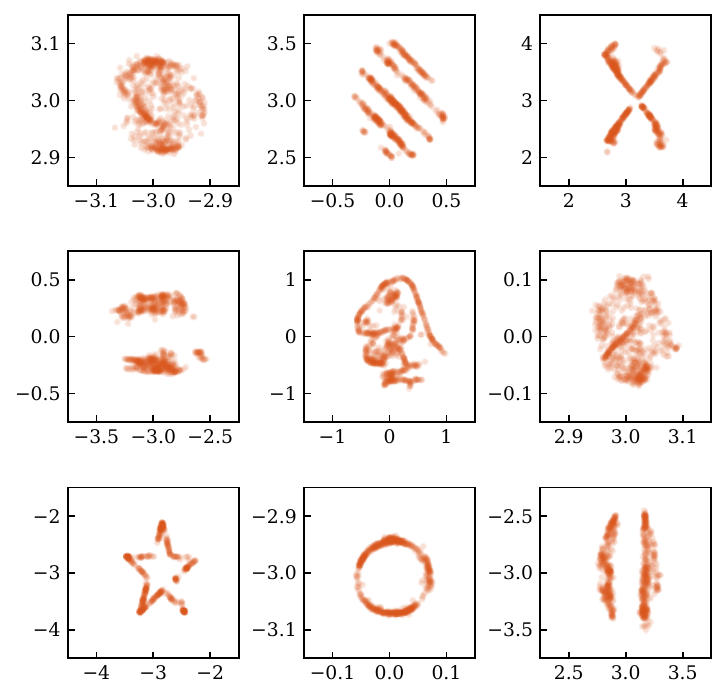}
    \subcaption{DDPM}
    \label{fig:datasaurus-grid-heteroscale-DDPM}
  \end{minipage} \hfill
  \begin{minipage}[b]{0.24\linewidth}
    \centering
    \includegraphics[width=\linewidth,clip]{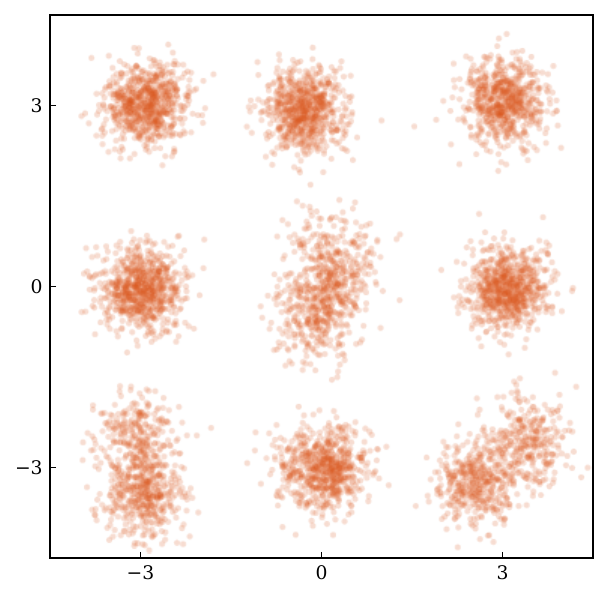}
    \\
    \includegraphics[width=\linewidth,clip]{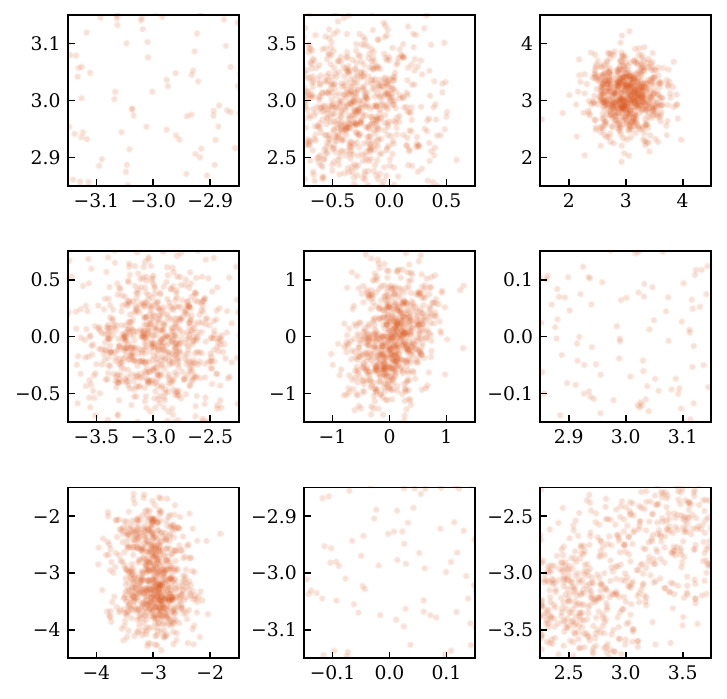}
    \subcaption{VQVAE (prior model)}
    \label{fig:datasaurus-grid-heteroscale-VQVAE}
  \end{minipage} \hfill
  \begin{minipage}[b]{0.24\linewidth}
    \centering
    \includegraphics[width=\linewidth,clip]{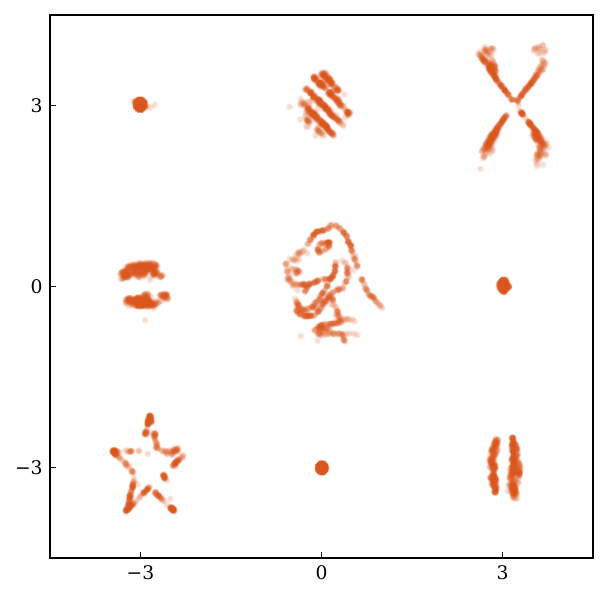}
    \\
    \includegraphics[width=\linewidth,clip]{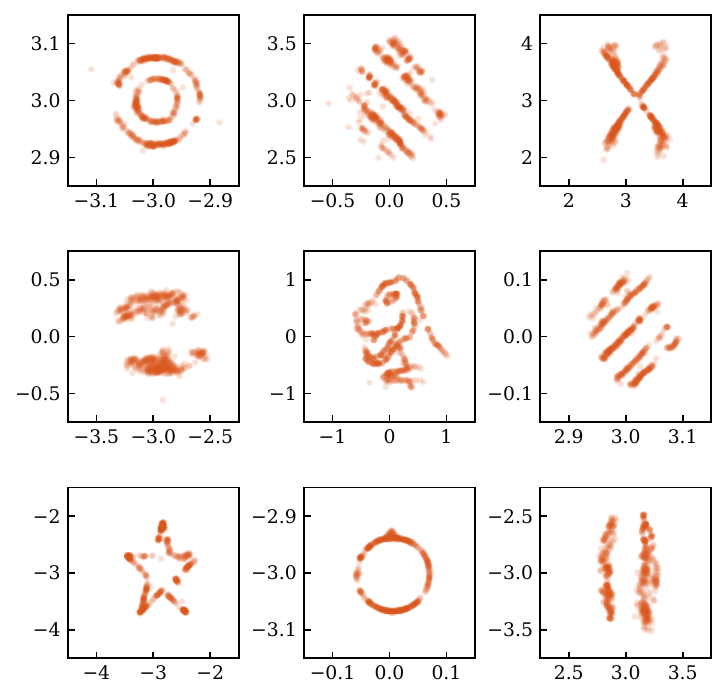}
    \subcaption{RPD (proposed)}
    \label{fig:datasaurus-grid-heteroscale-RPD}
  \end{minipage}
  \caption{Datasaurus-Grid (hetero-scale) dataset (a) and samples generated by each model (b--d). The top row shows the global distribution, and the bottom row shows zoomed-in views at each grid location (note that the scale differs across zoomed-in panels). RPD uses the VQVAE model in (c) as its prior.}
  \label{fig:datasaurus-grid-multiscale}
\end{figure}

\paragraph{Qualitative comparison with other models}
Qualitative results obtained using DiffuseVAE and Rectified Flow are shown in Figs.~\ref{fig:datasaurus-grid-diffusevae} and \ref{fig:datasaurus-grid-rf} in \ref{apdx:2d_sample}, respectively, for the Datasaurus-Grid datasets (scale=0.1, 1.0, and hetero-scale).
As can be seen, the generation characteristics of DiffuseVAE and Rectified Flow are similar to those of DDPM: their generation accuracy in regions with small local scales is limited compared with regions where the local scale is larger. Despite the methodological similarity between RPD and DiffuseVAE, the performance gap between the two models can be attributed to the differences discussed in Section~\ref{sec:compare_with_diffusevae}.

\paragraph{Visualization of the reverse diffusion process}
Fig.~\ref{fig:datasaurus-0.1-reverse} visualizes the reverse diffusion trajectories for each model trained on the Datasaurus-Grid (scale=0.1) dataset. In the figure, the light gray points represent the distribution of $x_t$ at $t=200$ ($=T$), corresponding to the initial distribution of the reverse diffusion process, while the dark gray points represent the distribution of $x_t$ at $t=0$, i.e., the estimated $x_0$ distribution produced by each model. The colored curves denote the reverse-time trajectories from $t=200$ to $t=0$ for individual generated samples. For DiffuseVAE (Fig.~\ref{fig:datasaurus-0.1-reverse-diffusevae}), the red curve corresponds to the post-processing step in which the decoded mean $\hat{\mu}(z)$ of the latent variable is subtracted at the end of generation (see Section~\ref{sec:compare_with_diffusevae} and \citep{pandey2022diffusevae} for details).

\begin{figure}[tbp]
  \centering
  \begin{minipage}[t]{0.48\linewidth}
    \centering
    \includegraphics[height=\linewidth,clip]{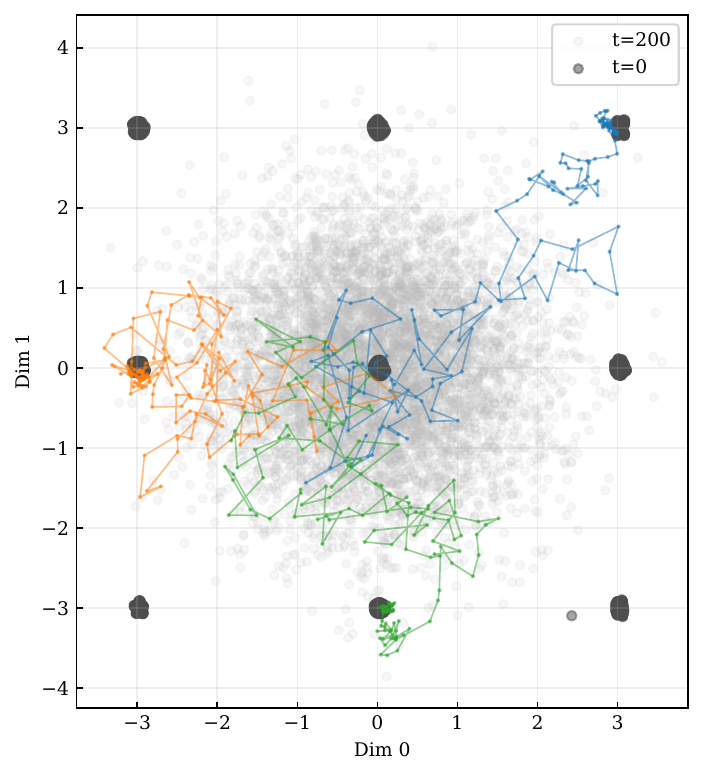}
    \subcaption{DDPM}
    \label{fig:datasaurus-0.1-reverse-ddpm}
  \end{minipage} \hfill
  \begin{minipage}[t]{0.48\linewidth}
    \centering
    \includegraphics[height=\linewidth,clip]{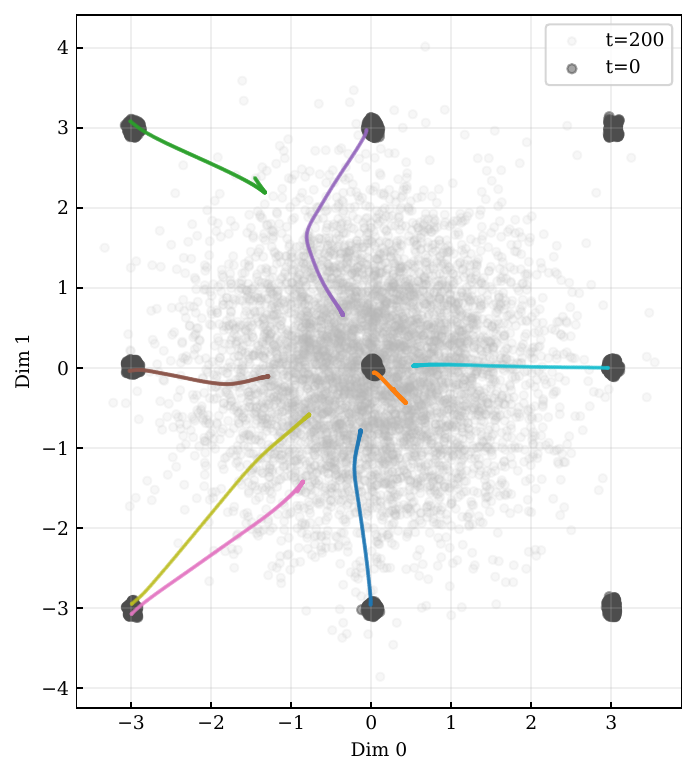}
    \subcaption{Rectified Flow}
    \label{fig:datasaurus-0.1-reverse-rectified-flow}
  \end{minipage}
  \begin{minipage}[t]{0.48\linewidth}
    \centering
    \includegraphics[height=\linewidth,clip]{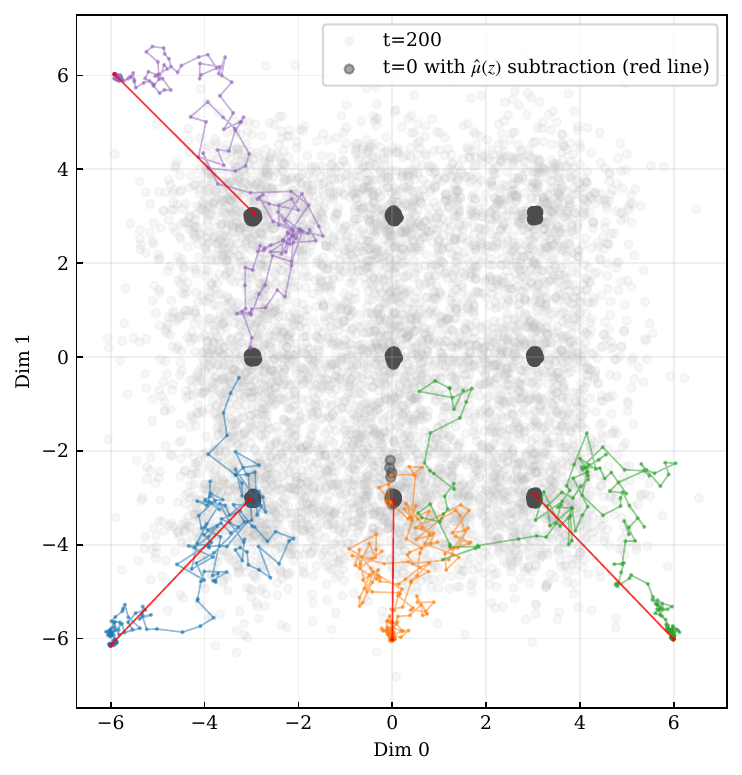}
    \subcaption{DiffuseVAE}
    \label{fig:datasaurus-0.1-reverse-diffusevae}
  \end{minipage} \hfill
  \begin{minipage}[t]{0.48\linewidth}
    \centering
    \includegraphics[height=\linewidth,clip]{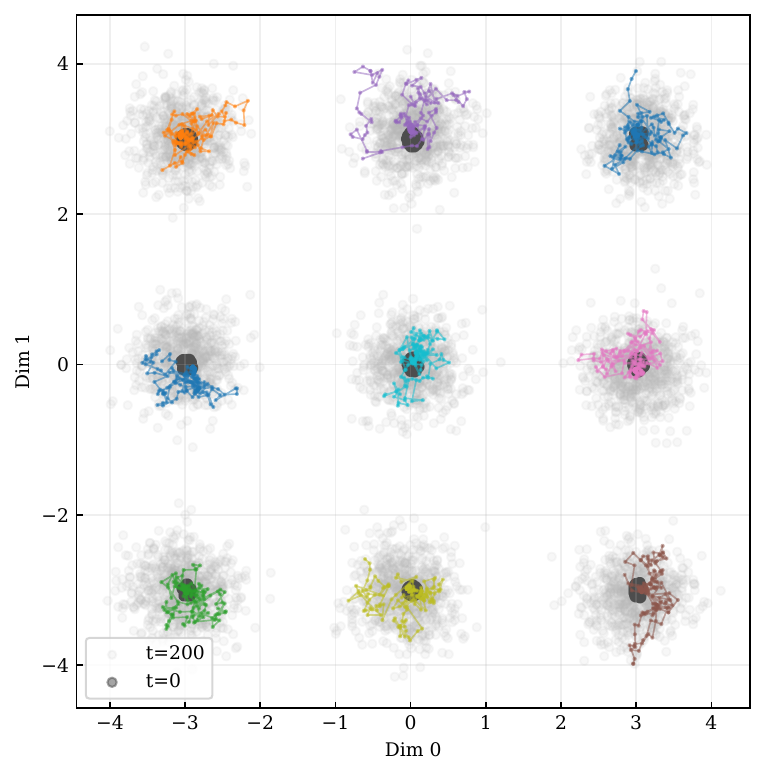}
    \subcaption{RPD (proposed)}
    \label{fig:datasaurus-0.1-reverse-rpd}
  \end{minipage}
  \caption{Reverse diffusion trajectories produced by each model on the Datasaurus-Grid (scale=0.1) dataset.}
  \label{fig:datasaurus-0.1-reverse}
\end{figure}

As shown in the figure, both DDPM and Rectified Flow exhibit trajectories that move from a standard Gaussian distribution at $t=200$ toward the distribution represented by the Datasaurus-Grid (scale=0.1) dataset. DDPM displays stochastic behavior, whereas Rectified Flow produces nearly straight deterministic trajectories.
The reverse trajectories of DiffuseVAE deviate from the ground-truth distribution as $t \to 0$, yet subtracting $\hat{\mu}(z)$ at the final step produces samples that remain close to the ground-truth distribution.
In contrast, RPD exhibits a fundamentally different reverse diffusion behavior: the estimated trajectories converge locally within clusters determined by the latent variable $z$.

\subsubsection{Quantitative evaluation with Datasaurus-Grid datasets}
Using the models during training, we periodically generated samples and evaluated their discrepancies from the GT dataset using 1WD and RW-1WD. The results are shown in Fig.~\ref{fig:datasaurus-grid-plot-1wd-all}. In the figure, the solid and dashed lines denote the median values of each metric, and the error bars represent the 10th and 90th percentiles. As described in the experimental setup, DDPM, $v$-prediction, and Rectified Flow were trained for $120000$ iterations, whereas RPD and DiffuseVAE were trained for $60000$ iterations.

\begin{figure}[htb]
  \begin{center}
    \includegraphics[width=1.0\linewidth,clip]{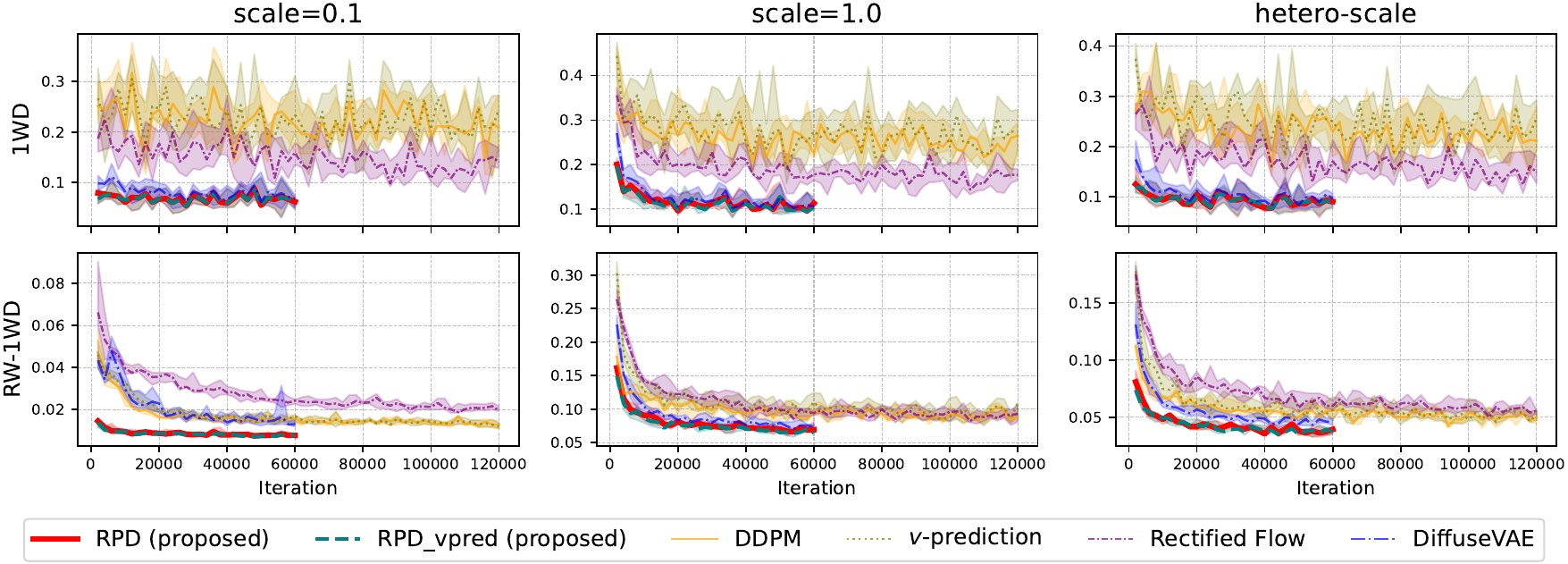}
    \caption{Evolution of 1WD and RW-1WD on the Datasaurus-Grid datasets (left: scale=0.1, center: scale=1.0,   right: hetero-scale).}
    \label{fig:datasaurus-grid-plot-1wd-all}
  \end{center}
\end{figure}

The figure shows that, despite using only half as many training iterations, RPD consistently achieves favorable metric values. The trends of the evaluation metrics for RPD and RPD\_vpred are similar, indicating that both prediction parameterizations yield comparably strong performance.
Comparing DiffuseVAE and RPD, we observe that RPD produces distributions closer to the GT dataset in terms of RW-1WD, particularly for the scale=0.1 and hetero-scale settings. In contrast, the difference in 1WD between the two methods is small. Since 1WD primarily reflects the global characteristics of the distribution, this suggests that the overall distributional trends generated by DiffuseVAE and RPD are similar. However, RW-1WD captures local agreement at each grid point, and RPD clearly outperforms DiffuseVAE in this regard. These observations are consistent with the qualitative results shown in Figs.~\ref{fig:datasaurus-grid-0.1}--\ref{fig:datasaurus-grid-multiscale}.
Furthermore, DDPM, $v$-prediction, and Rectified Flow exhibit comparatively larger 1WD values. This is attributed to the strong dependence of 1WD on the density distribution across the nine grid locations: the generated samples produced by these models deviate from the GT densities at the grid level, leading to degraded 1WD. In contrast, both RPD and DiffuseVAE, which utilize a pretrained VQVAE prior, produce density distributions that are closer to those of the GT dataset, enabling them to achieve lower 1WD values.

\subsubsection{Ablation study on auxiliary variables}
To evaluate the effect of the auxiliary variables introduced in Section~\ref{sec:aux_var},
we performed an ablation study on RPD under the noise-prediction parameterization.
Specifically, we compared two configurations: one in which the auxiliary variable $\omega_t^\epsilon$,
defined in \eqref{eq:omega_t_epsilon}, was provided as an additional input to $\epsilon_\theta$,
and another in which $\hat{\mu}(z)$ was used instead.
The latter corresponds to a setting where the mean of the prior model is directly used as the auxiliary input.
The results of this comparison are shown in Fig.~\ref{fig:datasaurus-grid-plot-rpd-ablation}.
The top row shows the training loss (in log scale), while the middle and bottom rows show the evolution of 1WD and RW-1WD, respectively, during training.

\begin{figure}[tbp]
  \begin{center}
    \includegraphics[width=1.0\linewidth,clip]{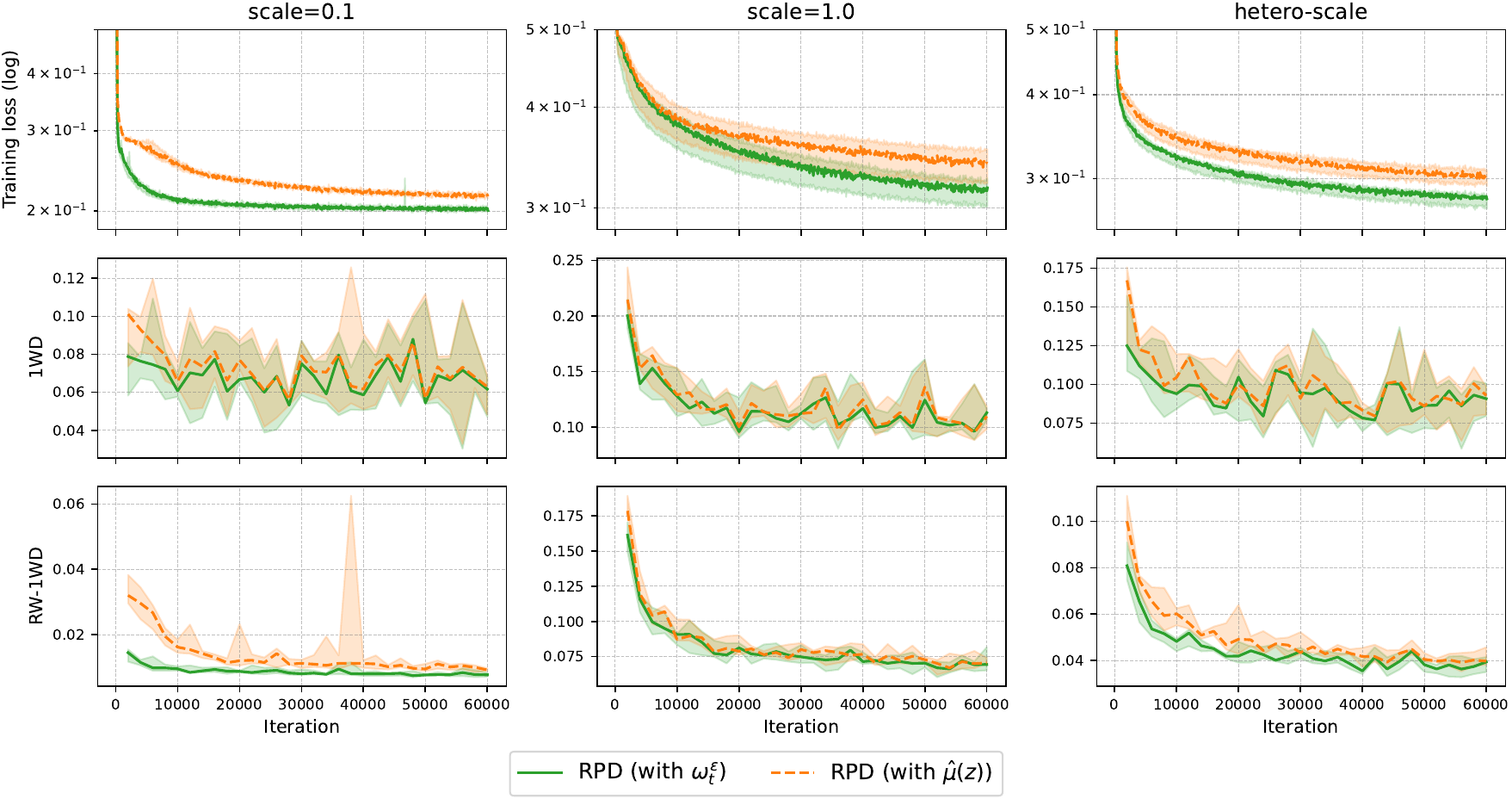}
    \caption{Comparison of training curves with and without the auxiliary variable $\omega_t^\epsilon$ in RPD.}
    \label{fig:datasaurus-grid-plot-rpd-ablation}
  \end{center}
\end{figure}

Across all datasets, the model using $\omega_t^\epsilon$ as an input exhibits a faster reduction in the training loss than the model using $\hat{\mu}(z)$,
demonstrating the effectiveness of the proposed auxiliary variable.
While the difference between the two configurations in terms of 1WD is small,
the RW-1WD results reveal that $\omega_t^\epsilon$ leads to a noticeably faster decrease for the scale=0.1 and hetero-scale settings.
These findings indicate that incorporating $\omega_t^\epsilon$ as an auxiliary input improves the model's ability to capture fine-grained local structure.

\subsubsection{Comparison with DDPM under different hyperparameter configurations}
Diffusion models involve several hyperparameters, including the beta schedule, the capacity of the prediction model, and the total number of diffusion steps. To assess how these design choices affect performance, we compared RPD with multiple DDPM variants in which individual hyperparameters were modified. The results are summarized in \ref{apdx:ddpm_various_hparams}.
The experiments revealed that, even after various hyperparameter adjustments, standard DDPM still struggles to accurately model distributions such as Datasaurus-Grid (scale=0.1), where local structures occur at scales far smaller than the global distribution. Moreover, the results suggest that the advantages of RPD remain robust, underscoring its effectiveness in scenarios characterized by pronounced scale heterogeneity.

\subsection{Experiments on natural image generation}

\subsubsection{Natural image datasets}
For the natural image generation tasks, we used the following two benchmark datasets.

\paragraph{Smithsonian Butterflies Subset (the Butterflies dataset)}\footnote{\url{https://huggingface.co/datasets/huggan/smithsonian_butterflies_subset} (Accessed 2025-12-08)}
This dataset is a preprocessed subset constructed from the Smithsonian Butterflies dataset,\footnote{\url{https://huggingface.co/datasets/ceyda/smithsonian_butterflies} (Accessed 2025-12-08)}
which contains butterfly images sourced from Smithsonian Open Access.\footnote{\url{https://www.si.edu/openaccess} (Accessed 2025-12-08)}
In this subset, images of butterflies belonging to the same species have been removed and background regions have been filtered out to produce clean color images.
We randomly split the dataset into training, validation, and test sets with a 7:1:2 ratio.
All images were resized to $128 \times 128$ for training and evaluation.

\paragraph{Fashion-MNIST (FMNIST) dataset \citep{xiao2017fashion}}
This is a grayscale image dataset of clothing items, annotated with ten class labels (e.g., T-shirt, dress).
For our experiments, we divided the original training set into training and validation splits with a 9:1 ratio.
Although the original image size is $28 \times 28$, we used upsampled images of size $32 \times 32$ to match the architecture of the DNN models described in Section~\ref{sec:compared_models_image}.

Fig.~\ref{fig:gt_test} shows examples randomly selected from the test split of each dataset. For FMNIST, each row contains images belonging to the same class, and the leftmost column displays their corresponding class labels (0--9).
\begin{figure}[tbp]
  \centering
  \begin{minipage}[b]{0.4\linewidth}
    \centering
    \includegraphics[width=\linewidth,clip]{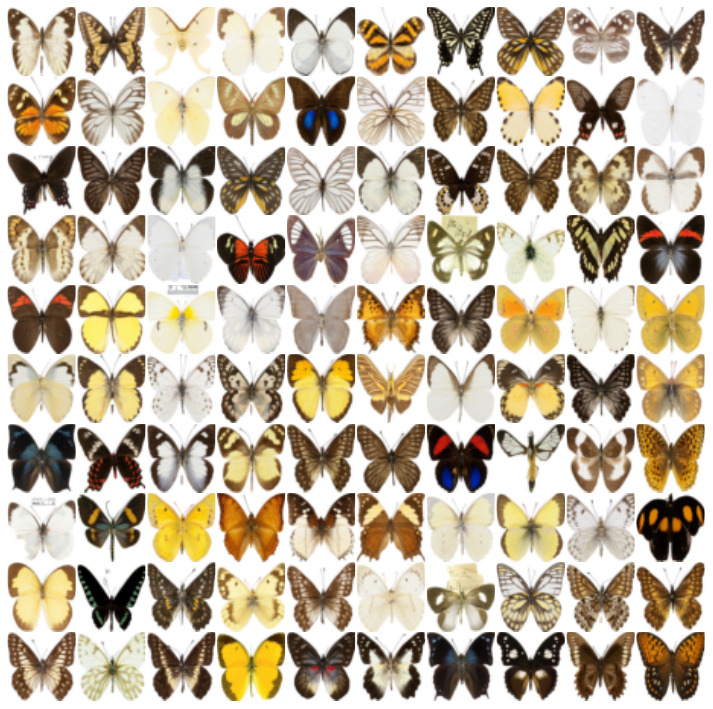}
    \subcaption{Butterflies}
    \label{fig:butterflies_gt_test}
  \end{minipage}%
  \hspace{10mm}
  \begin{minipage}[b]{0.25\linewidth}
    \centering
    \includegraphics[width=\linewidth,clip]{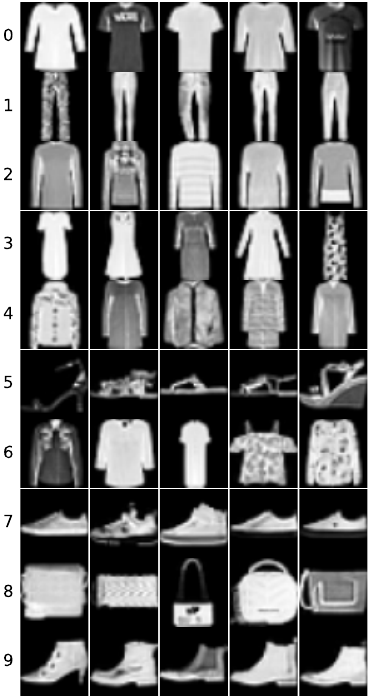}
    \subcaption{FMNIST}
    \label{fig:fmnist_gt_test}
  \end{minipage}%
  \caption{Examples from the test split of the datasets used in our image generation experiments. For FMNIST, the numbers shown in the leftmost column denote the corresponding class labels (0--9).}
  \label{fig:gt_test}
\end{figure}

We performed unconditional image generation for the Butterflies dataset, whereas for the FMNIST dataset we conducted conditional image generation by providing class labels as conditioning information.
As in standard diffusion models, conditioning was incorporated by supplying the class label as an additional input to the prediction model used in the diffusion process. For example, under noise-prediction modeling, we used $\epsilon_\theta(x_t, t, z, c)$ in place of $\epsilon_\theta(x_t, t, z)$, where $c$ denotes the class label. In both RPD and DiffuseVAE, the VAE decoder in the prior model was also conditioned on $c$. Specifically, the functions $\hat{\mu}(z)$ and $\hat{\sigma}(z)$ in \eqref{eq:prior_x0_z} were replaced with $\hat{\mu}(z, c)$ and $\hat{\sigma}(z, c)$, respectively.

\subsubsection{Compared models in image generation tasks} \label{sec:compared_models_image}
We focused our image-generation experiments on pixel-space diffusion baselines, as our aim was to evaluate residual prior diffusion in a setting where the prior and diffusion components are explicitly defined in data space.
In addition to the diffusion baselines considered in Section~\ref{sec:compared_models} (DDPM, $v$-prediction, DiffuseVAE, and Rectified Flow), we also included Denoising Diffusion Implicit Models (DDIM) \citep{songdenoising} and Inductive Moment Matching (IMM) \citep{zhouinductive}.
DDIM extends DDPM by introducing a non-Markovian inference procedure, and it often improves sample quality when the number of inference steps is reduced.
Since DDIM uses the same training objective as DDPM, we reused the DDPM-trained checkpoints and applied DDIM only at inference time.
IMM generalizes consistency models (CMs) \citep{song2023consistency}, a representative approach for few-step generation.
CMs can suffer from training instability and often rely on distillation; IMM was proposed as a way to alleviate these issues and enable stable training without such additional procedures \citep{zhouinductive}.

\paragraph{DNN configurations}
For all models, we used the \texttt{UNet2DModel} implementation provided in the Diffusers library \citep{von-platen-etal-2022-diffusers} to predict the diffusion dynamics. The configurations of \texttt{UNet2DModel} for the Butterflies and FMNIST datasets are summarized in Table~\ref{tab:unet_configs}.
\begin{table}[tbp]
  \centering
  \caption{DNN configurations for the image generation tasks}
  \label{tab:unet_configs}
  \scalebox{0.9}{
    \begin{tabular}{lll}
      \toprule
      \textbf{Parameter}                                                   & \textbf{Butterflies}           & \textbf{FMNIST}      \\
      \midrule
      \texttt{class}                                                       & \texttt{UNet2DModel}           & \texttt{UNet2DModel} \\
      \texttt{sample\_size}                                                & 128                            & 32                   \\
      \texttt{in\_channels}                                                & 3                              & 1                    \\
      \texttt{out\_channels}                                               & 3                              & 1                    \\
      \texttt{layers\_per\_block}                                          & 2                              & 1                    \\
      \texttt{norm\_num\_groups}                                           & 32                             & 16                   \\
      \texttt{block\_out\_channels}                                        & (128, 128, 256, 256, 512, 512) & (16, 32, 32, 64)     \\
      \texttt{down\_block\_types}                                          &
      \begin{tabular}[t]{@{}l@{}}
        \texttt{DownBlock2D}, \texttt{DownBlock2D}, \texttt{DownBlock2D}, \\
        \texttt{DownBlock2D}, \texttt{AttnDownBlock2D}, \texttt{DownBlock2D}
      \end{tabular} &
      \begin{tabular}[t]{@{}l@{}}
        \texttt{DownBlock2D}, \texttt{DownBlock2D}, \\
        \texttt{AttnDownBlock2D}, \texttt{DownBlock2D}
      \end{tabular}                                                                                \\
      \texttt{up\_block\_types}                                            &
      \begin{tabular}[t]{@{}l@{}}
        \texttt{UpBlock2D}, \texttt{AttnUpBlock2D}, \texttt{UpBlock2D}, \\
        \texttt{UpBlock2D}, \texttt{UpBlock2D}, \texttt{UpBlock2D}
      \end{tabular}   &
      \begin{tabular}[t]{@{}l@{}}
        \texttt{UpBlock2D}, \texttt{AttnUpBlock2D}, \\
        \texttt{UpBlock2D}, \texttt{UpBlock2D}
      \end{tabular}                                                                                  \\
      \texttt{num\_class\_embeds}                                          & \texttt{None}                  & 10                   \\
      \bottomrule
    \end{tabular}}
\end{table}

As prior models for DiffuseVAE and RPD, we used $\beta$-VAEs \citep{higgins2017betavae}. The encoder and decoder of $\beta$-VAE were implemented using a ResNet-based architecture \citep{he2016deep}. Note that the $\beta$-VAE used in our experiments is considerably smaller than the \texttt{UNet2DModel} employed in the diffusion models---its parameter count is less than one-tenth that of the UNet (see Section~\ref{sec:model_size} for details). $\beta$-VAE includes a hyperparameter $\beta$ that controls the strength of the regularization. For each dataset, we trained models with $\beta \in \left\{8, 16, 24, 32, 40\right\}$ using the training split and selected the $\beta$ value that yielded the best performance on the validation split (see Section~\ref{sec:metric} for the evaluation metrics).

\paragraph{Diffusion schedule configuration}
In training the diffusion models, we set the maximum diffusion timestep to $T = 1000$ and set the noise schedule $\beta_t$ by following the one of Stable Diffusion v1.5.\footnote{\url{https://huggingface.co/stable-diffusion-v1-5/stable-diffusion-v1-5}
  (Accessed 2025-12-09)}
For IMM, the schedule was configured according to the procedure described in the original paper \citep{zhouinductive}.

\paragraph{Training configurations}
For each dataset and each model, we trained the networks following the settings summarized in Table~\ref{tab:training_config}. Note that DiffuseVAE and RPD require pretraining of a $\beta$-VAE, and therefore the number of training iterations for these methods was set lower than that of the other baselines. As shown in the table, even when combining the training iterations of $\beta$-VAE and those of DiffuseVAE/RPD/RPD\_vpred, the total remained smaller than that of the other methods. Furthermore, because $\beta$-VAE had a relatively small model size in our experiments, the time required for its training was comparatively short.
\begin{table}[tbp]
  \centering
  \caption{Training configurations for each dataset and each model in the image generation tasks.}
  \label{tab:training_config}
  \scalebox{0.9}{
    \begin{tabular}{lcccccc}
      \toprule
                               & \multicolumn{3}{c}{\textbf{Butterflies}}                           & \multicolumn{3}{c}{\textbf{FMNIST}}                                                                                     \\
      \cmidrule(lr){2-4} \cmidrule(lr){5-7}
                               & $\beta$-VAE
                               & \begin{tabular}{@{}c@{}}DiffuseVAE/\\RPD/\\RPD\_vpred\end{tabular}
                               & Other models
                               & $\beta$-VAE
                               & \begin{tabular}{@{}c@{}}DiffuseVAE/\\RPD/\\RPD\_vpred\end{tabular}
                               & Other models                                                                                                                                                                                 \\
      \midrule
      \textbf{Optimizer}       & Adam\cite{adam}                                                    & Adam                                & Adam               & Adam               & Adam               & Adam               \\
      \textbf{Learning rate}   & $1.0\times10^{-4}$                                                 & $1.0\times10^{-4}$                  & $1.0\times10^{-4}$ & $1.0\times10^{-4}$ & $1.0\times10^{-3}$ & $1.0\times10^{-3}$ \\
      \textbf{Gradient clip}   & 0.1                                                                & --                                  & --                 & 0.1                & --                 & --                 \\
      \textbf{Batch size}      & 32                                                                 & 32                                  & 32                 & 128                & 128                & 128                \\
      \textbf{dtype}           & bfloat16                                                           & bfloat16                            & bfloat16           & bfloat16           & bfloat16           & bfloat16           \\
      \textbf{Training iters.} & 10000                                                              & 60000                               & 120000             & 10000              & 30000              & 60000              \\
      \bottomrule
    \end{tabular}
  }
\end{table}

\subsubsection{Evaluation metrics for image generation} \label{sec:metric}
To evaluate the quality of the generated image distributions, we used the kernel inception distance \citep{binkowski2018demystifying} (KID) and 1WD for the Butterflies dataset. For the FMNIST dataset, we computed 1WD and maximum mean discrepancy \citep{gretton2012kernel} (MMD) conditioned on each class label, and report the average across all classes as the final metric. We refer to these metrics as class-wise 1WD (CW-1WD) and class-wise MMD (CW-MMD), respectively. As a reference, Table~\ref{tab:train_test_metrics} reports the same metrics computed between the training and test splits of each dataset.

\begin{table}[tbph]
  \centering
  \caption{Distances between the training and test splits of each dataset, measured by the corresponding evaluation metrics.}
  \label{tab:train_test_metrics}
  \scalebox{0.9}{
    \begin{tabular}{lcccc}
      \toprule
                                     & \multicolumn{2}{c}{\textbf{Butterflies}} & \multicolumn{2}{c}{\textbf{FMNIST}}                   \\
      \cmidrule(lr){2-3} \cmidrule(lr){4-5}
                                     & KID                                      & 1WD                                 & CW-1WD & CW-MMD \\
      \midrule
      training split vs.\ test split & 0.0014                                   & 97.89                               & 8.94   & 0.0027 \\
      \bottomrule
    \end{tabular}
  }
\end{table}

In the evaluation, each trained model generated the same number of samples as contained in the corresponding test split, and we computed the distances between the generated samples and the test split by using the metrics described above. We also varied the number of inference steps in the inference procedure of the diffusion models, over ${3, 10, 50}$, and evaluated each configuration separately. In addition, we repeated the model training five times with different random seeds and computed the metrics for each run; we report the mean and standard deviation over these five runs.

\subsubsection{Qualitative evaluation of generated images}
\paragraph{Butterflies dataset results}
Figures~\ref{fig:butterflies_inf_steps_50} and \ref{fig:butterflies_inf_steps_3} present examples of images generated by each model trained on the Butterflies dataset (100 samples per model). Figure~\ref{fig:butterflies_inf_steps_50} shows the results obtained with 50 inference steps, whereas Fig.~\ref{fig:butterflies_inf_steps_3} shows those obtained with three inference steps.

With 50 inference steps, all models produce diverse butterfly images, although differences in the characteristics of the generated images can be observed (Fig.~\ref{fig:butterflies_inf_steps_50}).
IMM, however, tends to generate noticeably distorted shapes.\footnote{One plausible factor contributing to the degraded quality of images generated by IMM is the mini-batch size used during training. Whereas the IMM paper \citep{zhouinductive} employed a batch size of 4096, our experiments used the smaller batch size listed in Table~\ref{tab:training_config} due to single-GPU training constraints.}
With only three inference steps (Fig.~\ref{fig:butterflies_inf_steps_3}), the differences among models become more pronounced. DDPM, DDIM, $v$-prediction, and DiffuseVAE yield incomplete images with noticeably darker color tones. Rectified Flow produces sharper images but exhibits very low diversity, with many samples appearing nearly identical. IMM generates more diverse outputs, but shape distortions remain, similar to the 50-step results.
In contrast, both RPD and RPD\_vpred generate diverse and high-quality butterfly images even with only three inference steps, demonstrating that their generation performance remains robust under substantial reductions in the number of inference steps.

\begin{figure}[tbp]
  \centering
  \begin{minipage}[b]{0.24\linewidth}
    \centering
    \includegraphics[width=\linewidth]{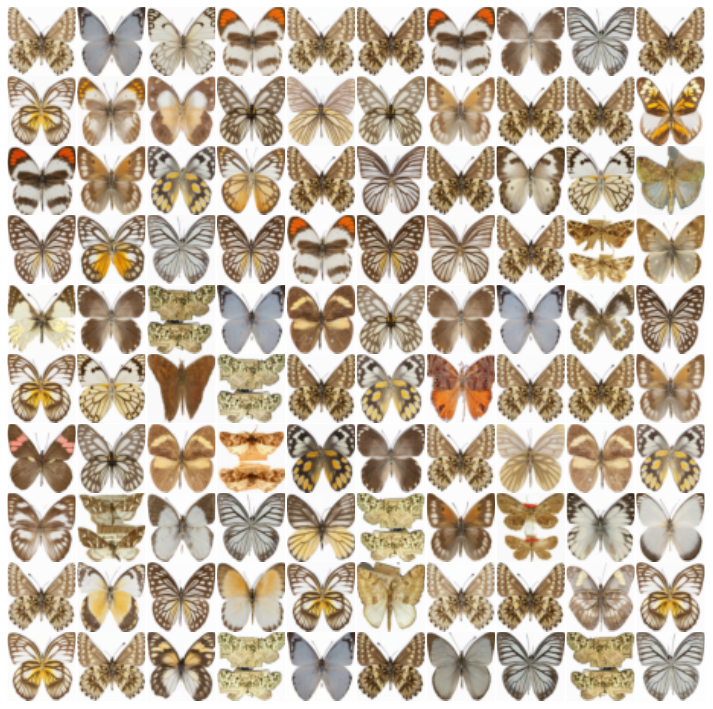}
    \subcaption*{DDPM}
  \end{minipage}%
  \hfill
  \begin{minipage}[b]{0.24\linewidth}
    \centering
    \includegraphics[width=\linewidth]{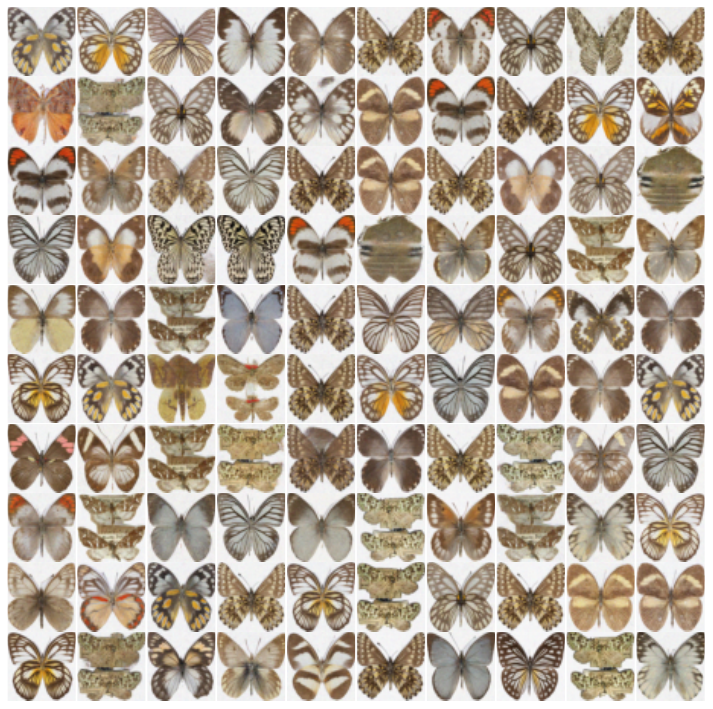}
    \subcaption*{DDIM}
  \end{minipage}%
  \hfill
  \begin{minipage}[b]{0.24\linewidth}
    \centering
    \includegraphics[width=\linewidth]{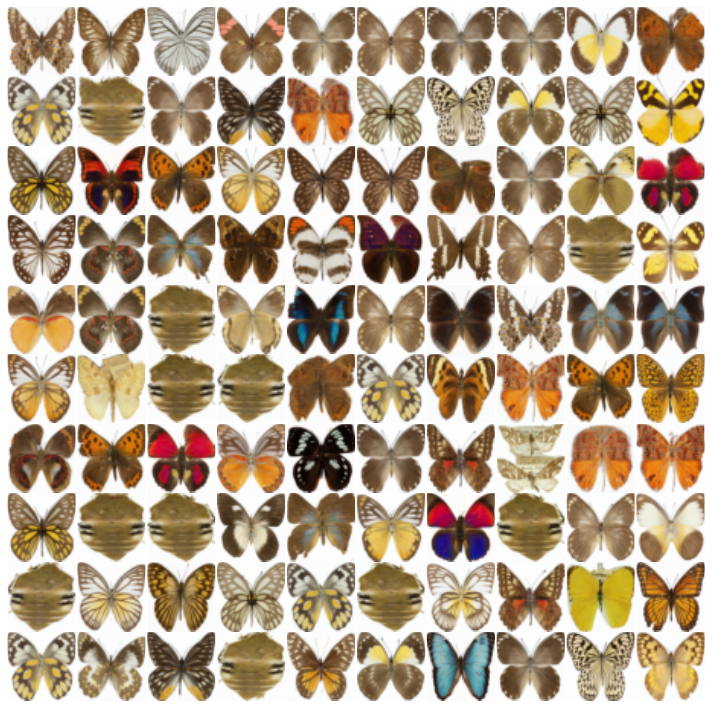}
    \subcaption*{$v$-prediction}
  \end{minipage}%
  \hfill
  \begin{minipage}[b]{0.24\linewidth}
    \centering
    \includegraphics[width=\linewidth]{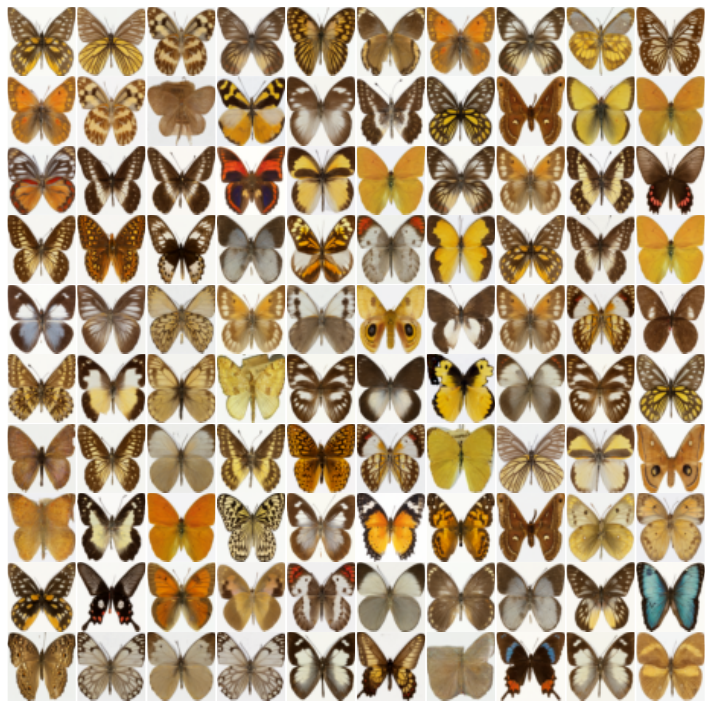}
    \subcaption*{DiffuseVAE}
  \end{minipage}%
  \vspace{2mm}
  \begin{minipage}[b]{0.24\linewidth}
    \centering
    \includegraphics[width=\linewidth]{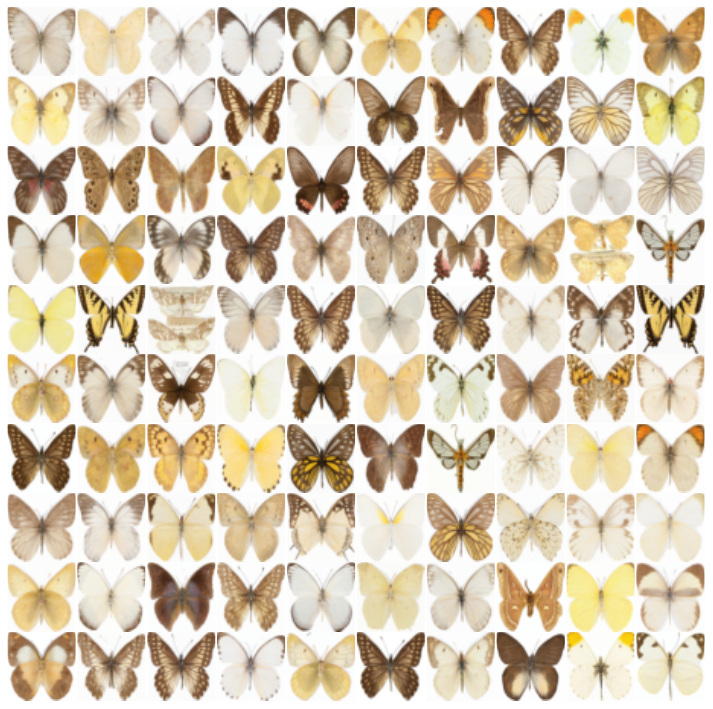}
    \subcaption*{Rectified Flow}
  \end{minipage}%
  \hfill
  \begin{minipage}[b]{0.24\linewidth}
    \centering
    \includegraphics[width=\linewidth]{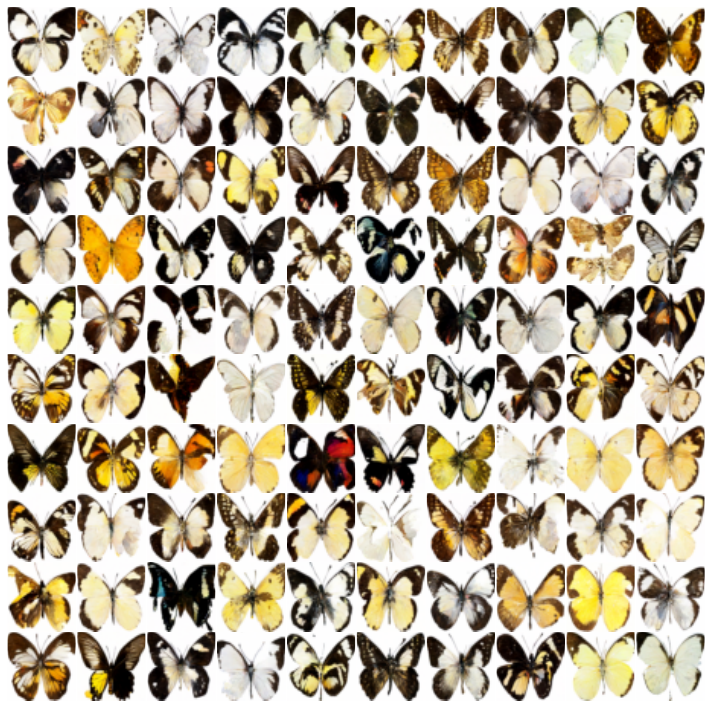}
    \subcaption*{IMM}
  \end{minipage}%
  \hfill
  \begin{minipage}[b]{0.24\linewidth}
    \centering
    \includegraphics[width=\linewidth]{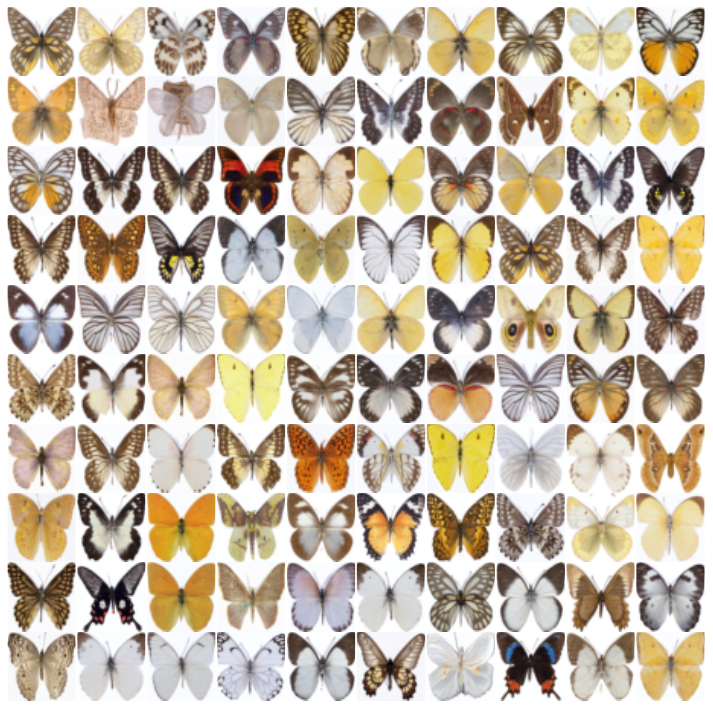}
    \subcaption*{RPD}
  \end{minipage}%
  \hfill
  \begin{minipage}[b]{0.24\linewidth}
    \centering
    \includegraphics[width=\linewidth]{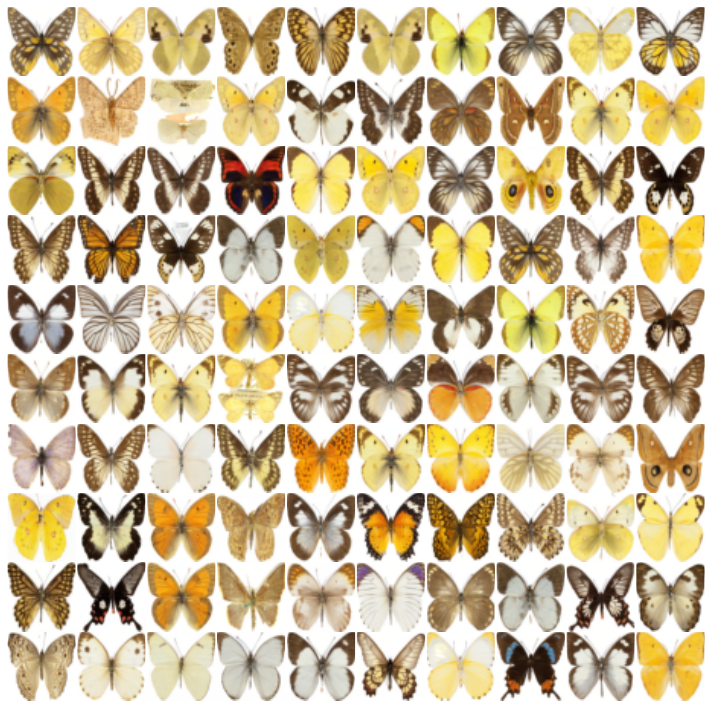}
    \subcaption*{RPD\_vpred}
  \end{minipage}%
  \caption{Samples generated with 50 inference steps on the Butterflies dataset.}
  \label{fig:butterflies_inf_steps_50}
\end{figure}
\begin{figure}[tbp]
  \centering
  \begin{minipage}[b]{0.24\linewidth}
    \centering
    \includegraphics[width=\linewidth]{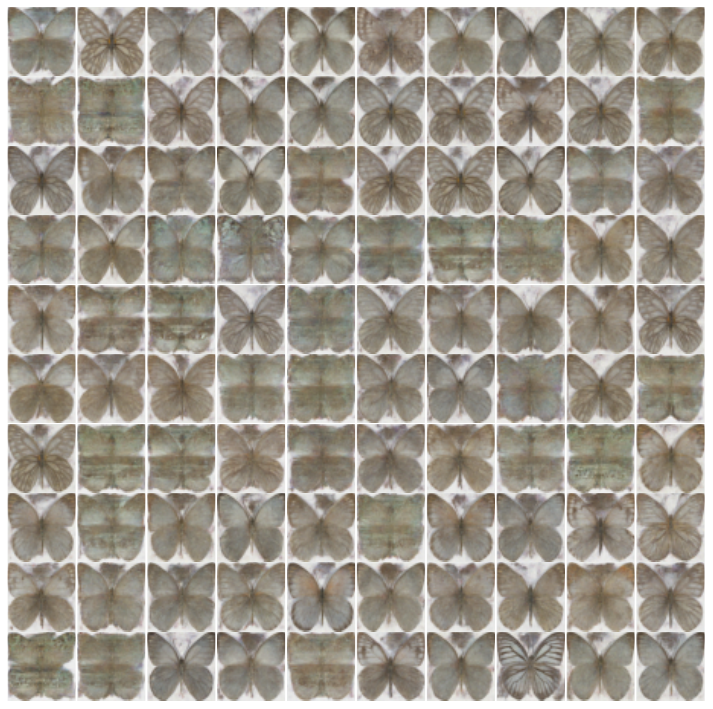}
    \subcaption*{DDPM}
  \end{minipage}%
  \hfill
  \begin{minipage}[b]{0.24\linewidth}
    \centering
    \includegraphics[width=\linewidth]{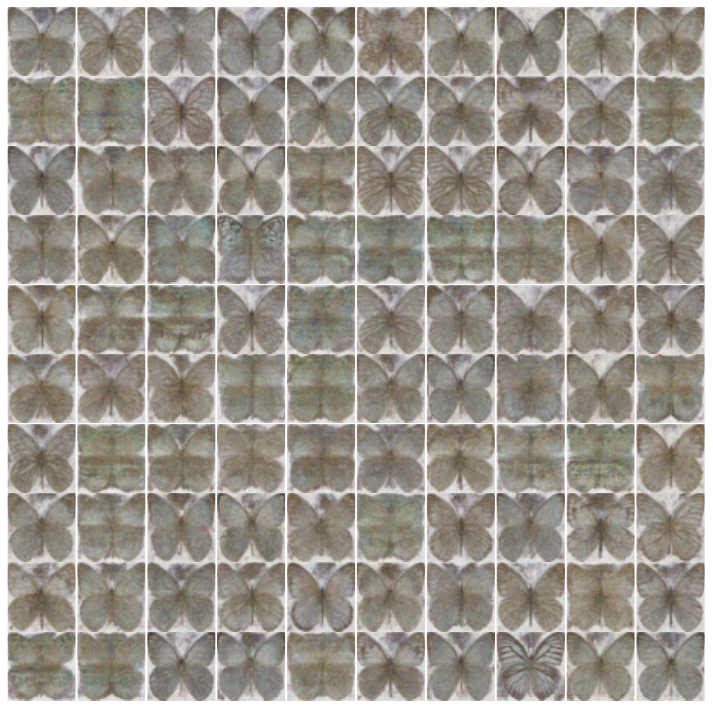}
    \subcaption*{DDIM}
  \end{minipage}%
  \hfill
  \begin{minipage}[b]{0.24\linewidth}
    \centering
    \includegraphics[width=\linewidth]{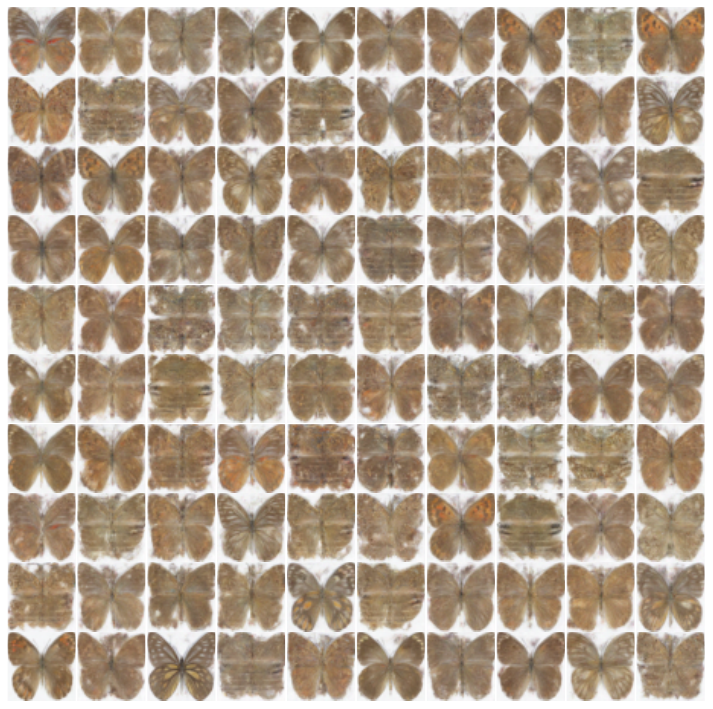}
    \subcaption*{$v$-prediction}
  \end{minipage}%
  \hfill
  \begin{minipage}[b]{0.24\linewidth}
    \centering
    \includegraphics[width=\linewidth]{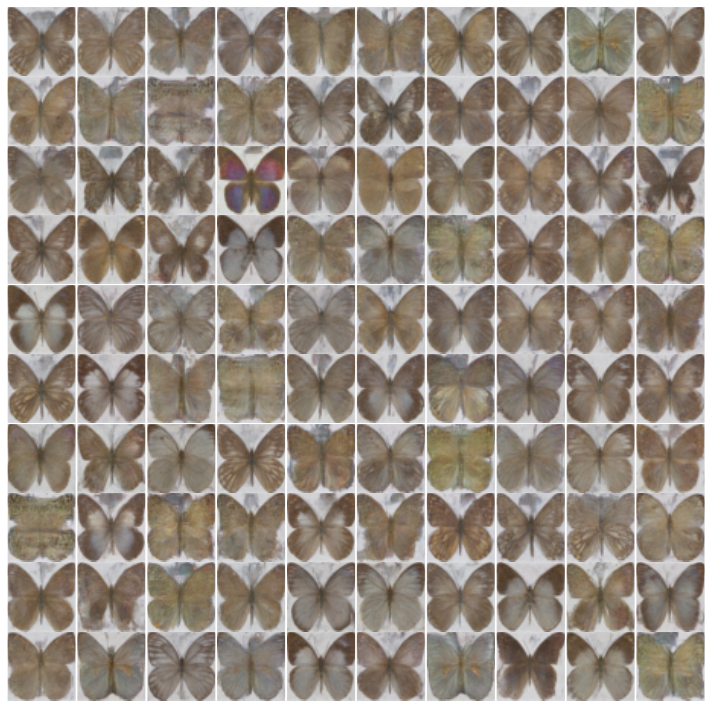}
    \subcaption*{DiffuseVAE}
  \end{minipage}%
  \vspace{2mm}
  \begin{minipage}[b]{0.24\linewidth}
    \centering
    \includegraphics[width=\linewidth]{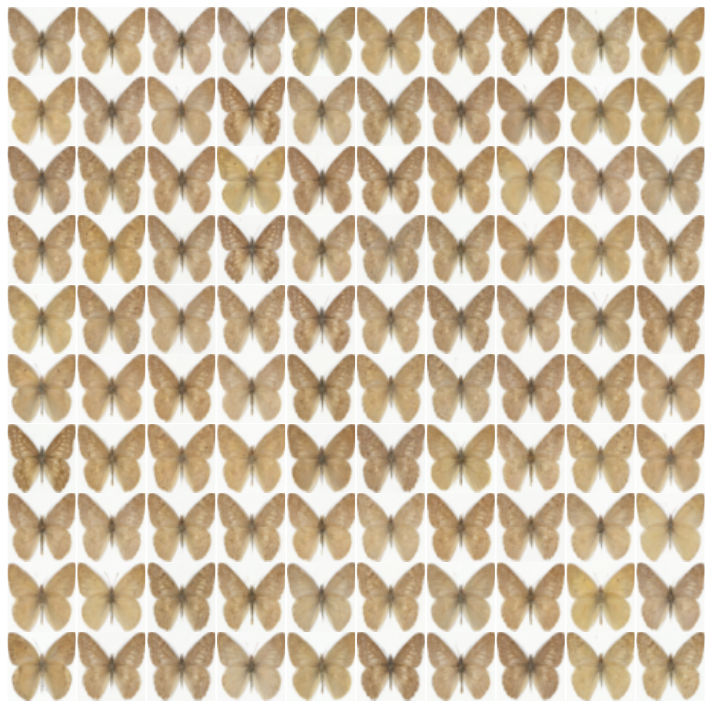}
    \subcaption*{Rectified Flow}
  \end{minipage}%
  \hfill
  \begin{minipage}[b]{0.24\linewidth}
    \centering
    \includegraphics[width=\linewidth]{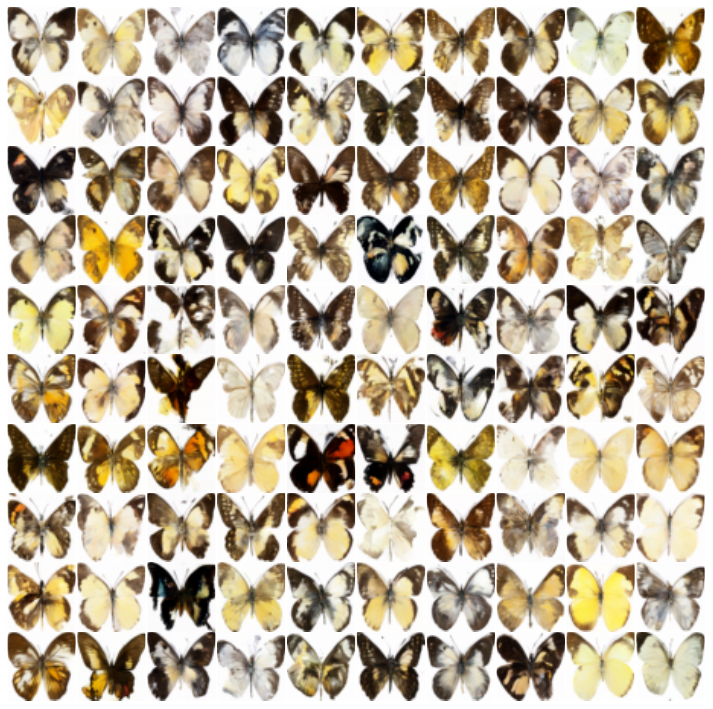}
    \subcaption*{IMM}
  \end{minipage}%
  \hfill
  \begin{minipage}[b]{0.24\linewidth}
    \centering
    \includegraphics[width=\linewidth]{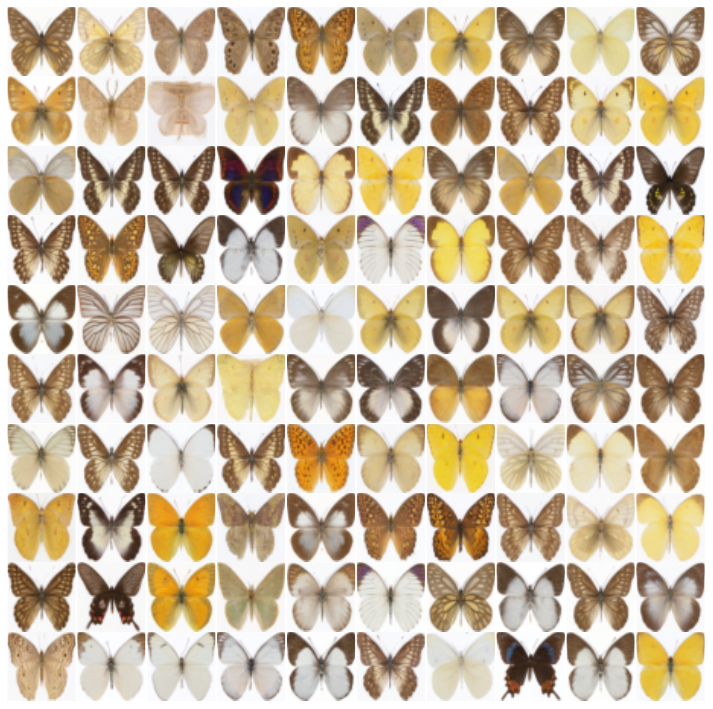}
    \subcaption*{RPD}
  \end{minipage}%
  \hfill
  \begin{minipage}[b]{0.24\linewidth}
    \centering
    \includegraphics[width=\linewidth]{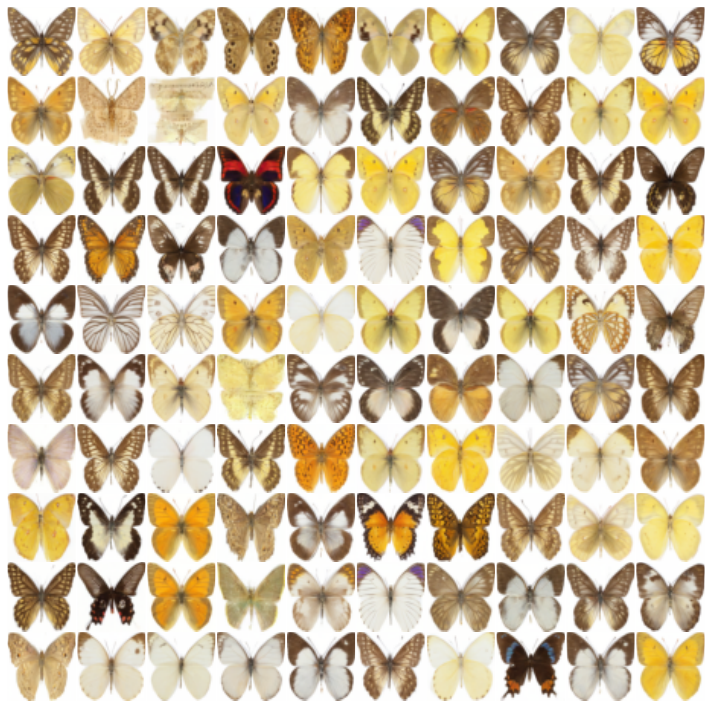}
    \subcaption*{RPD\_vpred}
  \end{minipage}%
  \caption{Samples generated with three inference steps on the Butterflies dataset.}
  \label{fig:butterflies_inf_steps_3}
\end{figure}

\paragraph{FMNIST dataset results}
Figures~\ref{fig:fmnist_inf_steps_50} and \ref{fig:fmnist_inf_steps_3} present the samples generated by each model trained on the FMNIST dataset (five samples per class for each model).
Figure~\ref{fig:fmnist_inf_steps_50} shows the results obtained with 50 inference steps, whereas Fig.~\ref{fig:fmnist_inf_steps_3} shows those obtained with three inference steps.

As in the case of the Butterflies dataset, most models (except IMM) produce diverse and high-fidelity images when the number of inference steps is set to 50, whereas with only three inference steps, DDPM, DDIM, $v$-prediction, and DiffuseVAE yield incomplete generations. Although the images generated by Rectified Flow are relatively sharp, the method exhibits reduced intra-class diversity, particularly in attributes such as clothing brightness. IMM appears to produce higher-quality images with three inference steps than with 50, exhibiting an inverse trend compared with the other models. By comparison, both RPD and RPD\_vpred generate diverse, high-quality images even with a very small number of inference steps.

\begin{figure}[tbp]
  \centering
  \begin{minipage}[b]{0.20\linewidth}
    \centering
    \includegraphics[width=\linewidth]{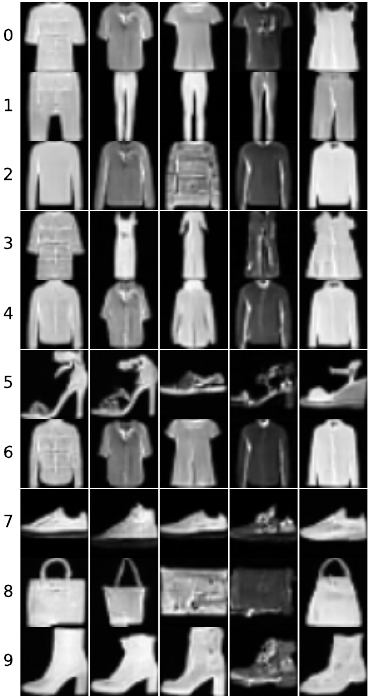}
    \subcaption*{DDPM}
  \end{minipage}%
  \hfill
  \begin{minipage}[b]{0.20\linewidth}
    \centering
    \includegraphics[width=\linewidth]{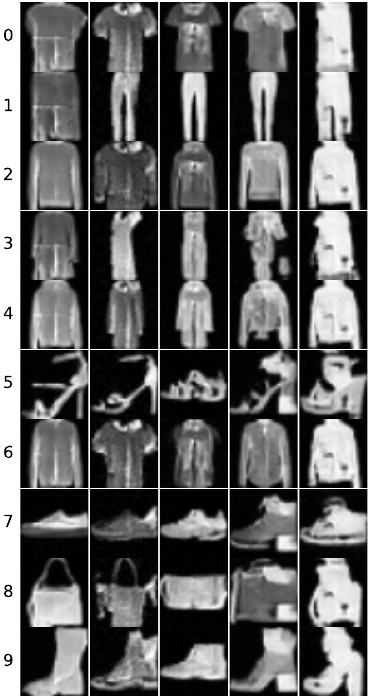}
    \subcaption*{DDIM}
  \end{minipage}%
  \hfill
  \begin{minipage}[b]{0.20\linewidth}
    \centering
    \includegraphics[width=\linewidth]{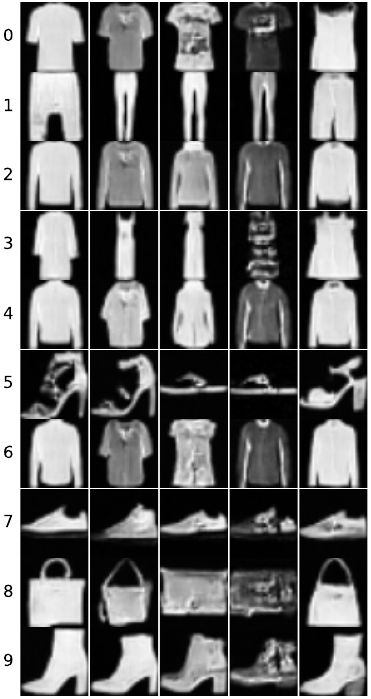}
    \subcaption*{$v$-prediction}
  \end{minipage}%
  \hfill
  \begin{minipage}[b]{0.20\linewidth}
    \centering
    \includegraphics[width=\linewidth]{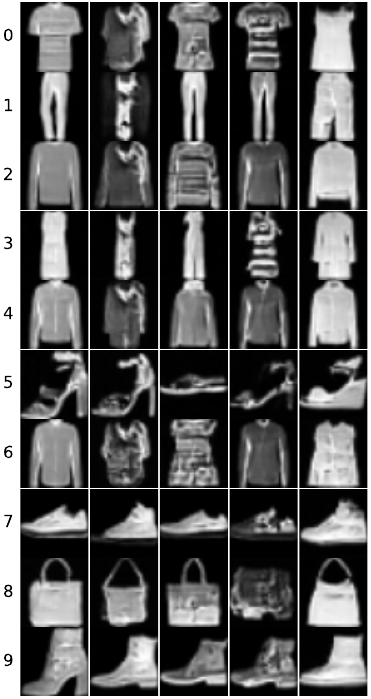}
    \subcaption*{DiffuseVAE}
  \end{minipage}%
  \vspace{2mm}
  \begin{minipage}[b]{0.20\linewidth}
    \centering
    \includegraphics[width=\linewidth]{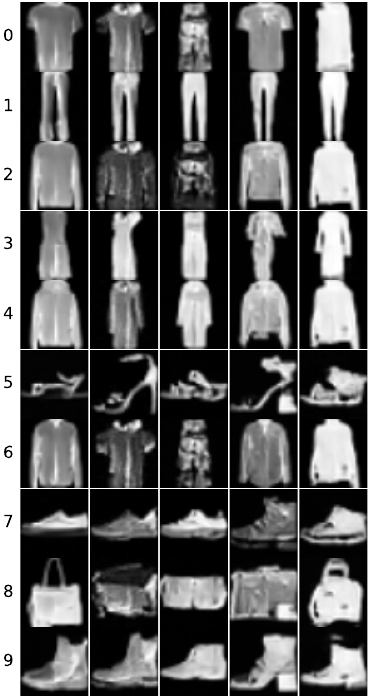}
    \subcaption*{Rectified Flow}
  \end{minipage}%
  \hfill
  \begin{minipage}[b]{0.20\linewidth}
    \centering
    \includegraphics[width=\linewidth]{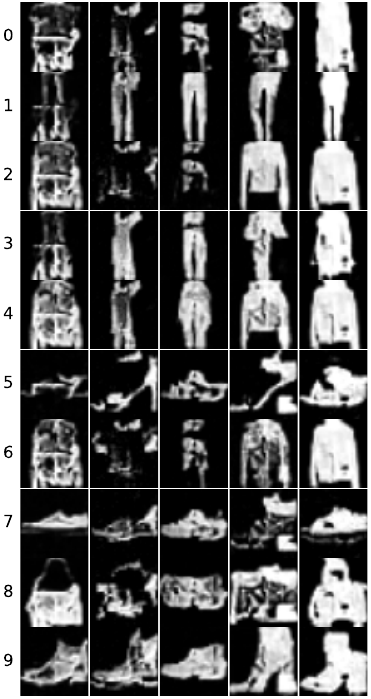}
    \subcaption*{IMM}
  \end{minipage}%
  \hfill
  \begin{minipage}[b]{0.20\linewidth}
    \centering
    \includegraphics[width=\linewidth]{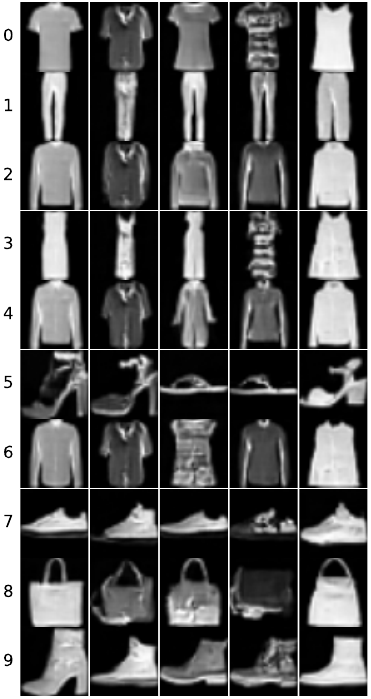}
    \subcaption*{RPD}
  \end{minipage}%
  \hfill
  \begin{minipage}[b]{0.20\linewidth}
    \centering
    \includegraphics[width=\linewidth]{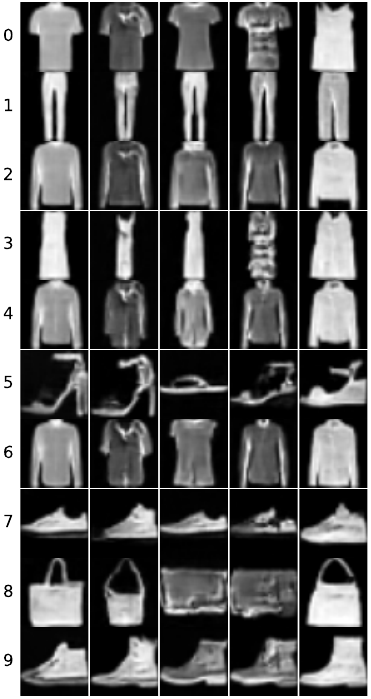}
    \subcaption*{RPD\_vpred}
  \end{minipage}%
  \caption{Samples generated with 50 inference steps on the FMNIST dataset.}
  \label{fig:fmnist_inf_steps_50}
\end{figure}
\begin{figure}[tbp]
  \centering
  \begin{minipage}[b]{0.20\linewidth}
    \centering
    \includegraphics[width=\linewidth]{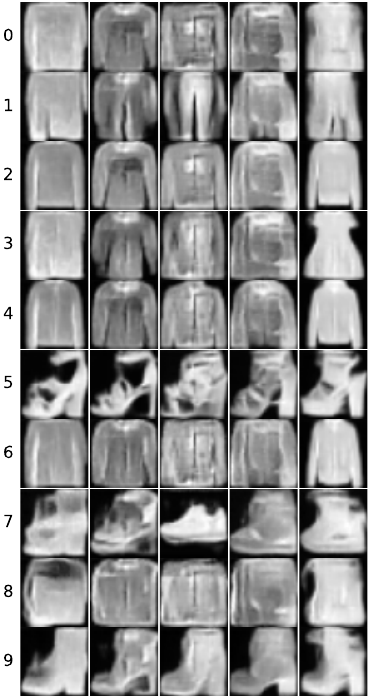}
    \subcaption*{DDPM}
  \end{minipage}%
  \hfill
  \begin{minipage}[b]{0.20\linewidth}
    \centering
    \includegraphics[width=\linewidth]{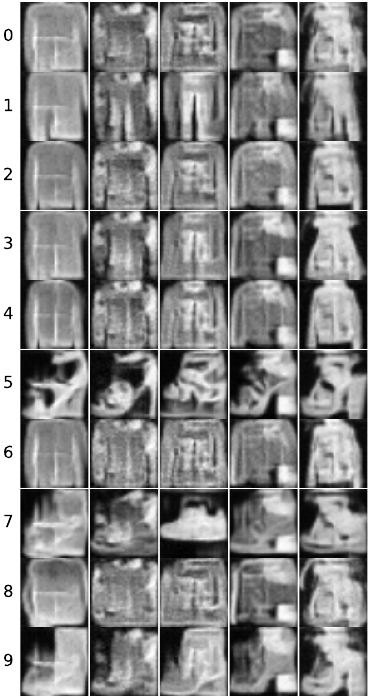}
    \subcaption*{DDIM}
  \end{minipage}%
  \hfill
  \begin{minipage}[b]{0.20\linewidth}
    \centering
    \includegraphics[width=\linewidth]{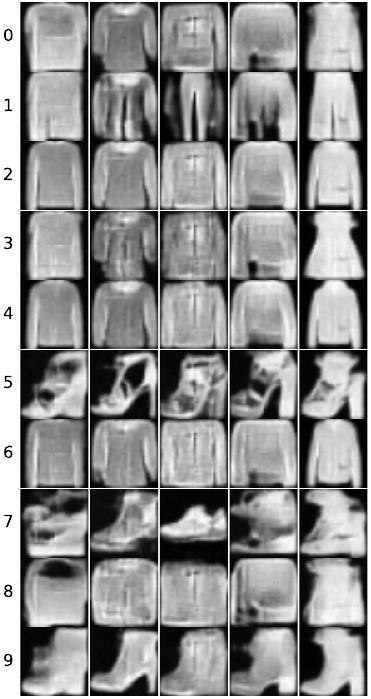}
    \subcaption*{$v$-prediction}
  \end{minipage}%
  \hfill
  \begin{minipage}[b]{0.20\linewidth}
    \centering
    \includegraphics[width=\linewidth]{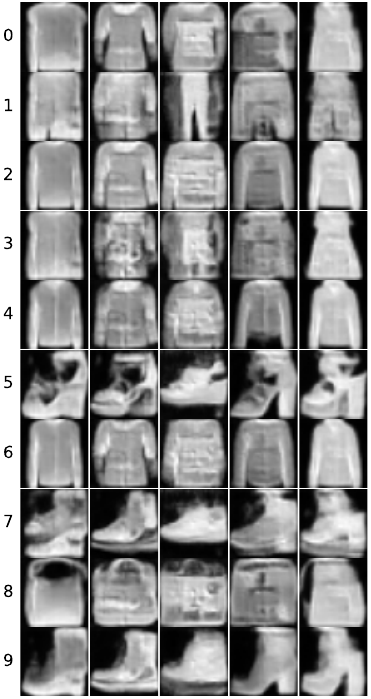}
    \subcaption*{DiffuseVAE}
  \end{minipage}%
  \vspace{2mm}
  \begin{minipage}[b]{0.20\linewidth}
    \centering
    \includegraphics[width=\linewidth]{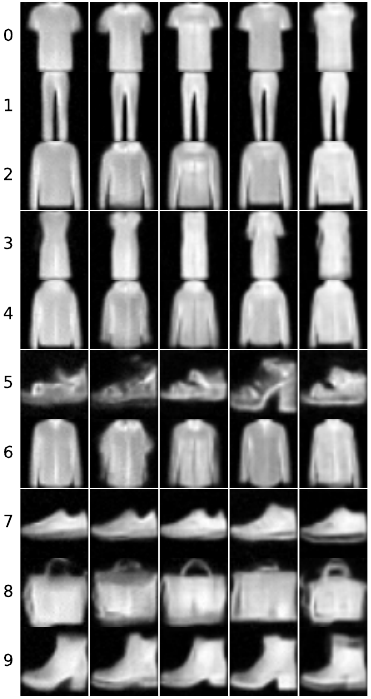}
    \subcaption*{Rectified Flow}
  \end{minipage}%
  \hfill
  \begin{minipage}[b]{0.20\linewidth}
    \centering
    \includegraphics[width=\linewidth]{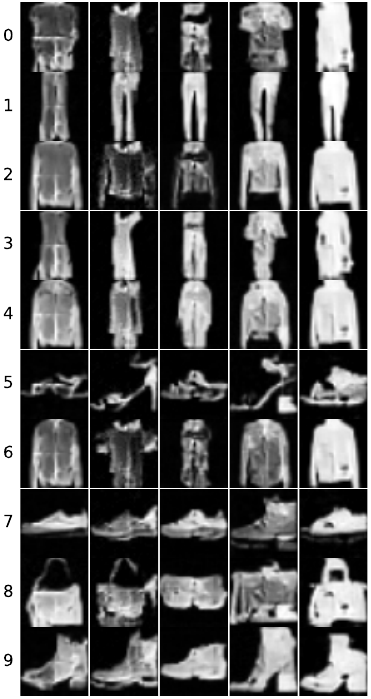}
    \subcaption*{IMM}
  \end{minipage}%
  \hfill
  \begin{minipage}[b]{0.20\linewidth}
    \centering
    \includegraphics[width=\linewidth]{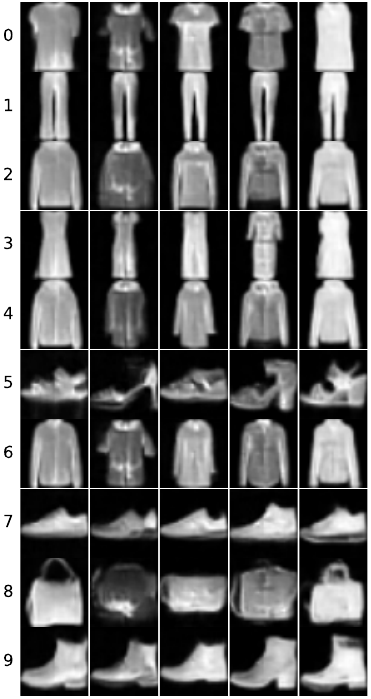}
    \subcaption*{RPD}
  \end{minipage}%
  \hfill
  \begin{minipage}[b]{0.20\linewidth}
    \centering
    \includegraphics[width=\linewidth]{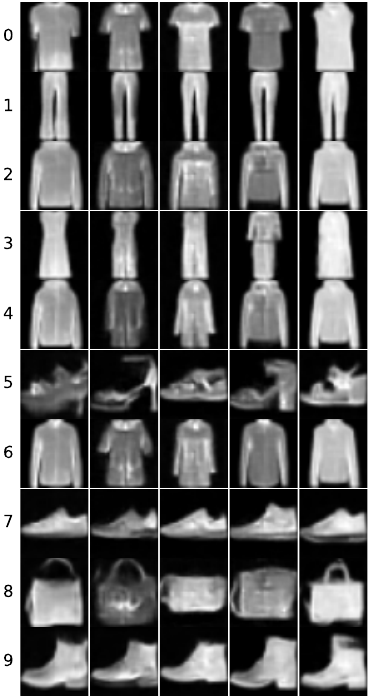}
    \subcaption*{RPD\_vpred}
  \end{minipage}%
  \caption{Samples generated with three inference steps on the FMNIST dataset.}
  \label{fig:fmnist_inf_steps_3}
\end{figure}

\paragraph{$\beta$-VAE results}
Figure~\ref{fig:vae_comparison} shows examples generated by the $\beta$-VAE models that were used as prior models in RPD. For each dataset, the left column displays the mean images produced by the $\beta$-VAE corresponding to $\hat{\mu}(z)$ in \eqref{eq:prior_x0_z}, while the right column shows samples obtained by drawing from the decoded distribution, including the estimated variance. As the figure illustrates, $\beta$-VAE captures coarse structures of the images but fails to express fine details. This limitation arises partly from the relatively small model size used in our experiments and partly from the well-known tendency of VAE-based models trained with pixel-wise squared-error objectives to produce blurry images \citep{dosovitskiy2016generating}.

\begin{figure}[tbp]
  \centering
  \begin{minipage}[b]{0.6\linewidth}
    \centering
    \begin{minipage}[b]{0.48\linewidth}
      \centering
      \includegraphics[width=\linewidth,clip]{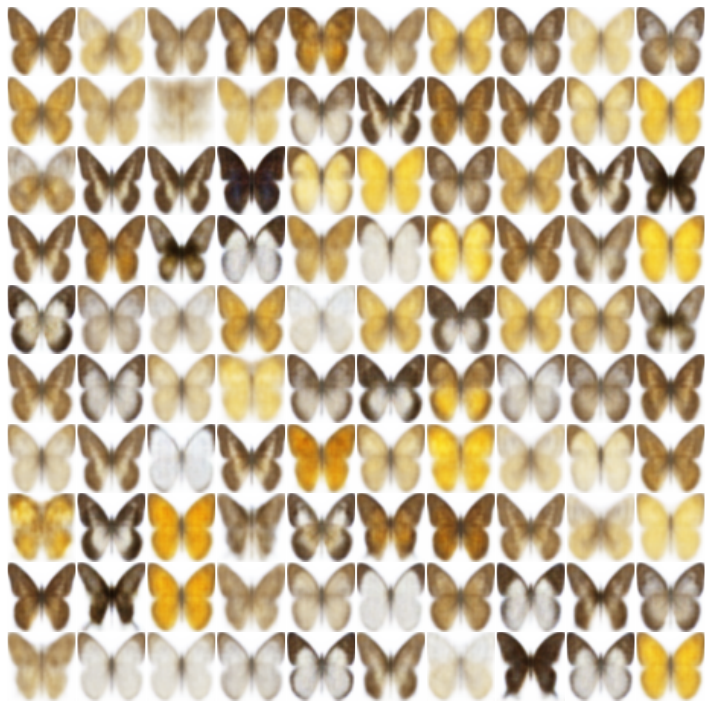}
      \caption*{Average}
      \label{fig:butterflies_VAE_mu}
    \end{minipage}%
    \begin{minipage}[b]{0.48\linewidth}
      \centering
      \includegraphics[width=\linewidth,clip]{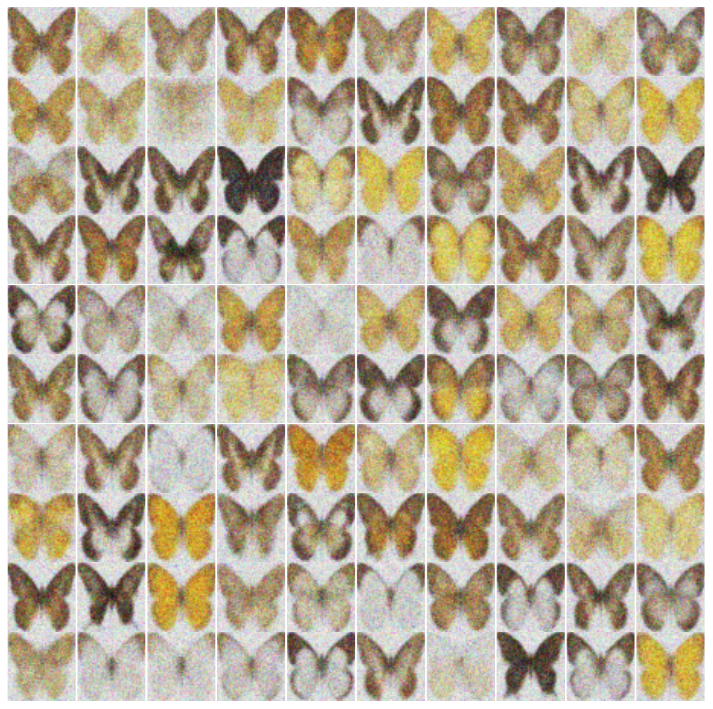}
      \caption*{Sample}
      \label{fig:butterflies_VAE_sample}
    \end{minipage}%
    \subcaption{Butterflies}
  \end{minipage}%
  \hfill
  \begin{minipage}[b]{0.4\linewidth}
    \centering
    \begin{minipage}[b]{0.45\linewidth}
      \centering
      \includegraphics[width=\linewidth,clip]{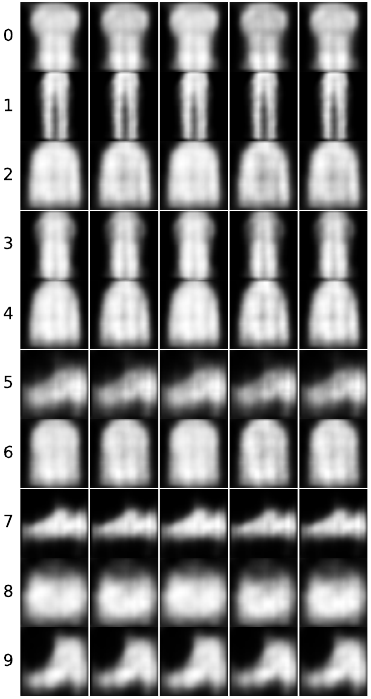}
      \caption*{Average}
      \label{fig:fmnist_VAE_mu}
    \end{minipage}%
    \begin{minipage}[b]{0.45\linewidth}
      \centering
      \includegraphics[width=\linewidth,clip]{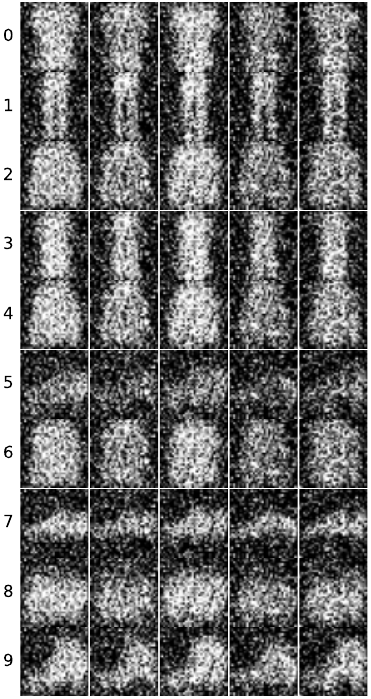}
      \caption*{Sample}
      \label{fig:fmnist_VAE_sample}
    \end{minipage}%
    \subcaption{FMNIST}
  \end{minipage}
  \caption{Samples generated by $\beta$-VAEs.}
  \label{fig:vae_comparison}
\end{figure}

\subsubsection{Quantitative evaluation of image generation}
\paragraph{Butterflies dataset results}
We quantitatively evaluated the generated image distributions using the metrics described in Section~\ref{sec:metric}. The results for the Butterflies dataset are summarized in Table~\ref{tab:butterflies_metric}. In the table, the ``$^\dagger$'' symbol denotes the model achieving the best mean score for each metric, and the ``$^\ddagger$'' symbol indicates models whose mean score lies within twice the standard deviation of the best-scoring model.

The table shows that, across all metrics, either RPD or RPD\_vpred attains the best performance or lies within a range where the difference from the best model is not substantial. Notably, whereas the performance of the other models (except IMM) deteriorates significantly as the number of inference steps decreases, RPD and RPD\_vpred exhibit only minor degradation in KID and maintain relatively strong performance even with as few as three inference steps. Indeed, the KID@3 scores of RPD and RPD\_vpred are comparable to or better than the KID@50 score of the DDPM model.
Both RPD and RPD\_vpred also show a tendency for 1WD to improve as the number of inference steps decreases, possibly because few-step sampling yields only partial corrections by the diffusion process; this result is further analyzed in Section~\ref{sec:visualize_rpd_gen_process}.

\begin{table}[tbp]
  \centering
  \caption{Evaluation results on the Butterflies dataset. For each metric, the number following ``@'' indicates the number of inference steps, and the values in parentheses denote standard deviations.}
  \label{tab:butterflies_metric}
  \scalebox{0.9}{
    \begin{tabular}{lr@{}lr@{}lr@{}lr@{}lr@{}lr@{}l}
      \toprule
      Model          & \multicolumn{1}{c}{KID@3} &             & \multicolumn{1}{c}{KID@10} &            & \multicolumn{1}{c}{KID@50} &            & \multicolumn{1}{c}{1WD@3} &             & \multicolumn{1}{c}{1WD@10} &            & \multicolumn{1}{c}{1WD@50} &             \\
      \midrule
      DDPM           & 0.179 (0.013)             &             & 0.036 (0.007)              &            & 0.018 (0.003)              &            & 141.86 (2.79)             &             & 126.75 (3.00)              &            & 123.64 (3.06)              &             \\
      DDIM           & 0.252 (0.011)             &             & 0.082 (0.013)              &            & 0.033 (0.007)              &            & 147.33 (2.76)             &             & 128.89 (3.32)              &            & 125.45 (2.80)              &             \\
      $v$-prediction & 0.135 (0.016)             &             & 0.021 (0.003)              &            & 0.016 (0.002)              &            & 124.50 (3.31)             &             & 120.51 (4.88)              &            & 120.77 (5.04)              &             \\
      DiffuseVAE     & 0.163 (0.013)             &             & 0.011 (0.002)              &            & 0.010 (0.002)              &            & 142.99 (2.46)             &             & 124.74 (4.68)              &            & 122.49 (5.75)              &             \\
      Rectified Flow & 0.123 (0.020)             &             & 0.025 (0.003)              &            & 0.011 (0.004)              &            & 109.14 (0.90)             &             & 105.81 (1.25)              &            & 95.50 (3.59)               & $^\dagger$  \\
      IMM            & 0.035 (0.011)             &             & 0.024 (0.006)              &            & 0.025 (0.005)              &            & 117.33 (9.07)             &             & 125.83 (9.14)              &            & 138.23 (9.17)              &             \\
      \addlinespace
      RPD            & 0.015 (0.002)             & $^\ddagger$ & 0.005 (0.001)              & $^\dagger$ & 0.004 (0.001)              & $^\dagger$ & 86.99 (1.01)              & $^\dagger$  & 92.81 (0.87)               & $^\dagger$ & 96.03 (1.57)               & $^\ddagger$ \\
      RPD\_vpred     & 0.013 (0.002)             & $^\dagger$  & 0.007 (0.001)              &            & 0.006 (0.001)              &            & 88.70 (0.75)              & $^\ddagger$ & 94.59 (1.23)               &            & 97.28 (2.43)               & $^\ddagger$ \\
      \bottomrule
    \end{tabular}
  }
\end{table}

\paragraph{FMNIST dataset results}
The evaluation results for the FMNIST dataset are summarized in Table~\ref{tab:fmnist_metric}. The meanings of the symbols are identical to those in Table~\ref{tab:butterflies_metric}.
Consistent with the observations on the Butterflies dataset, most of the existing methods achieve good performance when the number of inference steps is set to 50, but their performance deteriorates sharply when the number of steps is reduced to three. IMM shows relatively strong performance even with three steps, and on the FMNIST dataset its evaluation scores tend to improve as the number of inference steps decreases. However, IMM does not necessarily achieve scores comparable to those of the other baselines even when the number of inference steps is increased to 50, a tendency that aligns with the qualitative results shown in Fig.~\ref{fig:fmnist_inf_steps_50}.
Either RPD or RPD\_vpred receives the ``$^\dagger$'' or ``$^\ddagger$'' mark for most metrics, with the exception of CW-1WD@50 and CW-MMD@3. Even for these two metrics, although no mark is assigned, the scores of RPD and RPD\_vpred remain relatively favorable compared with the other models.

\begin{table}[tbp]
  \centering
  \caption{Evaluation results on the FMNIST dataset. For each metric, the number following ``@'' indicates the number of inference steps, and the values in parentheses denote standard deviations.}
  \label{tab:fmnist_metric}
  \scalebox{0.8}{
    \begin{tabular}{lr@{}lr@{}lr@{}lr@{}lr@{}lr@{}l}
      \toprule
      Model          & \multicolumn{1}{c}{CW-1WD@3} &             & \multicolumn{1}{c}{CW-1WD@10} &             & \multicolumn{1}{c}{CW-1WD@50} &            & \multicolumn{1}{c}{CW-MMD@3} &            & \multicolumn{1}{c}{CW-MMD@10} &             & \multicolumn{1}{c}{CW-MMD@50} &             \\
      \midrule
      DDPM           & 18.90 (0.42)                 &             & 9.95 (0.31)                   &             & 9.26 (0.33)                   &            & 0.495 (0.034)                &            & 0.046 (0.007)                 &             & 0.018 (0.005)                 & $^\ddagger$ \\
      DDIM           & 20.10 (0.31)                 &             & 12.90 (0.38)                  &             & 11.69 (0.40)                  &            & 0.470 (0.029)                &            & 0.087 (0.012)                 &             & 0.042 (0.007)                 &             \\
      $v$-prediction & 18.67 (0.33)                 &             & 9.60 (0.18)                   &             & 9.06 (0.09)                   &            & 0.437 (0.024)                &            & 0.033 (0.006)                 & $^\ddagger$ & 0.013 (0.002)                 & $^\ddagger$ \\
      DiffuseVAE     & 18.66 (0.80)                 &             & 10.41 (0.89)                  &             & 9.83 (0.55)                   &            & 0.490 (0.041)                &            & 0.056 (0.017)                 &             & 0.025 (0.006)                 &             \\
      Rectified Flow & 10.96 (0.10)                 &             & 8.88 (0.09)                   & $^\ddagger$ & 8.95 (0.05)                   & $^\dagger$ & 0.246 (0.008)                &            & 0.036 (0.007)                 & $^\ddagger$ & 0.013 (0.004)                 & $^\dagger$  \\
      IMM            & 10.80 (0.74)                 &             & 12.74 (1.09)                  &             & 13.96 (1.24)                  &            & 0.018 (0.007)                & $^\dagger$ & 0.044 (0.014)                 &             & 0.068 (0.017)                 &             \\
      \addlinespace
      RPD            & 9.24 (0.12)                  & $^\ddagger$ & 8.92 (0.26)                   & $^\ddagger$ & 9.38 (0.36)                   &            & 0.080 (0.003)                &            & 0.033 (0.004)                 & $^\ddagger$ & 0.022 (0.005)                 &             \\
      RPD\_vpred     & 9.19 (0.06)                  & $^\dagger$  & 8.82 (0.11)                   & $^\dagger$  & 9.29 (0.23)                   &            & 0.077 (0.004)                &            & 0.026 (0.005)                 & $^\dagger$  & 0.018 (0.004)                 & $^\ddagger$ \\
      \bottomrule
    \end{tabular}
  }
\end{table}

\subsubsection{Visualization of the RPD generation process} \label{sec:visualize_rpd_gen_process}
Figure~\ref{fig:butterflies_rpd_gen_process} illustrates the generation process of RPD trained on the Butterflies dataset. The figure shows results obtained with three and 50 inference steps. Each row corresponds to the progression of inference steps, while each column reflects variability induced by different random seeds. For the case of 50 inference steps, only the outputs at steps 0, 20, 40, and 50 are displayed.
Because RPD uses the output of the prior model as the initial state of the reverse diffusion process, the first row corresponds to samples generated by the $\beta$-VAE prior model used in this experiment. As the figure suggests, the prior model determines the coarse structure and color tone of the images, and the diffusion process progressively refines these initial samples, generating increasingly detailed images.

\begin{figure}[tbp]
  \begin{minipage}[b]{0.49\linewidth}
    \centering
    \includegraphics[width=\linewidth]{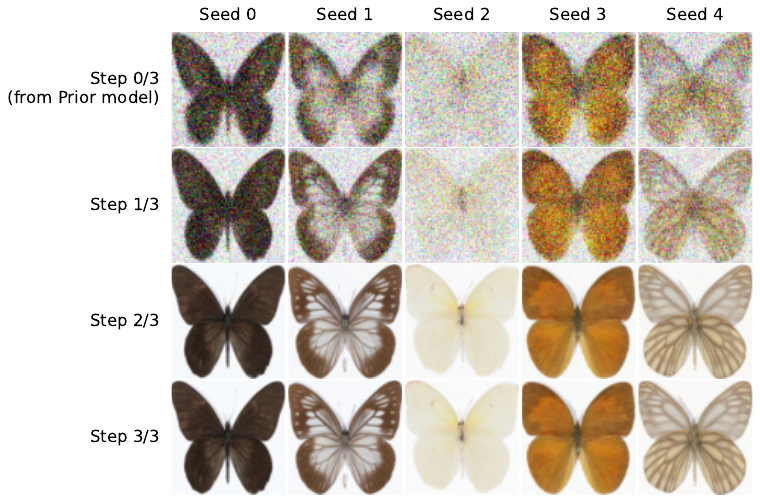}
    \subcaption{Results with three inference steps}
  \end{minipage}%
  \hfill
  \begin{minipage}[b]{0.49\linewidth}
    \centering
    \includegraphics[width=\linewidth]{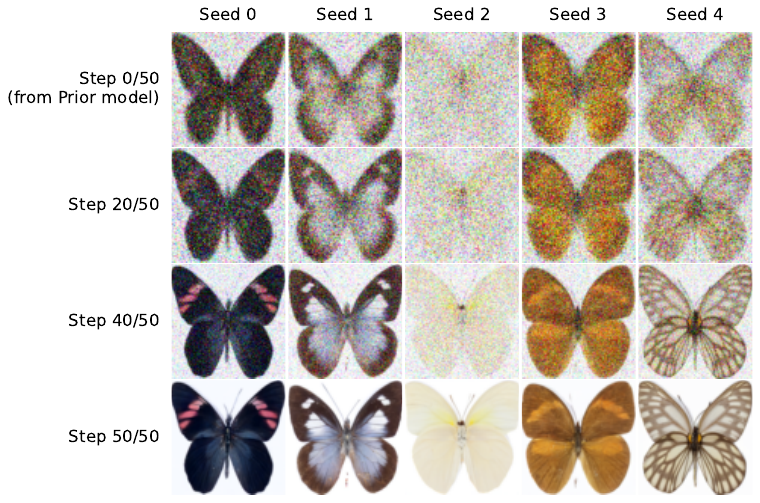}
    \subcaption{Results with 50 inference steps}
  \end{minipage}%
  \caption{Visualization of the RPD generation process on the Butterflies dataset. Panel (a) shows results with three inference steps, and panel (b) shows results with 50 inference steps (displaying only steps 0, 20, 40, and 50).}
  \label{fig:butterflies_rpd_gen_process}
\end{figure}

To further investigate how the RPD generation process depends on the number of inference steps, we fixed the latent variable $z$ sampled from $\hat{p}(z)$ in \eqref{eq:rev_1} and varied only the random seed used in the subsequent generation procedure. The resulting samples are shown in Fig.~\ref{fig:butterflies_rpd_gen_process_same_prior_sample}. From Fig.~\ref{fig:butterflies_rpd_gen_process_same_prior_sample}(a), we observe that with three inference steps, the final generated images exhibit only minor variations. In contrast, as shown in Fig.~\ref{fig:butterflies_rpd_gen_process_same_prior_sample}(b), when the number of inference steps is increased to 50, the generated images become more diverse while still preserving the global structure present at step 0. These results indicate that although RPD can already produce high-fidelity images with as few as three inference steps, increasing the number of steps to 50 enables the model to generate images that are both more diverse and of higher quality.

\begin{figure}[tbp]
  \begin{minipage}[b]{0.49\linewidth}
    \centering
    \includegraphics[width=\linewidth]{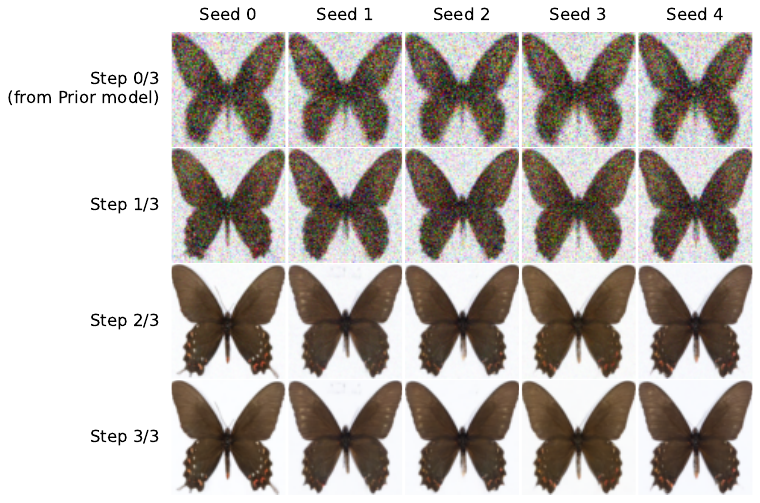}
    \subcaption{Results with three inference steps}
  \end{minipage}%
  \hfill
  \begin{minipage}[b]{0.49\linewidth}
    \centering
    \includegraphics[width=\linewidth]{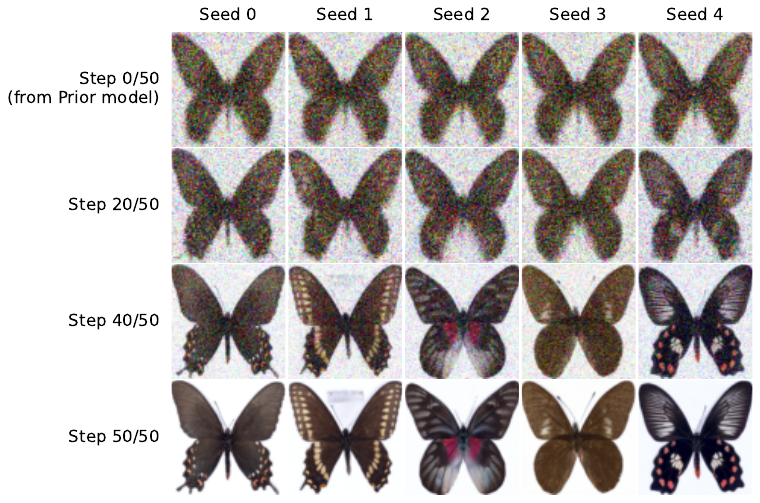}
    \subcaption{Results with 50 inference steps}
  \end{minipage}%
  \caption{RPD generation process on the Butterflies dataset using a common latent variable $z$ sampled from $\hat{p}(z)$. Panel (a) presents the results obtained with three inference steps, and panel (b) presents those obtained with 50 inference steps (showing only steps 0, 20, 40, and 50).}
  \label{fig:butterflies_rpd_gen_process_same_prior_sample}
\end{figure}

\subsubsection{Investigating the impact of model sizes} \label{sec:model_size}
This section investigates how changes in the model size of the diffusion networks and the prior model used in RPD (a $\beta$-VAE in our experiments) affect generation performance. Using the Butterflies dataset, we constructed a smaller diffusion model by modifying the \texttt{UNet2DModel} configuration. Specifically, we updated the model settings from those in Table~\ref{tab:unet_configs} as follows: we reduced \texttt{block\_out\_channels} from (128, 128, 256, 256, 512, 512) to (96, 96, 192, 192, 384) and removed unnecessary \texttt{DownBlock2D} and \texttt{UpBlock2D} layers from the corresponding \texttt{down\_block\_types} and \texttt{up\_block\_types}. We refer to DDPM and RPD using this smaller \texttt{UNet2DModel} as DDPM\_s and RPD\_s, respectively.
In addition, we constructed a reduced-size version of $\beta$-VAE by modifying its ResNet components to decrease the number of parameters. We refer to this smaller prior model as $\beta$-VAE\_s.

Table~\ref{tab:butterflies_compare_small_models} compares the standard and reduced-size models for both DDPM and RPD. For each of RPD and RPD\_s, the table reports performance when using either $\beta$-VAE or $\beta$-VAE\_s as the prior model. In addition, Table~\ref{tab:butterflies_vae} summarizes the performance of $\beta$-VAE and $\beta$-VAE\_s.
The total number of model parameters is also listed in these tables (for RPD models, the total includes the parameters of the prior model). From these parameter counts, we observe that DDPM\_s reduces the number of parameters to roughly one third of that of DDPM, and that $\beta$-VAE\_s contains approximately 20\% of the parameters of $\beta$-VAE.

From Table~\ref{tab:butterflies_compare_small_models}, we observe that DDPM\_s exhibits degradation across all metrics compared with DDPM, indicating that reducing the model size directly leads to decreased performance.
In contrast, for RPD\_s (w/ $\beta$-VAE), where the prior model is kept fixed while the diffusion network is downsized, modest degradation is observed in a few metrics, but the overall performance remains largely comparable to that of the full-sized model.
Notably, when comparing DDPM with RPD\_s (w/ $\beta$-VAE), the latter achieves superior results across all metrics, suggesting that RPD can maintain performance comparable to or better than standard DDPM even under significantly reduced model capacity.
Furthermore, as discussed above, RPD maintains stable performance even when the number of inference steps is reduced to 3, and this behavior is consistently observed even for smaller RPD models. Since smaller models generally require less inference time, RPD has the potential to further reduce overall generation time by jointly reducing the model size and the number of inference steps.

\begin{table}[t]
  \centering
  \caption{Performance comparison under different model sizes on the Butterflies dataset}
  \label{tab:butterflies_compare_small_models}
  \scalebox{0.8}{
    \begin{tabular}{lcrrrrrr}
      \toprule
      Model                      & \#Params [M] & \multicolumn{1}{c}{KID@3} & \multicolumn{1}{c}{KID@10} & \multicolumn{1}{c}{KID@50} & \multicolumn{1}{c}{1WD@3} & \multicolumn{1}{c}{1WD@10} & \multicolumn{1}{c}{1WD@50} \\
      \midrule
      DDPM                       & 113.7        & 0.179 (0.013)             & 0.036 (0.007)              & 0.018 (0.003)              & 141.86 (2.79)             & 126.75 (3.00)              & 123.64 (3.06)              \\
      DDPM\_s                    & 40.2         & 0.268 (0.089)             & 0.062 (0.025)              & 0.026 (0.006)              & 150.29 (3.50)             & 138.14 (1.61)              & 137.06 (5.71)              \\
      \addlinespace
      RPD (w/ $\beta$-VAE)       & 122.9        & 0.015 (0.002)             & 0.005 (0.001)              & 0.004 (0.001)              & 86.99 (1.01)              & 92.81 (0.87)               & 96.03 (1.57)               \\
      RPD (w/ $\beta$-VAE\_s)    & 115.4        & 0.025 (0.003)             & 0.008 (0.001)              & 0.007 (0.002)              & 86.85 (0.82)              & 91.99 (0.95)               & 95.81 (1.83)               \\
      RPD\_s (w/ $\beta$-VAE)    & 49.4         & 0.020 (0.003)             & 0.009 (0.001)              & 0.007 (0.001)              & 86.77 (0.97)              & 90.44 (0.50)               & 93.88 (1.37)               \\
      RPD\_s (w/ $\beta$-VAE\_s) & 41.9         & 0.035 (0.005)             & 0.017 (0.004)              & 0.014 (0.003)              & 86.59 (0.63)              & 89.60 (0.97)               & 93.07 (2.39)               \\
      \bottomrule
    \end{tabular}
  }
\end{table}
\begin{table}[t]
  \centering
  \caption{Evaluation results for the VAE prior models on the Butterflies dataset}
  \label{tab:butterflies_vae}
  \scalebox{0.8}{
    \begin{tabular}{lcrr}
      \toprule
      \addlinespace
      Model          & \#Params [M] & \multicolumn{1}{c}{KID} & \multicolumn{1}{c}{1WD} \\
      \midrule
      $\beta$-VAE    & 9.2          & 0.338 (0.013)           & 178.64 (0.77)           \\
      $\beta$-VAE\_s & 1.7          & 0.384 (0.010)           & 179.03 (0.18)           \\
      \bottomrule
    \end{tabular}
  }
\end{table}

\paragraph{Assessing the impact of prior model size}
As shown in Table~\ref{tab:butterflies_compare_small_models}, when the capacity of the prior model is reduced while keeping the size of the RPD diffusion network fixed (RPD (w/ $\beta$-VAE\_s)), KID@3 exhibits more significant degradation than in RPD\_s (w/ $\beta$-VAE), despite the latter having a substantially smaller overall model size. This indicates that reducing the capacity of the prior model negatively affects generation quality in few-step sampling regimes.

To investigate this behavior in detail, Fig.~\ref{fig:butterflies_rpd_small_prior} presents generation results obtained using $\beta$-VAE or $\beta$-VAE\_s as prior models in RPD.
As can be seen in the figure, the images generated with $\beta$-VAE\_s exhibit reduced color diversity compared with those generated with $\beta$-VAE, suggesting that the smaller prior model’s limited capacity affects the quality of its outputs.
In the lower row, where $\beta$-VAE\_s is used as the prior model, the images generated by RPD also show reduced color diversity, particularly when the number of inference steps is three. As shown in Fig.~\ref{fig:butterflies_rpd_gen_process_same_prior_sample}, RPD tends to produce samples close to those of the prior model when the number of inference steps is small. Consequently, when a prior model with limited representational capacity is used, its influence is expected to persist strongly in the RPD outputs under few-step sampling.
However, even when $\beta$-VAE\_s is used as the prior model, increasing the number of inference steps to 50 improves the diversity of the generated images, including color variation. This is consistent with Fig.~\ref{fig:butterflies_rpd_gen_process_same_prior_sample}, which shows that a larger number of inference steps enables RPD to generate images that deviate from the expressiveness of the prior model.
These observations indicate that when RPD is used under few-step sampling regimes, it is important to employ a prior model with sufficient representational capacity.

\begin{figure}[htbp]
  \centering
  \begin{minipage}[b]{0.33\linewidth}
    \centering
    \includegraphics[width=\linewidth]{butterflies_VAE_sample.pdf}
    \subcaption*{$\beta$-VAE}
  \end{minipage}%
  \hfill
  \begin{minipage}[b]{0.33\linewidth}
    \centering
    \includegraphics[width=\linewidth]{butterflies_RPD_inf_steps_3.pdf}
    \subcaption*{RPD (w/ $\beta$-VAE)@3}
  \end{minipage}%
  \hfill
  \begin{minipage}[b]{0.33\linewidth}
    \centering
    \includegraphics[width=\linewidth]{butterflies_RPD_inf_steps_50.pdf}
    \subcaption*{RPD (w/ $\beta$-VAE)@50}
  \end{minipage}%
  \vspace{2mm}
  \begin{minipage}[b]{0.33\linewidth}
    \centering
    \includegraphics[width=\linewidth]{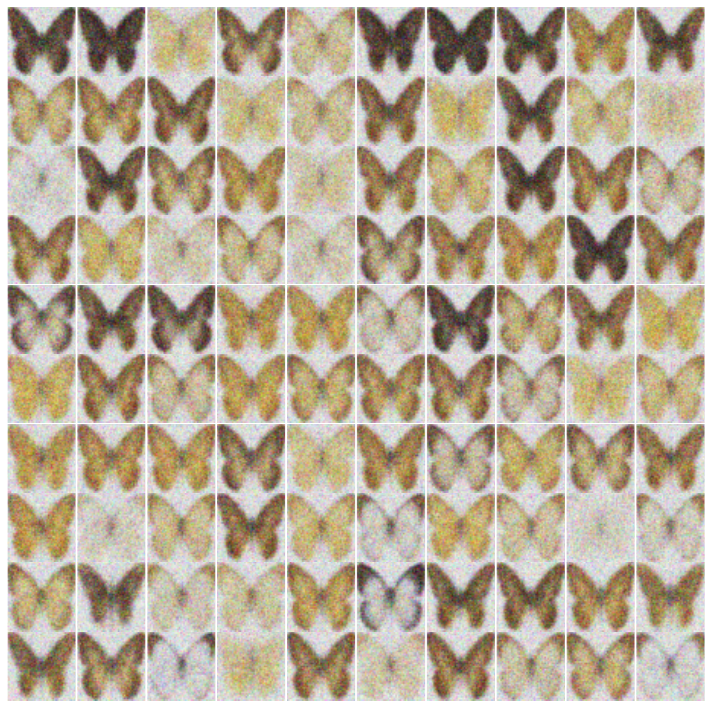}
    \subcaption*{$\beta$-VAE\_s}
  \end{minipage}%
  \hfill
  \begin{minipage}[b]{0.33\linewidth}
    \centering
    \includegraphics[width=\linewidth]{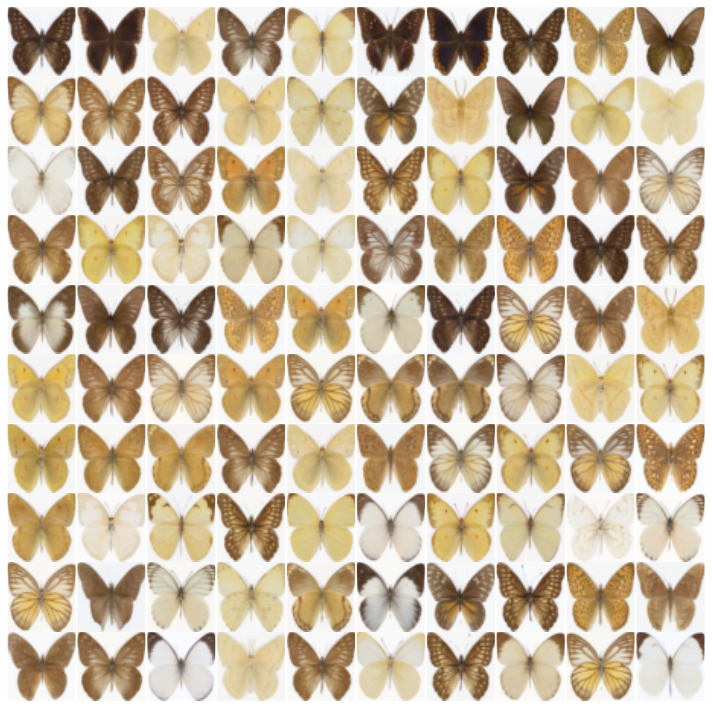}
    \subcaption*{RPD (w/ $\beta$-VAE\_s)@3}
  \end{minipage}%
  \hfill
  \begin{minipage}[b]{0.33\linewidth}
    \centering
    \includegraphics[width=\linewidth]{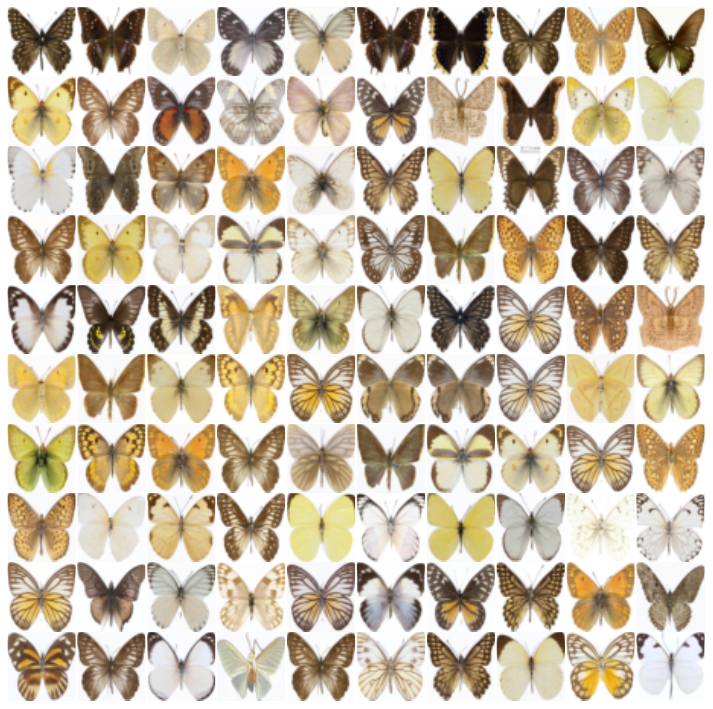}
    \subcaption*{RPD (w/ $\beta$-VAE\_s)@50}
  \end{minipage}%
  \caption{Generation results obtained with RPD and its prior models. The upper and lower rows show the results when $\beta$-VAE and $\beta$-VAE\_s are used as prior models, respectively. The number following ``@'' indicates the number of inference steps used in the RPD generation process.}
  \label{fig:butterflies_rpd_small_prior}
\end{figure}

\section{Conclusion and future work} \label{sec:conclusion}

We introduced RPD, a new probabilistic generative framework that represents the coarse structure of a data distribution using a prior model and learns the residual discrepancy between the prior and the true data distribution through a diffusion model. We formulated RPD as a variational inference framework based on the ELBO and derived training and inference algorithms under both noise- and velocity-prediction parameterizations. We also introduced auxiliary variables that support prediction in RPD and demonstrated, both theoretically and empirically, that these variables facilitate the learning process.
Through numerical experiments, we verified the advantages of RPD over existing models across two types of task: two-dimensional hetero-scale data and natural image generation. For hetero-scale data, we confirmed that RPD can capture fine-grained local structures, particularly when the scales of local distributions are much smaller than the global scale. In natural image generation, we showed that incorporating a prior model improves image quality and, notably, that RPD can generate high-fidelity images even when the number of inference steps is drastically reduced.

Several directions remain for future work.
A first avenue is to deepen the theoretical understanding of RPD. For example, one may analyze RPD from the perspective of the diffusion model regimes discussed in \citep{biroli2024dynamical}. \citet{biroli2024dynamical} theoretically showed that the generation dynamics of diffusion models can be divided into three distinct regimes. In RPD, the use of a prior model may effectively shorten the initial regime (i.e., the regime before the speciation transition), suggesting that the regime structure of RPD could differ meaningfully from that of standard diffusion models. A more precise characterization of how the prior model reshapes these dynamical regimes would contribute to a deeper theoretical understanding of the proposed framework.
From an empirical standpoint, the present study evaluated RPD primarily on relatively small-scale datasets. While these experiments are sufficient to demonstrate the core advantages of residual prior modeling---particularly in regimes with strong scale mismatch---assessing the scalability of RPD on substantially larger and more diverse datasets remains an important direction for future work. Such evaluations are necessary to clarify how the interaction between the prior model and the diffusion process behaves in high-capacity settings and to understand potential trade-offs between prior expressiveness, diffusion capacity, and computational efficiency at scale.
From a more application-oriented perspective, another promising direction is to further advance conditional generation with RPD. As demonstrated in our experiments with the FMNIST dataset, RPD naturally supports conditioning in both the prior model and the diffusion process. In image generation, for instance, conditioning the prior model could control coarse global attributes such as scene layout or high-level semantic categories, while conditioning the diffusion process could refine fine-grained details such as textures and local appearance. This separation suggests a hierarchical form of conditional control that is difficult to realize within standard diffusion models.
Moreover, models such as $\beta$-VAE are known to yield interpretable, factorized latent representations as a consequence of training \citep{higgins2017betavae}. Using such models as priors for RPD would allow these factorized representations to be directly inherited by the diffusion process, potentially enabling structured and interpretable control over generation.

\if0

\fi

\appendix

\section{Formula derivations} \label{apdx:proofs}

\subsection{Derivation of ELBO for RPD} \label{apdx:elbo}
\begin{align*}
  \log p_\theta(x_0) & \geq \mathbb{E}_{q(x_{1:T}, z \mid x_0)}\left[\log \frac{p_\theta(x_{0:T}, z)}{q(x_{1:T}, z \mid x_0)}\right]                                                                                                                                                                                                        \\
                     & = \mathbb{E}_{q(x_{1:T}, z \mid x_0)}\left[\log \frac{\hat{p}(z) \hat{p}(x_T \mid z) \prod_{t=1}^T p_\theta(x_{t-1}|x_t, z)}{\hat{q}(z \mid x_0) \prod_{t=1}^T q(x_{t} \mid x_{t-1}, z)}\right]                                                                                                                      \\
                     & = \mathbb{E}_{q(x_{1:T}, z \mid x_0)}\left[\log \frac{\hat{p}(z) \hat{p}(x_T \mid z) p_\theta(x_0 \mid x_1, z) \prod_{t=2}^T p_\theta(x_{t-1}|x_t, z)}{\hat{q}(z \mid x_0) q(x_1 \mid x_0, z) \prod_{t=2}^T q(x_{t} \mid x_{t-1}, z)}\right]                                                                         \\
                     & = \mathbb{E}_{q(x_{1:T}, z \mid x_0)}\left[\log \frac{\hat{p}(z) \hat{p}(x_T \mid z) p_\theta(x_0 \mid x_1, z) }{\hat{q}(z \mid x_0) q(x_1 \mid x_0, z)} + \log \prod_{t=2}^T  \frac{p_\theta(x_{t-1}|x_t, z)}{q(x_{t} \mid x_{t-1}, z)}\right]                                                                      \\
                     & = \mathbb{E}_{q(x_{1:T}, z \mid x_0)}\left[\log \frac{\hat{p}(z) \hat{p}(x_T \mid z) p_\theta(x_0 \mid x_1, z) }{\hat{q}(z \mid x_0) q(x_1 \mid x_0, z)} + \log \prod_{t=2}^T  \frac{p_\theta(x_{t-1}|x_t, z)}{q(x_{t} \mid x_{t-1}, x_0, z)}\right]  \quad \text{(Conditioning trick)}                              \\
                     & = \mathbb{E}_{q(x_{1:T}, z \mid x_0)}\left[\log \frac{\hat{p}(z) \hat{p}(x_T \mid z) p_\theta(x_0 \mid x_1, z) }{\hat{q}(z \mid x_0) q(x_1 \mid x_0, z)} + \log \prod_{t=2}^T  \frac{p_\theta(x_{t-1}|x_t, z)}{\frac{q(x_{t-1} \mid x_{t}, x_0, z) \cancel{q(x_t \mid x_0, z)}}{\cancel{q(x_{t-1}|x_0,z)}}}\right]   \\
                     & = \mathbb{E}_{q(x_{1:T}, z \mid x_0)}\left[\log \frac{\hat{p}(z) \hat{p}(x_T \mid z) p_\theta(x_0 \mid x_1, z) }{\hat{q}(z \mid x_0) \cancel{q(x_1 \mid x_0, z)}} + \log \frac{\cancel{q(x_1|x_0, z)}}{q(x_T|x_0, z)} + \log \prod_{t=2}^T  \frac{p_\theta(x_{t-1}|x_t, z)}{q(x_{t-1} \mid x_{t}, x_0, z)}\right]    \\
                     & = \mathbb{E}_{q(x_{1:T}, z \mid x_0)}\left[\log \frac{\hat{p}(z) \hat{p}(x_T \mid z) p_\theta(x_0 \mid x_1, z) }{\hat{q}(z \mid x_0) q(x_T|x_0, z)} + \sum_{t=2}^T \log \frac{p_\theta(x_{t-1}|x_t, z)}{q(x_{t-1} \mid x_{t}, x_0, z)}\right]                                                                        \\
                     & = \mathbb{E}_{\hat{q}(z \mid x_0)}\left[\log \frac{\hat{p}(z)}{\hat{q}(z \mid x_0)}\right] + \mathbb{E}_{q(x_T, z \mid x_0)}\left[\log \frac{\hat{p}(x_T \mid z)}{q(x_T|x_0, x)}\right] + \mathbb{E}_{q(x_1, z \mid x_0)}\left[\log p_\theta(x_0 \mid x_1, z)\right] \notag                                          \\
                     & \qquad  + \sum_{t=2}^T \mathbb{E}_{q(x_t, x_{t-1}, z \mid x_0)}\left[\log \frac{p_\theta(x_{t-1}|x_t, z)}{q(x_{t-1} \mid x_{t}, x_0, z)}\right]                                                                                                                                                                      \\
                     & = \mathbb{E}_{\hat{q}(z \mid x_0)}\left[\log \frac{\hat{p}(z)}{\hat{q}(z \mid x_0)}\right] + \mathbb{E}_{\hat{q}(z \mid x_0) q(x_T \mid x_0, z)}\left[\log \frac{\hat{p}(x_T \mid z)}{q(x_T|x_0, x)}\right] + \mathbb{E}_{\hat{q}(z \mid x_0) q(x_1 \mid x_0, z)}\left[\log p_\theta(x_0 \mid x_1, z)\right]  \notag \\
                     & \qquad  + \sum_{t=2}^T \mathbb{E}_{\hat{q}(z \mid x_0) q(x_t \mid x_0, z)} \mathbb{E}_{q(x_{t-1} \mid x_t, x_0, z)}\left[\log \frac{p_\theta(x_{t-1}|x_t, z)}{q(x_{t-1} \mid x_{t}, x_0, z)}\right]                                                                                                                  \\
                     & = - D_{\mathrm{KL}} \left(\hat{q}(z \mid x_0) \ || \ \hat{p}(z)\right) - \mathbb{E}_{\hat{q}(z \mid x_0)}\left[ D_{\mathrm{KL}} \left(q(x_T \mid x_0, z) \ || \ \hat{p}(x_T \mid z)\right) \right] \notag                                                                                                            \\
                     & \qquad  + \mathbb{E}_{\hat{q}(z \mid x_0) q(x_1 \mid x_0, z)}\left[\log p_\theta(x_0 \mid x_1, z)\right]   \notag                                                                                                                                                                                                    \\
                     & \qquad \qquad - \sum_{t=2}^T \mathbb{E}_{\hat{q}(z \mid x_0) q(x_{t} \mid x_0, z)} \left[ D_{\mathrm{KL}}\left(q(x_{t-1} \mid x_{t}, x_0, z) \ || \ p_\theta(x_{t-1}|x_t, z) \right)\right].
\end{align*}

\subsection{Derivation of $q(x_t \mid x_0, z)$} \label{apdx:q_xt_x0_z}
Let $\{\epsilon_t^\ast, \epsilon_t\}_{t=0}^{T-1} \overset{\mathrm{iid}}{\sim} \mathcal{N}(\epsilon \mid 0, I)$ and $z \sim \hat{q}(z \mid x_0)$.
Then, for $t = 1, \ldots, T$, the following relations hold:
\begin{align}
  x_t & = \sqrt{\alpha_t} x_{t-1} + \left(1 - \sqrt{\alpha_t}\right) \hat{\mu}(z) +\sqrt{\beta_t \hat{\sigma}^2(z)} \epsilon_{t-1}                                                                                                                                                \notag                                    \\
      & = \sqrt{\alpha_t} x_{t-1} + \sqrt{\left(1-\alpha_t\right) \hat{\sigma}^2(z)} \epsilon_{t-1} + \left(1 -\sqrt{\alpha_t}\right) \hat{\mu}(z)                                                                                                     \notag                                                               \\
      & = \sqrt{\alpha_t} \left(\sqrt{\alpha_{t-1}} x_{t-2} +\sqrt{\left(1 - \alpha_{t-1}\right) \hat{\sigma}^2(z)} \epsilon_{t-2}^\ast + \left(1 - \sqrt{\alpha_{t-1}}\right) \hat{\mu}(z)\right) + \sqrt{\left(1-\alpha_t\right) \hat{\sigma}^2(z)} \epsilon_{t-1} + \left(1 - \sqrt{\alpha_t}\right) \hat{\mu}(z) \notag \\
      & = \sqrt{\alpha_t \alpha_{t-1}} x_{t-2} + \sqrt{\alpha_t - \alpha_t\alpha_{t-1}} \sqrt{\hat{\sigma}^2(z)} \epsilon_{t-2}^\ast + \sqrt{1-\alpha_t} \sqrt{\hat{\sigma}^2(z)} \epsilon_{t-1} + \left(1 - \sqrt{\alpha_t \alpha_{t-1}} \right) \hat{\mu}(z) \label{eq:merge_epsilon1}                                    \\
      & = \sqrt{\alpha_t \alpha_{t-1}} x_{t-2} + \sqrt{1 - \alpha_t \alpha_{t-1}} \sqrt{\hat{\sigma}^2(z)} \epsilon_{t-2}  + \left(1 - \sqrt{\alpha_t \alpha_{t-1}} \right) \hat{\mu}(z)  \label{eq:merge_epsilon2}                                                                                                         \\
      & = \ldots                                                                                                                                                                                                                      \notag                                                                                \\
      & = \sqrt{\prod_{i=1}^t \alpha_i} x_0 + \sqrt{1 - \prod_{i=1}^t \alpha_i} \sqrt{\hat{\sigma}^2(z)} \epsilon_0 + \left(1 - \sqrt{\prod_{i=1}^t \alpha_i}\right) \hat{\mu}(z)                   \notag                                                                                                                  \\
      & = \sqrt{\bar{\alpha}_t} x_0 + \left(1 - \sqrt{\bar{\alpha}_t}\right) \hat{\mu}(z) + \sqrt{\left(1 - \bar{\alpha}_t\right) \hat{\sigma}^2(z)} \epsilon_0. \notag
\end{align}
The transformation from \eqref{eq:merge_epsilon1} to \eqref{eq:merge_epsilon2} follows the derivation presented in \citep{luo2022understanding}.

\subsection{Derivation of $q(x_{t-1} \mid x_t, x_0, z)$} \label{apdx:q_xt-1_xt_x0_z}
For $t = 2, \ldots, T$, the following holds.
\begin{align*}
  q(x_{t-1} \mid x_{t}, x_0, z) & = \frac{q(x_t \mid x_{t-1}, x_0, z)q(x_{t-1} \mid x_0, z)}{q(x_t \mid x_0, z)}                                                                                                                          \\
                                & \propto q(x_t \mid x_{t-1}, x_0, z)q(x_{t-1} \mid x_0, z)                                                                                                                                               \\
                                & = \mathcal{N}\left(x_t \ \middle| \ \sqrt{\alpha_t}x_{t-1} + \left(1 - \sqrt{\alpha_t}\right) \hat{\mu}(z), (1 - \alpha_t) \hat{\sigma}^2(z) I_n)\right) \notag                                         \\
                                & \qquad \mathcal{N}\left(x_{t-1} \ \middle| \ \sqrt{\bar{\alpha}_{t-1}} x_0  + \left(1 - \sqrt{\bar{\alpha}_{t-1}}\right) \hat{\mu}(z), \left(1 - \bar{\alpha}_{t-1}\right) \hat{\sigma}^2(z) I_n\right) \\
                                & = \mathcal{N}\left(x_{t-1} \ \middle| \ \frac{1}{\sqrt{\alpha_t}} x_{t} - \frac{1 - \sqrt{\alpha_t}}{\sqrt{\alpha_t}} \hat{\mu}(z), \frac{1 - \alpha_t}{\alpha_t} \hat{\sigma}^2(z) I)\right)  \notag   \\
                                & \qquad \mathcal{N}\left(x_{t-1} \ \middle| \ \sqrt{\bar{\alpha}_{t-1}} x_0  + \left(1 - \sqrt{\bar{\alpha}_{t-1}}\right) \hat{\mu}(z), \left(1 - \bar{\alpha}_{t-1}\right) \hat{\sigma}^2(z) I_n\right) \\
                                & = \mathcal{N}\left(x_{t-1} \ \middle| \ \tilde{\mu}(x_t, x_0, z), \tilde{\beta}_t(z) I \right),
\end{align*}
where $\tilde{\mu}(x_t, x_0, z)$ and $\tilde{\beta}_t(z)$ are obtained by applying the standard formula for the product of two Gaussian distributions, yielding the following expressions.
\begin{align*}
  \tilde{\mu}(x_t, x_0, z) & = \frac{\left((1-\bar{\alpha}_{t-1})\left(\frac{1}{\sqrt{\alpha}_t}x_t - \frac{1 - \sqrt{\alpha_t}}{\sqrt{\alpha_t}} \hat{\mu}(z)\right) + \frac{1 - \alpha_t}{\alpha_t}\left(\sqrt{\bar{\alpha}_{t-1}} x_0  + \left(1 - \sqrt{\bar{\alpha}_{t-1}}\right) \hat{\mu}(z)\right)\right)}{\frac{1}{\frac{1 - \alpha_t}{\alpha_t} + (1 - \bar{\alpha}_{t-1})}} \\
                           & = \frac{\sqrt{\alpha_t} (1-\bar{\alpha}_{t-1})}{1 - \bar{\alpha}_t} x_t + \frac{(1 - \alpha_t)\sqrt{\bar{\alpha}_{t-1}}}{1 - \bar{\alpha}_t} x_0 + \nu_t \hat{\mu}(z),                                                                                                                                                                                    \\
  \tilde{\beta}_t(z)       & = \frac{\frac{1-\alpha_t}{\alpha_t} \hat{\sigma}^2(z) \left(1-\bar{\alpha}_{t-1}\right) \hat{\sigma}^2(z)}{\frac{1-\alpha_t}{\alpha_t} \hat{\sigma}^2(z) + \left(1-\bar{\alpha}_{t-1}\right) \hat{\sigma}^2(z)}
  = \frac{(1 - \alpha_t)(1 - \bar{\alpha}_{t-1})}{1 - \bar{\alpha}_t} \hat{\sigma}^2(z).
\end{align*}

\subsection{Derivation of $\mathcal{J}_3$ under noise-prediction modeling} \label{apdx:J3_epsilon}
Using \eqref{eq:xt_x0_ep0_f} with $t = 1$, $\mathcal{J}_3$ can be rewritten as follows.
\begin{align*}
  \mathcal{J}_3 & = \mathbb{E}_{\hat{q}(z \mid x_0) q(x_{1} \mid x_0, z)}\left[\log p_\theta(x_0 \mid x_1, z) \right]                                                                                                                                                                                                                                                                             \\
                & = \mathbb{E}_{\hat{q}(z \mid x_0) q(x_{1} \mid x_0, z)}\left[ \log \mathcal{N}\left(x_0 \ \middle| \ \mu_\theta(x_1, 1, z), \sigma_1^2(z) I\right) \right]                                                                                                                                                                                                                      \\
                & = \mathbb{E}_{\hat{q}(z \mid x_0) q(x_{1} \mid x_0, z)}\left[ - \frac{1}{2 \sigma_1^2(z)} \left\|x_0 - \mu_\theta(x_1, 1, z)\right\|^2 \right] + C_3                                                                                                                                                                                                                            \\
                & = \mathbb{E}_{\hat{q}(z \mid x_0) q(x_{1} \mid x_0, z)}\left[ - \frac{1}{2 \sigma_1^2(z)} \left\|x_0 - \left(\frac{1}{\sqrt{\alpha_1}} x_1 - \frac{1 - \sqrt{\alpha_1}}{\sqrt{\alpha_1}} \hat{\mu}(z) - \frac{\left(1 - \alpha_1\right) \sqrt{\hat{\sigma}^2(z)}}{\sqrt{\left(1 - \bar{\alpha}_1 \right) \alpha_1}} \epsilon_\theta (x_1, 1, z) \right)\right\|^2 \right] + C_3 \\
                & = \mathbb{E}_{\hat{q}(z \mid x_0) \mathcal{N}(\epsilon_0 \mid 0, I)}\biggl[ - \frac{1}{2 \sigma_1^2(z)} \biggl\|x_0 - \biggl(\frac{1}{\sqrt{\alpha_1}} \left(\sqrt{\alpha}_1 x_0 + \left(1 - \sqrt{\alpha_1}\right) \hat{\mu}(z) + \sqrt{\left(1 - \alpha_1\right) \hat{\sigma}^2(z)} \epsilon_0\right) \notag                                                                  \\
                & \hspace{50mm} - \frac{1 - \sqrt{\alpha_1}}{\sqrt{\alpha_1}} \hat{\mu}(z) - \frac{\sqrt{\left(1 - \alpha_1\right) \hat{\sigma}^2(z)}}{\sqrt{\alpha_1}} \epsilon_\theta (x_1, 1, z) \biggr)\biggr\|^2 \biggr] + C_3                                                                                                                                                               \\
                & = \mathbb{E}_{\hat{q}(z \mid x_0) \mathcal{N}(\epsilon_0 \mid 0, I)}\left[ - \frac{1}{2 \sigma_1^2(z)} \left\|\frac{\sqrt{\left(1 - \alpha_1\right) \hat{\sigma}^2(z)} }{\sqrt{\alpha_1}}\left(\epsilon_0 - \epsilon_\theta (x_1, 1, z)\right) \right\|^2 \right] + C_3                                                                                                         \\
                & = \mathbb{E}_{\hat{q}(z \mid x_0) \mathcal{N}(\epsilon_0 \mid 0, I)}\left[- \frac{\left(1 - \alpha_1\right) \hat{\sigma}^2(z)}{2 \tilde{\beta}_1(z) \alpha_1} \left\|\epsilon_0 - \epsilon_\theta (x_1, 1, z) \right\|^2 \right] + C_3.
\end{align*}

\section{Additional experimental details and results on the Datasaurus-Grid datasets}

\subsection{VQVAE training details} \label{apdx:vqvae}
In experiments with the Datasaurus-Grid datasets, we set the VQVAE codebook size to $16$ and the embedding dimension to $2$. Codebook updates followed the EMA-based method introduced in VQ-VAE-2 \citep{razavi2019vqvae2}, and we additionally utilized the empirical usage frequencies of the learned clusters during sampling.
To stabilize training with variance estimation, the decoder’s variance parameter (i.e., $\hat{\sigma}$ in \eqref{eq:prior_x0_z}) was treated as a scalar independent of the latent variable $z$.
We impose a lower bound of $0.1$ on the estimated variance by using the softplus function \citep{dugas2000incorporating}.
The model was trained for $15000$ steps, with the variance fixed at $1.0$ for the first $10000$ steps and learned during the remaining $5000$ steps.

\subsection{Additional qualitative results of 2D generated samples} \label{apdx:2d_sample}
Fig.~\ref{fig:datasaurus-grid-diffusevae} presents the samples generated by DiffuseVAE, and
Fig.~\ref{fig:datasaurus-grid-rf} shows those produced by Rectified Flow trained on the Datasaurus-Grid datasets.

\begin{figure}[tbp]
  \centering
  \begin{minipage}[t]{0.39\linewidth}
    \centering
    \includegraphics[width=30mm,clip]{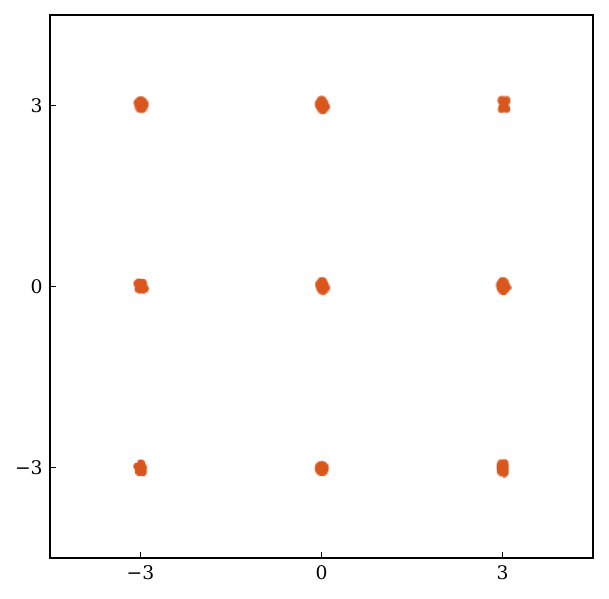}
    \includegraphics[width=30mm,clip]{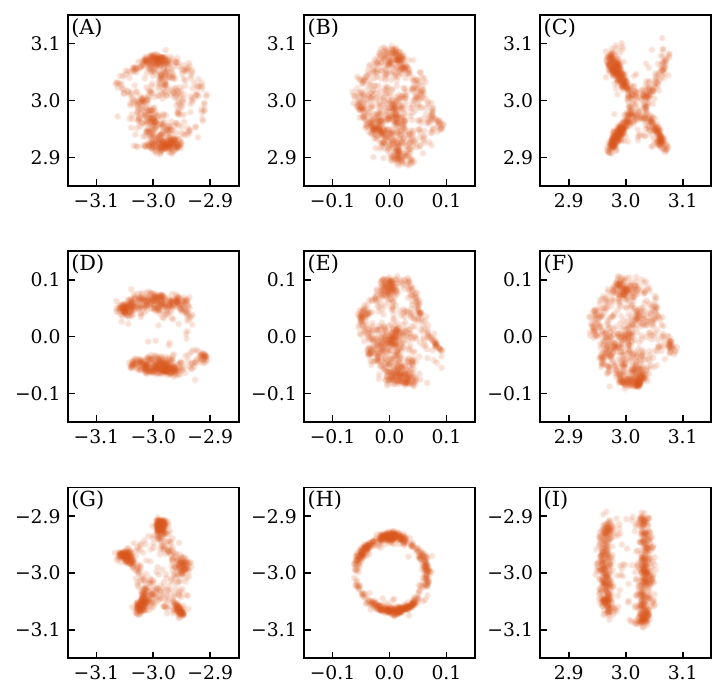}
    \subcaption{scale=0.1 (overview / zoomed)}
    \label{fig:datasaurus-grid-0.1-diffusevae}
  \end{minipage} \hfill
  \begin{minipage}[t]{0.2\linewidth}
    \centering
    \includegraphics[width=30mm,clip]{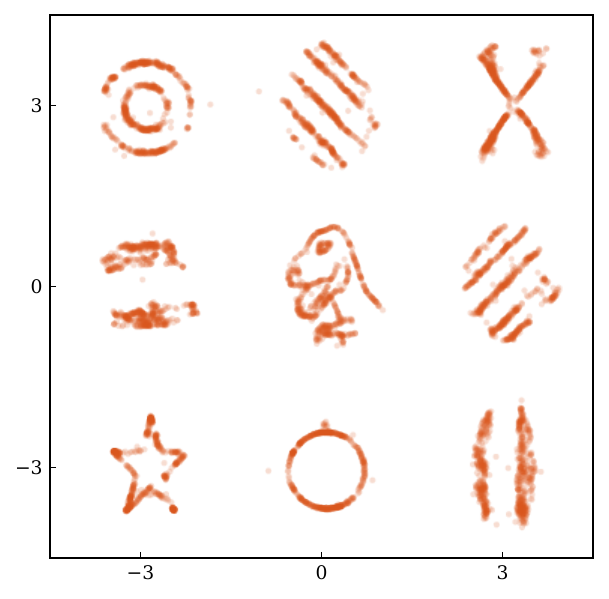}
    \subcaption{scale=1.0}
    \label{fig:datasaurus-grid-1.0-diffusevae}
  \end{minipage} \hfill
  \begin{minipage}[t]{0.39\linewidth}
    \centering
    \includegraphics[width=30mm,clip]{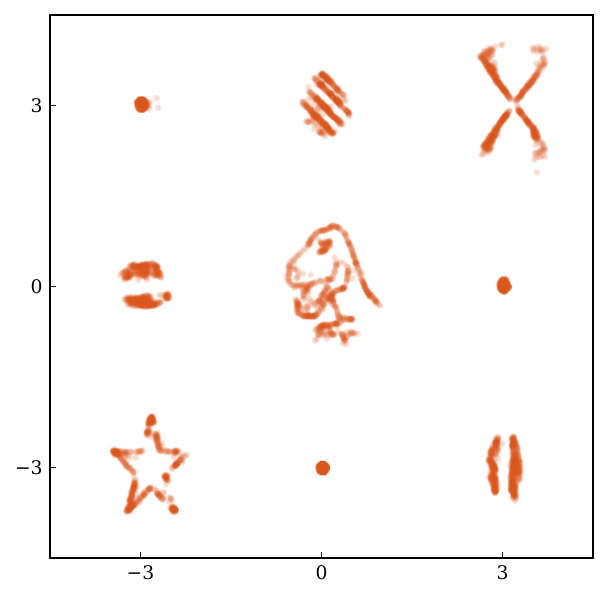}
    \includegraphics[width=30mm,clip]{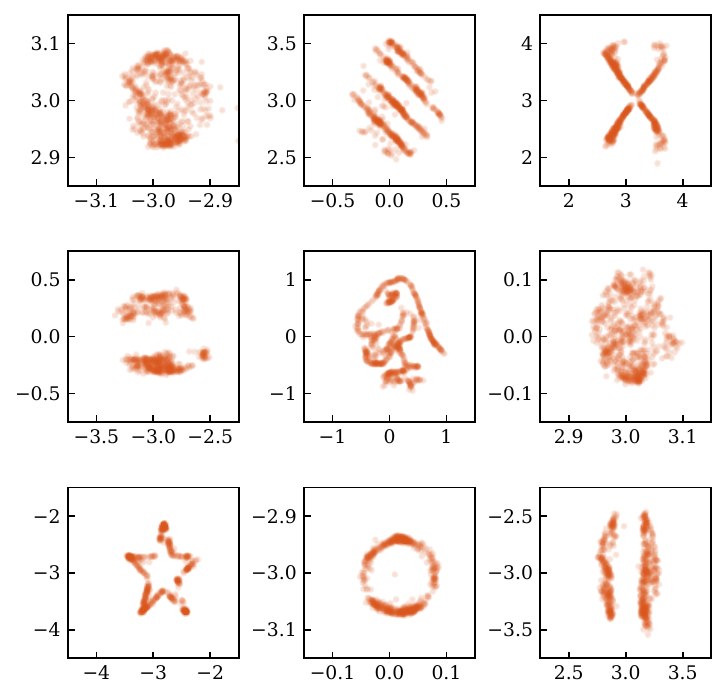}
    \subcaption{hetero-scale (overview / zoomed)}
    \label{fig:datasaurus-grid-heteroscale-diffusevae}
  \end{minipage}
  \caption{Samples generated by DiffuseVAE on the Datasaurus-Grid datasets: (a) scale=0.1, (b) scale=1.0, and (c) hetero-scale.}
  \label{fig:datasaurus-grid-diffusevae}
\end{figure}

\begin{figure}[tbp]
  \centering
  \begin{minipage}[t]{0.39\linewidth}
    \centering
    \includegraphics[width=30mm,clip]{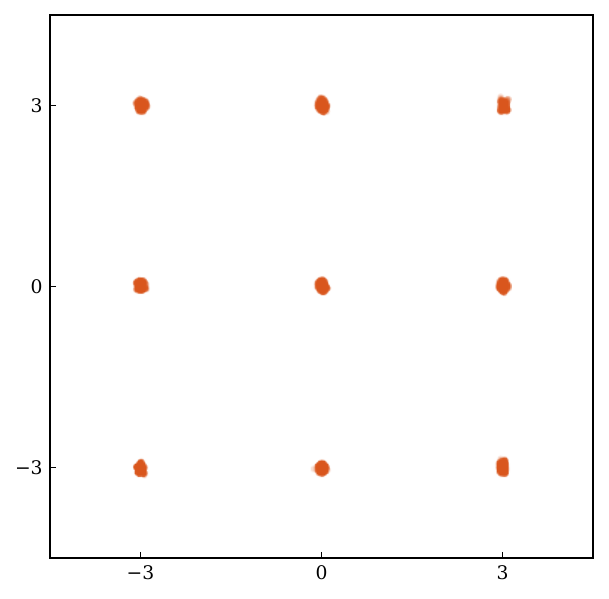}
    \includegraphics[width=30mm,clip]{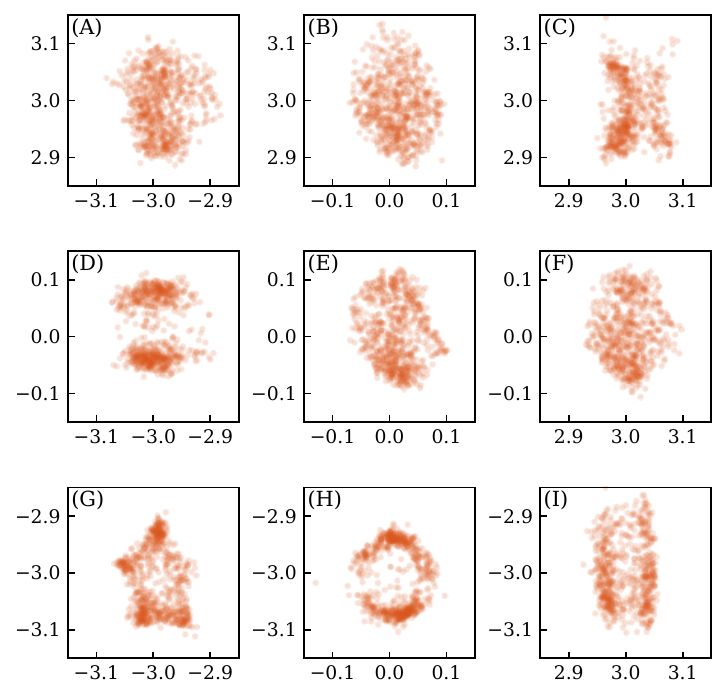}
    \subcaption{scale=0.1 (overview / zoomed)}
    \label{fig:datasaurus-grid-0.1-rf}
  \end{minipage} \hfill
  \begin{minipage}[t]{0.2\linewidth}
    \centering
    \includegraphics[width=30mm,clip]{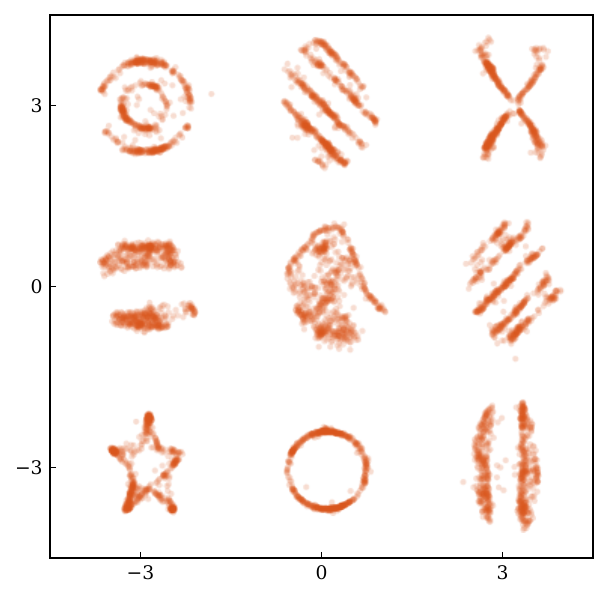}
    \subcaption{scale=1.0}
    \label{fig:datasaurus-grid-1.0-rf}
  \end{minipage} \hfill
  \begin{minipage}[t]{0.39\linewidth}
    \centering
    \includegraphics[width=30mm,clip]{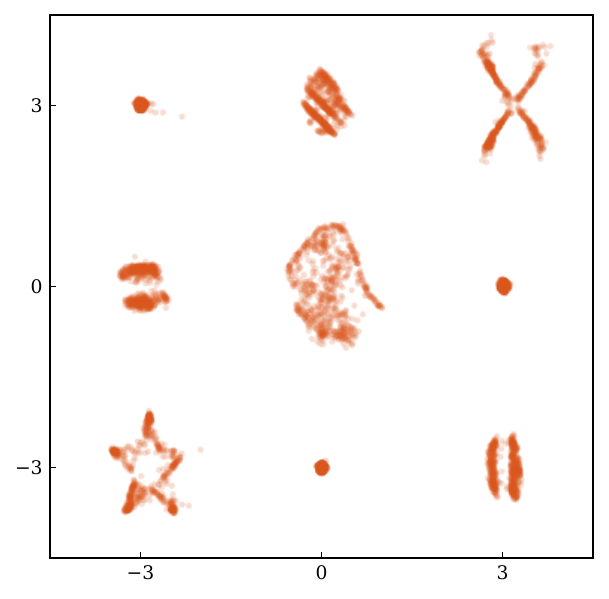}
    \includegraphics[width=30mm,clip]{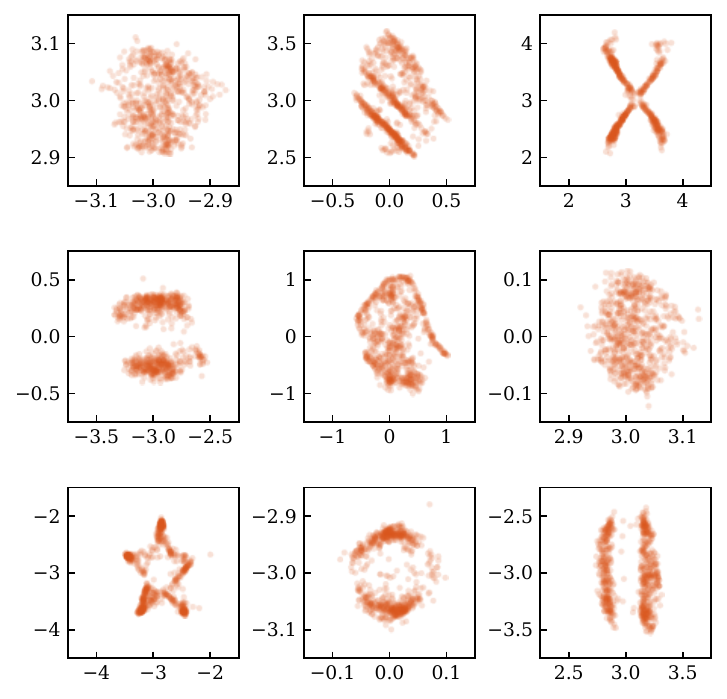}
    \subcaption{hetero-scale (overview / zoomed)}
    \label{fig:datasaurus-grid-heteroscale-rf}
  \end{minipage}
  \caption{Samples generated by Rectified Flow on the Datasaurus-Grid datasets: (a) scale=0.1, (b) scale=1.0, and (c) hetero-scale.}
  \label{fig:datasaurus-grid-rf}
\end{figure}

\subsection{Comparison with DDPM under different hyperparameter configurations} \label{apdx:ddpm_various_hparams}
Diffusion models involve a number of hyperparameters. To examine how such design choices affect performance,
we compared RPD with several DDPM variants obtained by modifying key hyperparameters.
Specifically, we prepared three DDPM models with the following configurations:
\begin{itemize}
  \item A model with the maximum number of diffusion steps set to $T=1000$.
        The same value was used during inference.
        In general, increasing $T$ is known to improve generation quality at the cost of higher computational overhead.
  \item A model in which the log-linear schedule for $\beta_t$ was modified by reducing \texttt{sigma\_min} to $0.001$
        (while keeping \texttt{sigma\_max} at $100$).
        Increasing the resolution in the region of small $\beta_t$ is expected to produce a more precise diffusion process near $x_0$,
        which in turn may improve the reconstruction of fine-scale structure.
  \item A model with increased network capacity, implemented by expanding the hidden-layer widths of the DNN to
        $128, 256, 512, 256,$ and $128$.
        Enhancing the expressive power of the DNN is generally expected to improve generative performance.
\end{itemize}

The comparison between these DDPM variants and RPD is presented in Fig.~\ref{fig:datasaurus-grid-plot-ddpm-config}.
The RPD configuration follows the one described in Section~\ref{sec:compared_models}.
All experiments were conducted using the Datasaurus-Grid (scale=0.1) dataset.

As shown in the figure, modifying the DDPM hyperparameters results in minor variations in 1WD and RW-1WD,
but none of the configurations yield substantial improvements.
RPD consistently maintains superior performance.
These results suggest that, for datasets such as Datasaurus-Grid (scale=0.1), where local structures occur at scales
much smaller than the global distribution, adjusting DDPM hyperparameters alone is insufficient to accurately capture
fine-grained local structure.

\begin{figure}[htb]
  \begin{center}
    \includegraphics[width=0.6\linewidth,clip]{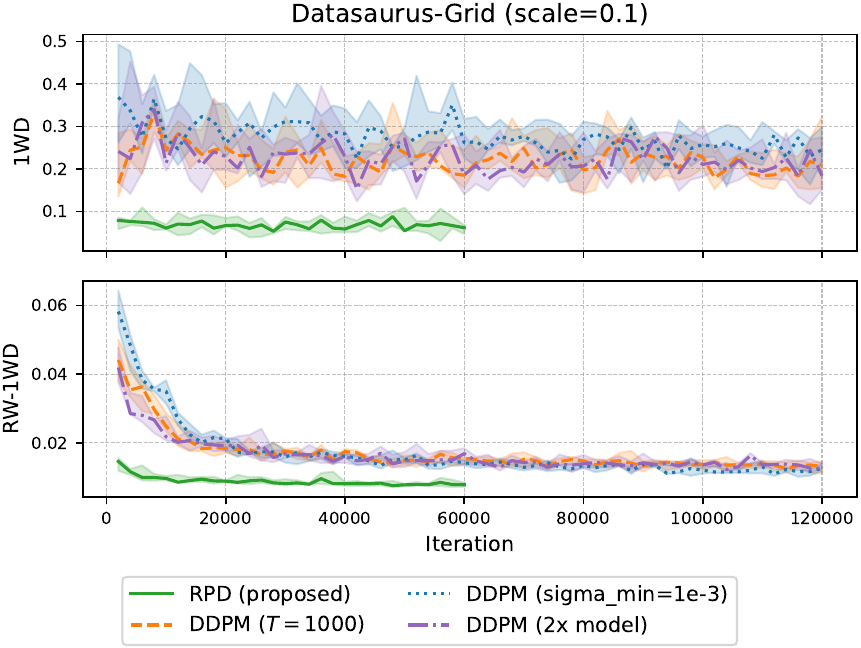}
    \caption{RPD vs.\ DDPM with different hyperparameter settings.}
    \label{fig:datasaurus-grid-plot-ddpm-config}
  \end{center}
\end{figure}

\bibliographystyle{plainnat}
\bibliography{references}

\end{document}